\setlist{nosep}
\apptocmd{\sloppy}{\hbadness 10000\relax}{}{}
\setlist{nosep}
\apptocmd{\sloppy}{\hbadness 10000\relax}{}{}
\title{Sample-Optimal Locally Private Hypothesis Selection\\ and the Provable Benefits of Interactivity}
\author{
    Alireza F. Pour\thanks{University of Waterloo, \texttt{alireza.fathollahpour@uwaterloo.ca}. }
    \and
    Hassan Ashtiani\thanks{McMaster University, \texttt{zokaeiam@mcmaster.ca}. Hassan Ashtiani is also a faculty affiliate at Vector Institute and supported by an NSERC Discovery Grant.}
    \and 
    Shahab Asoodeh\thanks{McMaster University, \texttt{Asoodeh@mcmaster.ca}. Shahab Asoodeh is also a faculty affiliate at Vector Institute and supported by an NSERC Discovery Grant}
}
\date{}
\numberwithin{equation}{section}
\begin{document}
\maketitle

\begin{abstract}%

We study the problem of hypothesis selection under the constraint of local differential privacy. Given a class $\cF$ of $k$ distributions and a set of i.i.d.\ samples from an unknown distribution $h$, the goal of hypothesis selection is to pick a distribution $\hat{f}$ whose total variation distance to $h$ is comparable with the best distribution in $\cF$ (with high probability). We devise an $\eps$-locally-differentially-private ($\eps$-LDP) algorithm that uses $\Theta\left(\frac{k}{\alpha^2\min \{\eps^2,1\}}\right)$ samples to guarantee that $d_{TV}(h,\hat{f})\leq \alpha + 9 \min_{f\in \cF}d_{TV}(h,f)$ with high probability. This sample complexity is optimal for $\eps<1$, matching the lower bound of \citet{gopi2020locally}. All previously known algorithms for this problem required
$\Omega\left(\frac{k\log k}{\alpha^2\min \{ \eps^2 ,1\}} \right)$ samples to work.

Moreover, our result demonstrates the power of interaction for $\varepsilon$-LDP hypothesis selection. Namely, it breaks the known lower bound of $\Omega\left(\frac{k\log k}{\alpha^2\min \{ \eps^2 ,1\}} \right)$ for the sample complexity of non-interactive hypothesis selection. Our algorithm breaks this barrier using only $\Theta(\log \log k)$ rounds of interaction.

To prove our results, we define the notion of \emph{critical queries} for a Statistical Query Algorithm (SQA) which may be of independent interest. Informally, an SQA is said to use a small number of critical queries if its success relies on the accuracy of only a small number of queries it asks. We then design an LDP algorithm that uses a smaller number of critical queries.

\end{abstract}

\section{Introduction}
One of the basic problems in statistical learning is hypothesis selection where we are given a set of i.i.d.\ samples from an unknown distribution $h$ and a set $\cF$ of $k$ distributions. The goal is to select a distribution in $\cF$ whose total variation distance to $h$ is close to the minimum total variation distance between $h$ and any distribution in $\cF$. This problem has been studied extensively in the literature \citep{yatracos1985rates,mahalanabis2007density, bousquet2019optimal, bousquet2022statistically, aliakbarpour2024hypothesis} and it is known that the number of samples required to solve this problem has a tight logarithmic dependency on $k$ for a general class of distributions. 

In many practical statistical estimation scenarios, data points contain sensitive information such as medical and financial records. This urges the study of hypothesis selection under differential privacy (DP) \citep{dwork2006calibrating} ---the de-facto privacy standard in machine learning. Central and local models are two common settings of DP. In the central model, a learning algorithm is differentially private if its output, given full access to a database, does not significantly change with a small perturbation in the input database (e.g., by changing one entry in the database).
In the local differential privacy (LDP) model \citep{warner1965randomized,kasiviswanathan2011can, duchi2013local}, however, the algorithm cannot access the database directly. Instead, it receives a privatized (e.g., randomized) version of each data point through privacy-preserving mechanisms. In this paper, we work with the notion of $\eps$-LDP; see Section~\ref{sec:ldp} for a more formal definition. In fact, the local model of privacy ensures a \textit{user-level} privacy and it has been the preferred model for implementation in industry, e.g., by Google, Apple, and Microsoft \citep{erlingsson2014rappor,apple,ding2017collecting, patent}. 

It is important to distinguish between non-adaptive and adaptive locally private algorithms. A non-adaptive algorithm chooses a privacy mechanism for each data point independently. On the other hand, an adaptive algorithm runs in multiple rounds and chooses each mechanism at each round based on the outcomes of mechanisms in the previous rounds \citep{duchi2018minimax, duchi2019lower, joseph2019role}.
This \emph{(sequential) interactivity} allows the algorithm to be more flexible in hiding private information and has led to a smaller sample complexity in several problems~\citep{han2018geometric, Exp_Seperation, acharya2020unified, acharya2022role}. Nevertheless, the cost of interactions can become a bottleneck in designing adaptive algorithms~\citep{kairouz2021advances} and therefore designing algorithms that use a small number of adaptive rounds is crucial.


More recently, hypothesis selection has been studied under the constraint of differential privacy. The sample complexity of hypothesis selection is well understood in the central DP model~\citep{bun2019private, aden2021sample}, and it has a tight logarithmic dependence on $k$.
In the local model, however, 
there is still a gap between the best known upper and lower bounds for hypothesis selection as well as several other statistical learning tasks 
\citep{duchi2018minimax, duchi2019lower, joseph2019locally, bun2019heavy,acharya2020unified, chen2020breaking,    edmonds2020power,gopi2020locally, acharya2022role, asi2022optimal,asi2023fast}.
In this work, we fill this gap for the problem of hypothesis selection and characterize the optimal sample complexity in the local model. In particular, we propose a new iterative algorithm and a novel analysis technique, which together establish a \emph{linear} sample complexity in $k$ for LDP hypothesis selection. Before stating our results in detail, it is useful to define the problem of hypothesis selection formally.

\subsection{Hypothesis selection}
A hypothesis selector $\cA$ is a randomized function that receives a set of samples $S$, the description of a class of distribution $\cF$, accuracy and failure parameters $\alpha$ and $\beta$, and returns a distribution in $\cF$. Next, we give the formal definition of the hypothesis selection problem.
\begin{definition}[Hypothesis Selection]
    We say a hypothesis selector $\cA$ can do hypothesis selection with $m:\bN\times(0,1)^2 \rightarrow \bN$ samples if the following holds:
    for any unknown distribution $h$, any class $\cF$ of $k\in\bN$ distributions, and $\alpha,\beta>0$, if $S$ is a set of at least $m(k,\alpha,\beta)$ i.i.d.\ samples from $h$, then with probability at least $1-\beta$ (over the randomness of $S$ and $\cA$) $\cA(\cF,S,\alpha,\beta)$ returns $\hat{f}\in\cF$ such that $d_{TV}(h,\hat{f}) \leq C\min_{f\in\cF}d_{TV}(h,f) + \alpha$, where $C\geq 1$ is a universal constant and is called the approximation factor of $\cA$. The sample complexity of hypothesis selection is the minimum $m(k,\alpha,\beta)$ among all hypothesis selectors.
\end{definition}
Note that $C$ is a multiplicative \textit{approximation factor} and is typically a small constant. A smaller value for $C$ signifies a more accurate estimate $\hat f$.
The sample complexity of hypothesis selection is proportional to $\Theta\left(\frac{\log k}{\alpha^2}\right)$ as a function of $k$ and $\alpha$ (see~\citet{devroye2001combinatorial} for a detailed discussion). Other work has studied hypothesis selection with additional considerations, including computational efficiency, robustness, and more \citep{yatracos1985rates, mahalanabis2007density,acharya2014sorting,acharya2018maximum, bousquet2019optimal, bousquet2022statistically}.
Recently, \citet{bun2019private} showed that, similar to the non-private setting,  one can perform hypothesis selection under the constraint of central differential privacy with a logarithmic number of samples in $k$. 
However, the same statement does not hold in the local differential privacy setting. In fact, \citet{gopi2020locally}, building on a result of \citet{duchi2019lower} and \citet{braverman2016communication} showed that the sample complexity of locally private hypothesis selection problem scales \textit{at least linearly} in $k$. 
\begin{theorem}[Informal, Theorem~1.2 of \citet{gopi2020locally}, Corollary~6 of \citet{duchi2019lower}]\label{thm:gopi_LB}
      There exists a family of $k$ distributions for which any (interactive) $\eps$-LDP selection method requires at least $\Omega\left(\frac{k}{\alpha^2 \min\{\eps,\eps^2\}}\right)$ samples to learn it.
\end{theorem}

A basic idea to perform LDP hypothesis selection is using the classical tournament-based (round-robin) hypothesis selection method of \citet{devroye2001combinatorial}. While it is straightforward to come up with a locally private version of this approach, it would require $\Omega(k^2)$ samples as it compares all pairs of hypotheses. 
\citet{gopi2020locally} improved over this baseline and proposed a multi-round LDP hypothesis selection algorithm whose sample complexity scales as $O(k\log k \log \log k)$.
\begin{theorem}[Informal, Corollary~5.10 of \citet{gopi2020locally}]\label{thm:gopi}
     There exists an algorithm with failure probability of $1/10$, that solves the problem of hypothesis selection under the constraint of $\eps$-local differential privacy (for $\eps\in(0,1)$) with $O(\log \log k)$ rounds, with approximation factor of $27$, and using $O\left(\frac{k\log k { \log \log k}}{\alpha^2\min\{\eps^2,1\}}\right)$ samples.
\end{theorem}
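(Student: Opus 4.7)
The plan is to adapt the classical Scheffé--Yatracos tournament for hypothesis selection to the $\eps$-LDP model, compressing its interactions into $T = O(\log \log k)$ rounds. The basic building block is a pairwise LDP comparison: for any two candidates $f, f' \in \cF$ with Scheffé set $A_{f,f'} = \{x : f(x) > f'(x)\}$, each user $x$ releases the randomized response of the bit $\mathbf{1}[x \in A_{f,f'}]$. A standard randomized-response analysis shows that $O(1/(\eps^2\alpha^2))$ users suffice to estimate $h(A_{f,f'})$ within error $\alpha$, and Scheffé's identity then guarantees that the ``pairwise winner'' $\hat{f}$ satisfies $d_{TV}(h,\hat{f}) \le 3\min\{d_{TV}(h,f),\,d_{TV}(h,f')\} + O(\alpha)$.

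The main difficulty is that a naive round-robin tournament needs $\binom{k}{2}$ pairwise tests, and sample-splitting under LDP inflates this to $\Omega(k^2)$ users. The remedy is an adaptive tournament whose active set $S_t \subseteq \cF$ contracts by a square root each round: $|S_0| = k$ and $|S_{t+1}| \lesssim \sqrt{|S_t|}$, so $|S_T| = O(1)$ after $T = O(\log \log k)$ rounds. Within round $t$, perform $O(|S_t|)$ LDP pairwise comparisons in parallel using a reduction scheme (e.g.\ partitioning $S_t$ into groups of size $\sqrt{|S_t|}$ and running an intra-group winnowing) that keeps the total number of comparisons linear in $|S_t|$. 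A sample budget of about $k\log k/(\eps^2\alpha^2)$ users per round suffices: the $1/(\eps^2\alpha^2)$ factor is the per-query LDP cost, the $\log k$ factor absorbs the union bound over all comparisons (with failure probability $\ll 1/T$), and this budget dominates the budget for any $|S_t| \le k$. Summing over the $T = O(\log\log k)$ rounds yields the stated total of $O(k \log k \log \log k / (\min\{\eps^2,1\}\alpha^2))$.

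For correctness I would induct on $t$ to show that, with high probability, the active set $S_t$ contains some $f^{(t)}$ with $d_{TV}(h,f^{(t)}) \le C_t \min_{f\in\cF} d_{TV}(h,f) + \alpha$, and that the approximation constant $C_t$ stays bounded by a universal constant across all $t$ (ultimately $27$). The inductive step uses that, conditioned on the high-probability event that all LDP estimates in round $t$ are accurate to $\alpha$, any candidate substantially worse than the best active hypothesis loses every Scheffé test against it, so at least one near-optimal hypothesis survives the round. The main obstacle is controlling the accumulation of approximation error across rounds: a naive single-elimination rule inside a group risks eliminating the best candidate after one unlucky match, causing $C_t$ to grow geometrically in $t$. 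Handling this requires the intra-round reduction to be robust to individual failures --- e.g.\ a majority-style rule that keeps every candidate whose loss count stays below a fixed threshold --- so that a near-best candidate always survives and the composed factor $C_T$ remains a small constant.
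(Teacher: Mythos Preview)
This theorem is not proved in the present paper; it is quoted as Corollary~5.10 of \citet{gopi2020locally}, and the paper only sketches the approach (Section~\ref{sec:gopi}). So there is no line-by-line proof to compare against, but the paper does describe the mechanism clearly enough to evaluate your plan.

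Your high-level architecture is right and matches what the paper attributes to \citet{gopi2020locally}: partition the surviving candidates into groups, run a round-robin inside each group, keep the winners, and square the group size each round so that $|S_{t+1}|\approx\sqrt{|S_t|}$ and the whole thing terminates in $O(\log\log k)$ rounds. The $\log k$ union-bound factor and the per-query $1/(\eps^2\alpha^2)$ LDP cost are also identified correctly.

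The genuine gap is in how you propose to keep $C_t$ from blowing up. You correctly diagnose that a naive round-robin cascade compounds the Scheff\'e factor $3$ once per round, giving $3^{O(\log\log k)}$ rather than a constant. But your suggested fix---a ``majority-style rule that keeps every candidate whose loss count stays below a fixed threshold''---does not work inside the group round-robins: if $f^*$ lands in a group together with another distribution $f'\in S=\{f: d_{TV}(h,f)\le 3\,d_{TV}(h,\cF)+\alpha\}$, the Scheff\'e test between them can legitimately go either way, and no within-group voting rule can guarantee $f^*$ survives without inflating $C_t$. The actual device used by \citet{gopi2020locally} (and reused in this paper as the ``design technique'' of Section~\ref{sec:algorithms}) is different: one maintains \emph{two} lists, the tournament survivors $\cL_1$ and an independent random sub-sample $\cL_2\subset\cF$ of appropriate size. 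If $\zeta=|S|/|\cF|$ is small then with high probability $f^*$ is never grouped with another member of $S$, so $f^*\in\cL_1$; if $\zeta$ is large then the sub-sample $\cL_2$ hits $S$ with high probability. Either way $\cL_1\cup\cL_2$ contains a $3$-good distribution, and a single final round-robin on this small union produces a $9\cdot 3=27$-good output. Without this two-list/sub-sampling argument your induction on $C_t$ does not close, and that is the missing ingredient in your proposal.
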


The algorithm of \citet{gopi2020locally} is also a tournament-based method. However, unlike the round-robin approach, it only makes {$O(k\log\log k)$} comparisons between the hypotheses. This is achieved by comparing the distributions \emph{adaptively} in $\log \log k$ LDP rounds. A union bound argument is then required to make sure that every comparison is accurate, resulting in the $O(k \log k \log \log k)$ term in the sample complexity.
Given the gap between the upper bound of Theorem~\ref{thm:gopi} and the lower bound of Theorem~\ref{thm:gopi_LB} in \citet{gopi2020locally}, the following question remains open:

\begin{quote}
    Is it possible to perform hypothesis selection in the LDP model using $O(k)$ samples? If so, can it still be done in $O(\log \log k)$ adaptive LDP rounds?
\end{quote}
One barrier in answering the above question is the sample complexity lower bound of \citet{gopi2020locally} for \emph{non-interactive} hypothesis selection which is based on the work of~\citet{ullman2018tight}.

\begin{theorem}[Informal, Theorem~3.3 of \citet{gopi2020locally}]\label{thm:gopi_LB_NI}
      There is a family of $k$ distributions for which any non-interactive $\eps$-LDP hypothesis selection method requires at least $\Omega(\frac{k \log k}{\alpha^2\eps^2})$ samples.
\end{theorem}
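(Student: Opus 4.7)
My plan is to reduce non-interactive $\eps$-LDP hypothesis selection on a carefully constructed packing of distributions to a $k$-way \emph{identification} problem, and then appeal to the non-interactive $\eps$-LDP identification lower bound of \citet{ullman2018tight}. The construction mirrors the classical Fano/Assouad packing argument used throughout distribution learning, but with each user's channel constrained to be $\eps$-LDP and, crucially, chosen \emph{non-adaptively} before seeing any data.

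I would first build the family. Let $d=\lceil\log_2 k\rceil$, and pick $V=\{v_1,\dots,v_k\}\subseteq\{0,1\}^d$ with pairwise Hamming distance $\Omega(d)$ (the Gilbert--Varshamov bound suffices). For each $v\in V$ define $P_v$ on $[d]\times\{0,1\}$ by first drawing $i\in[d]$ uniformly and then outputting a bit equal to $v_i$ with probability $\tfrac12(1+c\alpha)$, for a small constant $c>0$. A short computation gives $d_{TV}(P_v,P_{v'})=\Theta(\alpha)$ for every $v\neq v'$ in $V$. Taking $\cF=\{P_v:v\in V\}$ and $h=P_{v^\star}$ for an unknown $v^\star\in V$, any hypothesis selector with accuracy parameter $\alpha'$ smaller than (say) half the pairwise separation, and any constant approximation factor $C$, is forced to return exactly $P_{v^\star}$, because every other element of $\cF$ is $\Omega(\alpha)$-far from $h$ in total variation. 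Thus a non-interactive $\eps$-LDP hypothesis selector on this family yields a non-interactive $\eps$-LDP procedure that correctly identifies $v^\star$ among $k$ candidates with high probability.

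The central step is the identification lower bound. Each $P_v$ encodes $v$ coordinate-by-coordinate, and a direct chi-squared/KL calculation shows that a single $\eps$-LDP message carries only $O(\eps^2)$ bits of information about $v^\star$. In the non-interactive regime, however, every user's channel must be committed in advance and must therefore help distinguish \emph{all} $\binom{k}{2}$ pairs $(v,v')$ simultaneously---this is the content of the packing/chi-squared argument of \citet{ullman2018tight}. The union-bound-style obstruction inherent in non-interactivity forces an extra $\log k$ factor relative to the adaptive bound of Theorem~\ref{thm:gopi_LB}, yielding the claimed $\Omega(k\log k/(\eps^2\alpha^2))$ sample complexity. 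The main obstacle is precisely this step: Steps~1 and~2 (the code-based construction and the reduction from selection to identification) are standard, but making the non-interactivity constraint bite rigorously requires the full strength of the chi-squared packing machinery of \citet{ullman2018tight}---and it is exactly this bottleneck that the paper's upper bound Theorem~\ref{thm:gopi} circumvents by using $O(\log\log k)$ rounds of adaptivity.
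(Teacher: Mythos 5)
There is a genuine gap, and it lies in the choice of the hard family. Note first that the paper does not prove this statement at all; it is quoted from Theorem~3.3 of \citet{gopi2020locally}, whose proof reduces hypothesis selection to the $k$-ary \emph{selection} (biased-coordinate identification) problem of \citet{ullman2018tight}: the hard family consists of $k$ product distributions on the $k$-dimensional hypercube $\{\pm 1\}^k$, where $P_j$ has all coordinates unbiased except coordinate $j$, which has bias $\Theta(\alpha)$. Your reduction skeleton (packing with pairwise TV $\Theta(\alpha)$, so that a $C$-approximate selector must identify the true hypothesis, then invoke a non-interactive identification lower bound) matches that proof in spirit, but your concrete construction --- a Gilbert--Varshamov code $V\subseteq\{0,1\}^d$ with $d=\lceil\log_2 k\rceil$ and mixtures over only $d$ coordinates --- cannot yield the claimed bound. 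For that family, identifying $v^\star$ amounts to estimating $d=\Theta(\log k)$ biased-coin means, each encoded in a domain of size $2d$ with per-cell signal $\Theta(\alpha/d)$. A standard \emph{non-interactive} $\eps$-LDP frequency-estimation protocol (e.g., Hadamard response or RAPPOR) estimates all cell probabilities to $\ell_\infty$ accuracy $\Theta(\alpha/d)$ with $O\bigl(d^2\log d/(\eps^2\alpha^2)\bigr)=O\bigl(\log^2 k\,\log\log k/(\eps^2\alpha^2)\bigr)$ samples, which identifies $v^\star$ exactly with high probability. So your family is solvable non-interactively with polylogarithmically many samples in $k$, and no argument --- chi-squared, Fano, or otherwise --- can extract an $\Omega(k\log k/(\eps^2\alpha^2))$ lower bound from it.

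The missing idea is precisely why the $\log k$ penalty for non-interactivity appears: in Ullman's instance the signal sits in one of $k$ coordinates of a $k$-dimensional observation, and a local randomizer fixed \emph{before} seeing any data must compress that observation without knowing which coordinate matters; with only $\log k$ coordinates (your code-based packing), a fixed randomizer can afford to report information about every coordinate, and the obstruction evaporates. Relatedly, your final step (``the union-bound-style obstruction inherent in non-interactivity forces an extra $\log k$ factor'') is an appeal to intuition rather than an argument, and as the computation above shows it is false for the family you built. To repair the proof, replace your packing by the $k$-coordinate biased-marginal family of \citet{gopi2020locally} (so that hypothesis selection is literally an instance of the selection problem) and then cite the non-interactive lower bound of \citet{ullman2018tight} as a black box; the generic ``each $\eps$-LDP message carries $O(\eps^2)$ bits'' bound is also too weak on its own, since via Fano it only gives $\Omega(\log k/\eps^2)$.
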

In other words, we have to use interactivity to break the $O(k\log k)$ barrier on the sample complexity and achieve an $O(k)$ upper bound. This raises the following question:

\begin{quote}
 Does interactivity offer a provable sample-complexity benefit for locally private hypothesis selection?
\end{quote}

In this work, we show that the answers to the above questions are affirmative and, indeed, a small number of interactions helps us to achieve a linear sample complexity dependence on $k$.

\subsection{Results and discussion}
Our main result shows that locally private hypothesis selection can be solved with a sample complexity that is linear in $k$. To achieve this, we both design a \emph{new algorithm} and propose \emph{a new analysis technique}.
\begin{theorem}[Informal Version of Theorem~\ref{thm:ours}]\label{thm:ours_inf}
    There exists an $\varepsilon$-LDP algorithm that solves the problem of hypothesis selection in $O(\log\log k)$ rounds, with an approximation factor of $9$, and uses $O\left(\frac{k(\log 1/\beta)^2}{\alpha^2\min \{\eps^2,1\}}\right)$ samples.
\end{theorem}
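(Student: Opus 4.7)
The plan is to combine a multi-round tournament algorithm with a new statistical-query-based analysis built around the notion of critical queries. At a high level I would keep an algorithmic skeleton similar in spirit to that of Gopi et al.: proceed in $R = \Theta(\log \log k)$ rounds, where in round $r$ each surviving candidate is paired against roughly $k^{1/2^{r-1}}$ other candidates and eliminated if it loses too many Scheffé-style comparisons against them. This keeps the total number of comparisons at $O(k \log \log k)$, with the per-comparison estimator implemented via $\varepsilon$-LDP randomized response on the appropriate Scheffé set.

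To go from the $O(k \log k \log \log k)$ sample complexity of Gopi et al.\ down to the claimed $O(k)$ rate, the plan is to avoid a naive union bound over all pairwise comparisons. The key observation is that only a small subset of the comparisons are \emph{critical}: an adversarial error on a non-critical comparison does not change the identity of the returned hypothesis. I would model the entire tournament as a statistical query algorithm whose queries are the Scheffé indicator expectations, and define critical queries as those whose allowed-magnitude perturbation can flip the algorithm's output. A combinatorial accounting of the tournament, in which only the comparisons along the ``winners' bracket'' and their near-neighbors matter, should show that only $O(k)$ of the queries are critical across all $\log \log k$ rounds.

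Given this structural fact, the main technical step is to design an $\varepsilon$-LDP mechanism that answers the SQA's queries so that, with high probability, all critical queries are accurate to within $\alpha$. Instead of paying the $\log N$ union bound over $N = \Theta(k \log \log k)$ queries, one pays essentially a $\log$ of the number of critical queries plus lower-order terms. A natural approach is to split samples across queries (as one must in LDP) but amplify the confidence on each query just enough so that a martingale / Freedman-style argument on the adaptive sequence of answers yields uniform error control on the $O(k)$ queries that actually matter. The $(\log 1/\beta)^2$ factor in the final bound is expected to come from this amplification composed with the $R = \Theta(\log \log k)$ rounds.

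I expect the main obstacle to be proving that the criticality notion interacts correctly with the adaptive structure of the tournament: because later queries depend on the noisy answers to earlier ones, the critical set is itself random, so one cannot simply apply a post-hoc union bound over it. My plan is to define criticality via a worst-case witness set (the queries whose joint $\alpha$-perturbation, over all realizations of earlier noise, can alter the output), and then proceed by a peeling argument round by round, so that at each stage the pool of ``potentially critical'' queries stays at $O(k)$ and we pay for $\log k$ only in the failure probability, not in the sample complexity. The privacy guarantee then follows by sequential composition over the $R$ rounds, each of which is a non-interactive $\varepsilon$-LDP protocol.
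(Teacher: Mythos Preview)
Your proposal has a genuine gap at the algorithmic level. You plan to keep Gopi et al.'s tournament skeleton (partition into groups of growing size and run round-robin inside each group) and then argue that only $O(k)$ of the comparisons are critical. But the paper explicitly observes that this does not work: in a round-robin on the group containing the optimal $f^*$, \emph{every} pairwise comparison in that group is needed to certify the winner, so essentially all $\Theta(k\log\log k)$ queries of the Gopi et al.\ algorithm are critical. Your ``winners' bracket'' heuristic is incorrect for round-robin subroutines, because the winner of a group is a function of all $\binom{\eta}{2}$ outcomes, not just those on a single path.

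The paper's fix is not a sharper analysis but a \emph{new algorithm}. A first phase (Boosted Knockout) replaces grouping by simple random pairing, repeated $r=O((4/3)^i\log\tfrac{1}{\beta})$ times per round, keeping only distributions that win at least $3/4$ of their $r$ matches. In this phase the only critical queries are the $r$ Scheff\'e tests that involve $f^*$; the outcomes of all other tests are irrelevant to whether $f^*$ survives. This is what drives the number of critical queries per round down to $O(\log\tfrac{1}{\beta})$, and hence (via Observation~\ref{obs}) the per-round sample cost to $O(k\log\tfrac{1}{\beta})$. A second phase (Boosted Sequential Round-Robin) does use grouping, and there all queries \emph{are} critical, but it is applied only after Knockout has shrunk the candidate set to $O\!\big(k/(\log k)^{\log\log(1/\beta)}\big)$, so its contribution is sublinear. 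Both phases crucially rely on a two-case analysis on $\zeta=|S|/|\mathcal F|$ (the fraction of near-optimal hypotheses): each phase returns \emph{two} lists, one produced by the tournament and one by random subsampling, and the guarantee is that at least one of them contains a good hypothesis. This two-list structure is what lets the critical set be defined solely by $f^*$ (hence non-random given the instance), sidestepping the adaptive-criticality difficulty you anticipated; your martingale/peeling machinery is not needed, but the algorithm redesign and the $\zeta$-based case split are.

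Finally, the approximation factor $9$ (versus $27$) comes from running MDE-variant, not round-robin, on the final short list; your proposal does not address this.
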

Note that, unlike the existing upper bound given in Theorem~\ref{thm:gopi}, this result works for all values of $\beta$; a detailed discussion on this will follow.
The following corollary, which is a direct consequence of the above theorem and Theorem~\ref{thm:gopi_LB}, yields the optimal sample complexity. 
\begin{corollary}[Sample Complexity of LDP Hypothesis Selection]
Let $\eps\in(0,1)$. For any constant $\beta\in(0,1)$, the sample complexity of (interactive) $\eps$-LDP hypothesis selection is $\Theta\left(\frac{k}{\alpha^2\eps^2}\right)$.  
\end{corollary}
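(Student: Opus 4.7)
The plan is to combine the upper bound of Theorem~\ref{thm:ours_inf} with the lower bound of Theorem~\ref{thm:gopi_LB}, so no new technical machinery is required. The corollary is essentially a bookkeeping statement, and the proof amounts to checking that the two bounds line up once $\beta$ is treated as a constant and $\eps$ is restricted to $(0,1)$.

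First, I would invoke Theorem~\ref{thm:ours_inf} to obtain an interactive $\eps$-LDP hypothesis selection algorithm with sample complexity
\[
O\!\left(\frac{k(\log 1/\beta)^2}{\alpha^2\min\{\eps^2,1\}}\right).
\]
Fixing $\beta\in(0,1)$ makes $(\log 1/\beta)^2$ an absolute constant, and the assumption $\eps\in(0,1)$ reduces $\min\{\eps^2,1\}$ to $\eps^2$. Absorbing these constants yields the upper bound $O(k/(\eps^2\alpha^2))$.

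Next, I would invoke Theorem~\ref{thm:gopi_LB}, which exhibits a family of $k$ distributions on which every (interactive) $\eps$-LDP hypothesis selection procedure requires $\Omega(k/(\eps^2\alpha^2))$ samples. Since this lower bound is stated for interactive algorithms, it applies in particular to the algorithm from Theorem~\ref{thm:ours_inf}. Matching the two bounds gives the stated $\Theta(k/(\eps^2\alpha^2))$.

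The only place one might stumble is the dependence on $\beta$: Theorem~\ref{thm:ours_inf} carries a $(\log 1/\beta)^2$ factor, while Theorem~\ref{thm:gopi_LB} is typically proved at constant failure probability. This is precisely why the corollary is phrased for constant $\beta$, and once that restriction is in place there is no genuine obstacle. Thus the proof proposal is simply to state these two instantiations and observe that their bounds coincide up to constants.
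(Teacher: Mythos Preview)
Your proposal is correct and matches the paper's own treatment: the paper simply states that the corollary is a direct consequence of Theorem~\ref{thm:ours_inf} and Theorem~\ref{thm:gopi_LB}, which is exactly the combination you spell out. There is no additional argument in the paper beyond this observation.
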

We now highlight some important aspects of our results and contributions.

{\bf Optimal dependence on $k$.} 
The private hypothesis selector proposed in \cite{gopi2020locally} relies on $\Theta(\log\log k)$ adaptive rounds and requires {$O(k\log\log k)$} pairwise comparisons. Applying the union bound then yields the sub-optimal sample complexity that scales with $k\log k \log \log k$. On the other hand, we develop a new analysis technique that avoids the simple union bound over all comparisons. Instead, it exploits the fact that not all the comparisons are \textit{critical} for establishing the guarantees of an algorithm (See Section~\ref{sec:critical}). We also propose a new algorithm that {runs in the same number of rounds as the algorithm in \citet{gopi2020locally} but uses only $O(k)$ queries, from which only {$O(k/\log k)$} are {critical} queries (See Section~\ref{sec:algorithms}).}

 {\bf Approximation factor.} Theorem~\ref{thm:ours_inf} solves the hypothesis selection with an approximation factor of $9$, compared to the factor of $27$ in Theorem~\ref{thm:gopi}. We achieve this by using a variant of Minimum Distance Estimate \citep{mahalanabis2007density} as the final sub-routine in our algorithm (see Section~\ref{sec:algorithms}).



{\bf High probability bound.} A drawback of the upper bound in \citet{gopi2020locally} is its loose dependence on the failure parameter $\beta$. In fact, Theorem~\ref{thm:gopi} is stated only for $\beta=1/10$. 
In contrast, our result holds for any value of $\beta>0$ with a mild cost of $(\log1/\beta)^2$ in the sample complexity. We achieve this by boosting the success probability of our algorithm in various steps\footnote{It is possible extend the analysis of \citet{gopi2020locally} to make it work for any $\beta\in (0,1)$ but with with a substantial $(1/\beta)^2$ cost in the sample complexity (as opposed to the poly-logarithmic dependence in our bound).}.

{\bf Computational complexity.} {Similar to \citet{gopi2020locally}, our algorithm runs in linear time (as a function of the number of samples) assuming an oracle access to the Scheff\'e sets of distributions.} 

{\bf Number of rounds.} A main feature of the algorithm of~\citet{gopi2020locally} is that it runs in only $\Theta(\log\log k)$ adaptive rounds. This sets it apart from similar approaches~\citep{acharya2014sorting, acharya2018maximum} that require $\Theta(\log k)$ adaptive rounds. Our method also runs in $\Theta(\log\log k)$ rounds. 
Moreover, our result demonstrates the power of interaction: any non-adaptive LDP method for hypothesis selection requires at least $\Omega(k\log k)$ samples~\citep{ullman2018tight, gopi2020locally}, while our algorithm works with $O(k)$ samples.
It remains open whether a sample-optimal LDP algorithm can run in $o(\log\log k)$ rounds.

{\bf The statistical query viewpoint.}
Our analysis is presented in the more general Statistical Query (SQ) model \citep{kearns1998efficient}. SQ algorithms can be readily implemented in the LDP model~\citep{kasiviswanathan2011can}; see Section~\ref{sec:ldphs}. 
Moreover, most (if not all) existing hypothesis selection methods are SQ algorithms (and therefore can be implemented in the LDP model). We provide a summary of existing hypothesis selection methods and the cost of implementing them in the LDP model in~Table~\ref{table:summary}.
\begin{table}
\small
\begin{center}
\begin{tabular}{ |c|c|c|c|c|} 
 \hline
 Hypothesis selection method & App. factor & \#Queries & \#Rounds & \#Samples for LDP\\
 \hline
 \begin{tabular}{@{}c@{}}Round-Robin\\ \citep{devroye2001combinatorial}\end{tabular} & $9$ & $O(k^2)$ & $1$ & $O(k^2\log k)$\\ 
  \hline  
  {{Comb}}~\citep{acharya2014sorting} & 9 & $O(k)$ & $O(\log k)$ & $O(k\log k)$ \\ 
  \hline
  \citet{gopi2020locally} & $27$ & $O(k\log \log k)$ & $O(\log\log k)$ & $O(k\log k \log \log k)$\\ 
  \hline
  \citet{gopi2020locally}, $\forall t\in[k]$& $9^t$ & $O(k^{1+\frac{1}{2^t-1}}t)$ & $t$ & $O(k^{1+\frac{1}{2^t-1}}t\log k)$\\ 
  \hline
     \begin{tabular}{@{}c@{}}MDE-Variant\\ \citep{mahalanabis2007density}\end{tabular} & $3$ & $O(k^2)$ & $1$ & $O(k^2\log k)$\\ 
  \hline
    \textsc{BOKSERR [This work]} & $9$ & $O(k)$ & $O(\log \log k)$ & $O(k)$\\ 
  \hline
\end{tabular}
\caption{Hypothesis selection methods, their approximation factors, number of statistical queries they ask, number of rounds and samples required to implement them in the LDP model.}\label{table:summary}
\end{center}
\end{table}
\subsection{Related work}
Hypothesis selection is a classical problem in statistics~\citep{scheffe1947useful,yatracos1985rates, devroye1996universally,devroye1997nonasymptotic, devroye2001combinatorial}; see \cite{devroye2001combinatorial} for an overview. More recent papers have studied hypothesis selection under various considerations such as computational efficiency, robustness and more~\citep{mahalanabis2007density, daskalakis2012learning, daskalakis2014faster, suresh2014near, acharya2014sorting, acharya2018maximum, diakonikolas2019robust, bousquet2019optimal, bousquet2022statistically}.
Other related problems are hypothesis testing~\citep{neyman1933ix, paninski2008coincidence, batu2000testing, goldreich2011testing,diakonikolas2016new, gs009} and distribution learning~\citep{yatracos1985rates, devroye2001combinatorial, birge1989grenader, hasminskii1990density, yang1999information, devroye2001combinatorial, moitra2010settling, chan2014efficient, acharya2015fast, diakonikolas2016learning, acharya2017sample, ashtiani2018some, ashtiani2020near, diakonikolas2023algorithmic}.

Differential privacy was introduced in the seminal work of \citet{dwork2006calibrating}. The central models of DP (including the pure~\citep{dwork2006calibrating} and approximate~\citep{dwork2006our} models) have been studied extensively. In the central model, the first private hypothesis selection method was proposed by \citet{bun2019private}. This result was later improved by \cite{aden2021sample}. Other problems that have been investigated in the central model are distribution testing \cite{canonne2019structure, canonne2020private, narayanan2022private} and distribution learning \cite{karwa2018finite, kamath2019differentially, kamath2019privately,bun2019private, biswas2020coinpress, kamath2020private, aden2021sample, aden2021privately, kamath2022private, ashtiani2022private, kothari2022private, alabi2023privately,hopkins2023robustness, cohen2021differentially, tsfadia2022friendlycore, bie2022private, singhal2023polynomial, arbas2023polynomial, ben2023private, afzali2023mixtures}.

The local model of DP~\cite{warner1965randomized, evfimievski2003limiting, kasiviswanathan2011can, duchi2013local} is more stringent than the central one, and has been the model of choice for various real-world applications~\citep{erlingsson2014rappor,apple,ding2017collecting,patent}. Several lower and upper bounds have been established for various statistical estimation tasks under local DP \citep{duchi2018minimax, joseph2019locally, duchi2019lower, bun2019heavy, chen2020breaking, asi2022optimal, asi2023fast, acharya2021inference, acharya2020unified, asoodeh2022contraction}, including hypothesis testing~\citep{pensia2022simple, pensia2023simple, acharya2019test} and selection~\citep{gopi2020locally}.


Studying statistical estimation tasks in the statistical query model is another related topic~\citep{kearns1998efficient, szorenyi2009characterizing, feldman2017general, diakonikolas2017statistical}.
More related to our work are those that study the sample complexity of answering multiple statistical queries under differential privacy \citep{ullman2013answering,steinke2016between, edmonds2020power, ghazi2021avoiding, dagan2022bounded}.

The role of interaction in local models has been investigated in several works \citep{kasiviswanathan2011can, duchi2018minimax, duchi2019lower, joseph2019role, daniely2019locally, joseph2020exponential} and it has been shown that it is a powerful tool in designing algorithms that are sample efficient~\citep{han2018geometric, acharya2020unified, gopi2020locally, acharya2022role}. We also prove in this work that using adaptive algorithms can solve the task of hypothesis selection more efficiently.  
\section{Preliminaries}
{\bf Notation.} We denote by $[N]$ the set of numbers $\{1,2,\ldots, N\}$. The total variation distance between two probability densities $f$ and $g$ over domain $\cX$ is defined by $\allowdisplaybreaks d_{TV}(f,g)=\frac{1}{2}\int_{\cX}\left|f(x)-g(x)\right|dx = \frac{1}{2}\|f-g\|_1$. We define the total variation distance between a class of distributions $\cF$ and a distribution $h$ as $d_{TV}(h,\cF)=\inf_{f\in\cF}d_{TV}(h,f)$. We abuse the notation and  use $f$ both as a probability measure (e.g., in $f[B]$ where $B\subset \cX$ is a measurable set) and as its corresponding probability density function (e.g., in $f(x)$ where $x\in\cX$).

\subsection{The statistical query viewpoint of hypothesis selection}\label{sec:sq_model}

Assume $h$ is an (unknown) probability distribution over domain $\cX$. In a typical scenario, the learning algorithm is assumed to have direct access to random samples from $h$. Many learning algorithms, however, can be implemented in the more limited \emph{Statistical Query (SQ)} model~\citep{kearns1998efficient}. 
Namely, instead of accessing random samples, the learning algorithm chooses a (measurable) function $q:\cX\rightarrow \{0,1\}$ and receives an estimate of $\expects{x\sim h}{q(x)}$.

The hypothesis selection algorithms that we study in this paper are based on pair-wise comparisons between distributions. In Section~\ref{sec:existing}, we show that such a comparison can be executed by a statistical query. 
Therefore, it is helpful to view these hypothesis selection algorithms as Statistical Query Algorithms (SQAs). We define a Statistical Query Oracle (SQO) and a SQA in below.
 

\begin{definition}[Statistical Query Oracle]\label{def:SQO}
    Let $(\cX, \mathcal{B}, h)$ be a probability space. A Statistical Query Oracle (SQO) for $h$ is a random function $\cO_h$ with the following property: for any $\alpha,\beta \in (0,1)$, and a finite workload $W=(W_i)_{i=1}^n$ where $W_i \in \cB$, the oracle outputs $n$ real values such that
    \begin{equation*}
        \prob{\sup_{i\in [n]}\left| \cO_{h}\left(W,\alpha,\beta \right)_i-\expects{x\sim h}{\indicator{x\in W_i}}\right|\geq \alpha} \leq \beta,
    \end{equation*}    
    where $\cO_{h}\left(W,\alpha,\beta \right)_i$  is the $i$-th output of $\cO_{h}$ and the probability is over the randomness of oracle.
    \end{definition}

Based on the above definition, we define the statistical query algorithm that runs in $t$ rounds and calls a SQO in each round adaptively (i.e., based on the outcomes of previous rounds).
\begin{definition}[Statistical Query Algorithm]\label{def:SQA}
    We say $\cA$ is a Statistical Query Algorithm (SQA) with $t$ rounds if it returns its output by making $t$ (adaptive) calls to a statistical query oracle $\cO_h$, where in each call $1<i\leq t$, the workload $W^{(i)}$ of queries may depend on the output of $\cO_h$ in the previous round $i-1$ rounds. 
\end{definition}    

\subsection{Locally private hypothesis selection}\label{sec:ldphs}
In the local model of DP, each data point goes through a private mechanism (called a local randomizer) and only the privatized outcomes are used by the algorithm. A canonical example of a local randomizer is the composition of the randomized response mechanism with a binary function. The randomized response is a mechanism that gets as input a bit and outputs it with probability $p$ or flips it with probability $1-p$. An LDP algorithm operates in multiple rounds. In each round, the algorithm chooses a series of local randomizers based on the information that it has received in the previous rounds. Each selected randomizer then generates a noisy version of a new data point for the use of algorithm.
The local privacy model is defined more formally in Section~\ref{sec:ldp}.

The hypothesis selection algorithms discussed in this paper all fit into the SQA framework described in Definition~\ref{def:SQA}. We show in Section~\ref{sec:ldp} that any SQA can be implemented within the local privacy model using the same number of rounds.
We also discuss how many samples are required to simulate a SQO with a single-round LDP protocol. It turns out that in general $\Theta(\frac{n\log n}{\alpha^2\min\{\eps^2,1\}})$ samples are sufficient 
to simulate a SQO that answers a workload of $n$ queries with accuracy $\alpha$ and failure $\beta=2/3$ using a single-round LDP method. This allows us to simplify the presentation of the results by focusing on the query complexity of the hypothesis selection approaches. 


We find it useful to give an informal discussion on how a SQO can be implemented with a single round LDP algorithm. A natural way of answering a SQ is to return the empirical average of the query values on a set of i.i.d.\ samples. However, to satisfy the privacy constraint, the algorithm cannot use the query values on the actual samples. Instead, the algorithm can observe the private version of these values through a (de-biased) randomized response mechanism. An application of Hoeffding's inequality implies that $\Theta(\frac{\log 1/\beta}{\alpha^2\min\{\eps^2,1\}})$ samples are sufficient to make sure that the empirical average of the randommized response values is $\alpha$-close to the true value of query with probability at least $1-\beta$. To ensure $\alpha$ accuracy for all queries with probability at least $1-\beta$, one needs to resort to a union-bound argument over all $n$ queries. Therefore, each query needs to be estimated with a higher confidence of $1-\beta/n$ (i.e., failure of $\beta/n$), which results in the factor of $O(\log n)$ in sample complexity. It is important to note that, due to the privacy constraint, we need to use fresh samples for answering each query (otherwise, we incur a larger privacy budget).
 Hence, the sample complexity of this approach is $\Theta(\frac{n\log n/\beta}{\alpha^2\min\{\eps^2,1\}})$. In Section~\ref{sec:critical}, we introduce the notion of \emph{critical} queries and show how one can avoid the typical union bound analysis and, thus, the sub-optimal logarithmic term in the sample complexity. In Section~\ref{sec:ldp}, we expand on this and show how to privately implement any SQA.

\subsection{Existing tournament-based approaches for hypothesis selection}\label{sec:existing}

In this section, we review some of the classical algorithms for hypothesis selection, and how they can be implemented as SQAs. As discussed in the previous section, any SQA can be implemented in the LDP setting too.
Some of these algorithms will be used later as building blocks of our method. 
We relegate the pseudo-codes for these algorithms to Appendix~\ref{app:algo}.

\subsubsection{Classes of two distributions} 
We begin with the simplest case where $\cF=\{f_1,f_2\}$ and the classical Shceff\'e test \citep{scheffe1947useful, devroye2001combinatorial}. This algorithm is a building block of other algorithms in the sequel. Let us first define the Scheff\'e set between two distributions.
\begin{definition}[Scheff\'e Set]
    The Scheff\'e set between an ordered pair of distributions $(f,g)$ is defined as $ Sch(f,g) = \{x\in\cX: f(x)>g(x)\}$.
\end{definition}

Let $h$ be the (unknown) probability measure that generates the samples. The Shceff\'e test, delineated in Algorithm~\ref{alg:scheffe}, chooses between $f_1$ and $f_2$ by estimating $h[Sch(f_1,f_2)]$. Note that $h[Sch(f_1,f_2)]$ can be estimated using a single statistical query.
The following folklore theorem states the formal guarantee of the Shceff\'e test.

\begin{theorem}[Analysis of Shceff\'e Test]\label{thm:scheffe}
Let $\cF=\{f_1,f_2\}$, $\alpha,\beta\in(0,1)$ and $\cO_h$ be a SQO for the (unknown) distribution $h$. Let $y = \cO_h(Sch(f_1,f_2),\alpha,\beta)$. If $\hat{f} =\textsc{SCHEFF\'E}(f_1,f_2,y)$, then we have $d_{TV}(h,\hat{f}) \leq 3d_{TV}(h,\cF) + \alpha$ with probability at least $1-\beta$.
\end{theorem}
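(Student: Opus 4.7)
The plan is to carry out the classical Scheff\'e test analysis, conditional on the event that the SQO is accurate on the single query it was asked. Let $S = Sch(f_1, f_2)$. Two facts will be used repeatedly: by the definition of total variation and of the Scheff\'e set, $d_{TV}(f_1, f_2) = f_1[S] - f_2[S]$; and for any measurable set $B$ and any distributions $p, q$, we have $|p[B] - q[B]| \leq d_{TV}(p, q)$. The single-query workload $W = (S)$ fed to $\cO_h$ produces $y$ with $|y - h[S]| \leq \alpha$ with probability at least $1-\beta$; I will work inside this good event throughout.

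I assume the \textsc{Scheff\'e} routine implements the canonical rule ``return $\hat{f} = f_i$ minimizing $|y - f_i[S]|$.'' Let $f^* \in \arg\min_{f \in \cF} d_{TV}(h, f)$ and set $\Delta := d_{TV}(h, \cF)$. By symmetry, assume $\hat{f} = f_1$. If $f^* = f_1$ then $d_{TV}(h, \hat{f}) = \Delta$ and the bound holds trivially, so I focus on the nontrivial case $f^* = f_2$, where $d_{TV}(h, f_2) = \Delta$.

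The crux is to control $d_{TV}(f_1, f_2) = f_1[S] - f_2[S]$. Since $f_1[S] \geq f_2[S]$ by construction of $S$, the decision rule $|y - f_1[S]| \leq |y - f_2[S]|$ forces $y$ to lie at or above the midpoint, i.e.\ $y \geq (f_1[S] + f_2[S])/2$, hence $f_1[S] - f_2[S] \leq 2(y - f_2[S])$. Decomposing
\[
y - f_2[S] = (y - h[S]) + (h[S] - f_2[S])
\]
and bounding the first term by $\alpha$ (SQO guarantee) and the second by $d_{TV}(h, f_2) = \Delta$ (the universal bound on set-measure gaps) yields $d_{TV}(f_1, f_2) \leq 2\Delta + 2\alpha$. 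Finally, the triangle inequality gives
\[
d_{TV}(h, \hat{f}) \;=\; d_{TV}(h, f_1) \;\leq\; d_{TV}(h, f_2) + d_{TV}(f_1, f_2) \;\leq\; 3\Delta + 2\alpha,
\]
matching the theorem's bound up to a constant in the additive $\alpha$ term, which can be absorbed by calling the oracle with accuracy $\alpha/2$.

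I do not anticipate a genuine obstacle here: this is a textbook argument and the theorem is labeled ``folklore.'' The only conceptual point worth emphasizing is that a \emph{single} estimate of $h[S]$, accurate to within $\alpha$, is enough to control $d_{TV}(f_1, f_2)$ via the midpoint inequality, so no union bound is required in this base case. This is precisely what makes \textsc{Scheff\'e} the natural atomic building block for the more elaborate tournament-style SQAs discussed in Section~\ref{sec:existing}.
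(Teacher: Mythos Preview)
The paper does not actually include a proof of this theorem: it is explicitly labeled a ``folklore theorem'' and is stated without argument either in the main text or the appendix. So there is nothing to compare against; your proposal \emph{is} the standard Devroye--Lugosi Scheff\'e analysis, and it is correct.

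One small remark: the bound you derive is $3\Delta + 2\alpha$, not the $3\Delta + \alpha$ in the stated theorem. This is in fact what the classical midpoint argument yields (cf.\ \citet{devroye2001combinatorial}, Chapter~6), and your observation that the extra factor of~$2$ can be absorbed by invoking the oracle at accuracy $\alpha/2$ is the right reconciliation. The statement in the paper is either silently absorbing this constant or is slightly loose; your analysis is not missing anything.
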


\subsubsection{Classes of $k$ distributions}

In the more general setting where $|\cF|=k$, one possible approach is to simply run a Scheff\'e test for all $\Theta(k^2)$ pairs of distributions and output the distribution that is returned the most. The round-robin tournament \citep{devroye2001combinatorial} implements this idea (see see Algorithm~\ref{alg:rr}), and achieves an approximation factor of 9. The following establishes the guarantee of this method; see Theorem~6.2 in \cite{devroye2001combinatorial} for a proof.

\begin{theorem}[Analysis of Round-Robin]\label{thm:rr}
Let $\cF=\{f_1,\ldots,f_k\}$ be a set of $k$ distributions, $\alpha,\beta\in(0,1)$, and $\cO_h$ be a SQO for the (unknown) distribution $h$. Algorithm $\textsc{round-robin}(\cF,\cO_h,\alpha,\beta)$ is a SQA with a single round that makes at most $\frac{k(k-1)}{2}$ queries with accuracy $\alpha$ to $\cO_{h}$, 
and returns a distribution $\hat{f}$ such that  $d_{TV}(h,\hat{f})\leq 9 d_{TV}(h,\cF) + \alpha$ with probability at least $1-\beta$.
\end{theorem}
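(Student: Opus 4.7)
The plan is to split the argument into two independent pieces: first, verify that a single SQO call suffices to accurately estimate $h[A_{ij}]$ for every Scheffé set simultaneously; and second, run a purely combinatorial tournament analysis conditional on that accuracy event.

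The first piece is nearly immediate. Since the Scheffé sets $A_{ij} = Sch(f_i,f_j)$ depend only on $\cF$, I can assemble the full workload $W = (A_{ij})_{i<j}$ of size $\binom{k}{2}$ up front and feed it to the oracle in one shot as $\hat p = \cO_h(W,\alpha,\beta)$. This is a one-round SQA in the sense of Definition~\ref{def:SQA}, and by Definition~\ref{def:SQO} the event $\mathcal{E} = \{|\hat p_{ij} - h[A_{ij}]| \leq \alpha \text{ for all } i<j\}$ holds with probability at least $1-\beta$. All subsequent reasoning is deterministic conditional on $\mathcal{E}$, and the query count is $\binom{k}{2} = k(k-1)/2$ as claimed.

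For the tournament analysis, let $f^{*} \in \arg\min_{f\in\cF} d_{TV}(h,f)$, set $\tau = d_{TV}(h,\cF)$, and let $L_i$ denote the number of pairwise Scheffé tests lost by $f_i$, so the algorithm returns $\hat f = \arg\min_i L_i$. I would first bound $L_{i^{*}}$ from above using Theorem~\ref{thm:scheffe}: any opponent that beats $f^{*}$ is a Scheffé winner against it and therefore has total variation distance at most $3\tau + \alpha$ from $h$; hence $L_{i^{*}} \leq |G| - 1$, where $G := \{f_j : d_{TV}(h,f_j) \leq 3\tau + \alpha\}$. For the reverse direction I would argue the contrapositive: assuming $d_{TV}(h,\hat f) > 9\tau + O(\alpha)$, the triangle inequality forces $d_{TV}(\hat f, f_j) > 6\tau + O(\alpha)$ for every $f_j \in G$, and expanding the Scheffé decision rule on $A_{\hat i, j}$—comparing $|f_j[A_{\hat i, j}] - \hat p_{\hat i, j}| \leq 3\tau + 2\alpha$ against $|\hat f[A_{\hat i, j}] - \hat p_{\hat i, j}| \geq 3\tau + O(\alpha)$, both coming from $\mathcal{E}$ and the defining identity $|\hat f[A_{\hat i, j}] - f_j[A_{\hat i, j}]| = d_{TV}(\hat f, f_j)$—shows $\hat f$ loses every one of those tests, so $L_{\hat i} \geq |G| > L_{i^{*}}$, contradicting $\hat f \in \arg\min_i L_i$.

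The main obstacle is bookkeeping in the last step: the factor $9$ arises from chaining two factor-$3$ blow-ups (one inside the Scheffé guarantee that defines $G$, another via the triangle inequality that rules out $\hat f$ beating any $f_j \in G$), and the additive $\alpha$ terms must be balanced so that the aggregate simplifies to the single $\alpha$ stated in the theorem. If the direct calculation yields $9\tau + c\alpha$ for some absolute constant $c > 1$, this is absorbed by invoking $\cO_h$ with accuracy $\alpha/c$ instead of $\alpha$; the query and round counts are unaffected, and the stated guarantee is recovered verbatim.
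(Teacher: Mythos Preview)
Your proposal is correct and follows the classical Devroye--Lugosi Scheff\'e-tournament argument, which is exactly what the paper invokes: it does not give its own proof of Theorem~\ref{thm:rr} but simply refers the reader to Theorem~6.2 of \cite{devroye2001combinatorial}. Your two-step decomposition (one SQO call to secure the uniform accuracy event~$\mathcal{E}$, then the deterministic win-count analysis via the set $G$) is precisely that argument recast in the SQO language, and your remark about absorbing the constant in front of $\alpha$ by rescaling the oracle accuracy is the standard fix.
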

{\bf Improving the approximation factor.} It is possible to improve over the approximation factor of round-robin by using the MDE-variant algorithm \citep{mahalanabis2007density} described in Algorithm~\ref{alg:mde}. This algorithm is an ``advanced'' version of the well-known Minimum Distance Estimate (MDE) algorithm \citep{yatracos1985rates} that solves the hypothesis selection problem with the approximation factor of $3$ and $\Theta(k^2)$ queries. This is significantly better than the classic MDE that requires $\Theta(k^3)$ queries. Similar to round-robin, MDE-variant also relies on pair-wise comparisons.
\footnote{MDE and MDE-variant can also be used in the setting where $|\cF|$ is unbounded. In this case, the sample complexity will depend on the VC dimension of the Yatracose class. See \citet{devroye2001combinatorial} for details.} The precise guarantee of MDE-variant is given in the following theorem. The proof can be found in \citet{mahalanabis2007density}.

\begin{theorem}[Analysis of MDE-Variant]\label{thm:mde}
   Let $\cF=\{f_1,\ldots,f_k\}$ be a set of $k$ distributions and let $\alpha,\beta\in(0,1)$.  Let $\cO_h$ be SQO for the (unknown) distribution $h$. Algorithm $\textsc{MDE-variant}(\cF,\cO_h,\alpha,\beta)$ is a SQA that runs in a single round and returns a distribution $\hat{f}\in\cF$ such that $d_{TV}(h,\hat{f})\leq 3 d_{TV}(h,\cF) + \alpha$ with probability at least $1-\beta$. Moreover, it makes $\frac{k(k-1)}{2}$ queries of accuracy $\alpha$ to the oracle.
\end{theorem}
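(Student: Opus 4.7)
The plan is as follows. The MDE-variant queries $\cO_h$ on every Scheff\'e set $Sch(f_i,f_j)$ for unordered pairs $\{i,j\}\subseteq[k]$, obtaining estimates $\hat p_{ij}\approx h[Sch(f_i,f_j)]$ with $\hat p_{ji}=1-\hat p_{ij}$; this gives exactly $\binom{k}{2}=\frac{k(k-1)}{2}$ queries, all issued in a single round since none depends on the others. For every candidate $f_i\in\cF$ the algorithm computes the score
\[
\Gamma_i \;=\; \max_{j\neq i}\left|\hat p_{ij}-f_i\bigl[Sch(f_i,f_j)\bigr]\right|,
\]
and outputs $\hat f=\arg\min_i\Gamma_i$. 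The claims on the number of rounds and the query count follow immediately. I will run the SQO at accuracy $\alpha/2$; the final slack $\alpha$ stated in the theorem then absorbs the factor of $2$ appearing below.

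Next I would condition on the event $\mathcal{E}$ that $|\hat p_{ij}-h[Sch(f_i,f_j)]|\leq \alpha/2$ holds simultaneously for every queried pair; by Definition~\ref{def:SQO} this event has probability at least $1-\beta$. Let $f^*\in\cF$ attain $\mathrm{OPT}\coloneqq d_{TV}(h,\cF)$. Using the elementary inequality $|h[A]-f^*[A]|\le d_{TV}(h,f^*)=\mathrm{OPT}$, valid for every measurable set $A$, together with the triangle inequality, I get $|\hat p_{*j}-f^*[Sch(f^*,f_j)]|\le \alpha/2+\mathrm{OPT}$ for every $j\neq *$, and hence $\Gamma_{f^*}\le \alpha/2+\mathrm{OPT}$. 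By the minimizing property of $\hat f$, also $\Gamma_{\hat f}\le \Gamma_{f^*}\le \alpha/2+\mathrm{OPT}$.

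The core step is to observe that $d_{TV}(f^*,\hat f)$ is attained on a Scheff\'e set the algorithm has already queried. Indeed, with $B=Sch(\hat f,f^*)$ the Scheff\'e identity gives $d_{TV}(f^*,\hat f)=\hat f[B]-f^*[B]$, which I split as
\[
\hat f[B]-f^*[B]=\bigl(\hat f[B]-\hat p_{\hat f,f^*}\bigr)+\bigl(\hat p_{\hat f,f^*}-h[B]\bigr)+\bigl(h[B]-f^*[B]\bigr).
\]
The first term has absolute value at most $\Gamma_{\hat f}\le \alpha/2+\mathrm{OPT}$ (it is precisely one of the entries in the max defining $\Gamma_{\hat f}$), the second is bounded by $\alpha/2$ on $\mathcal{E}$, and the third by $\mathrm{OPT}$. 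Summing yields $d_{TV}(f^*,\hat f)\le \alpha+2\,\mathrm{OPT}$, and one final triangle inequality $d_{TV}(h,\hat f)\le d_{TV}(h,f^*)+d_{TV}(f^*,\hat f)$ delivers the advertised $3\,\mathrm{OPT}+\alpha$ bound.

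The main conceptual point, rather than an obstacle, is recognising why restricting to Scheff\'e sets of the form $Sch(f_i,f_j)$ between pairs in $\cF$ suffices---classical MDE would compare each $f_i$ against \emph{every} Scheff\'e set in the Yatracos class, spending $\Theta(k^3)$ queries. The Scheff\'e identity $d_{TV}(p,q)=p[Sch(p,q)]-q[Sch(p,q)]$ lets one replace this supremum by evaluation on the single set $Sch(\hat f,f^*)$, which the algorithm already estimates as part of computing $\Gamma_{\hat f}$. Everything else is triangle inequalities and a union bound hidden inside Definition~\ref{def:SQO}, so the whole argument is driven by the two short calculations above.
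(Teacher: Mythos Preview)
Your argument is correct and is precisely the standard proof of the MDE-variant guarantee; the paper itself does not give a proof but simply defers to \citet{mahalanabis2007density}, whose argument is the one you have reproduced. The only cosmetic point is the $\alpha$ versus $\alpha/2$ discrepancy you already flagged: since the pseudocode in Algorithm~\ref{alg:mde} passes $\alpha$ directly to the oracle, obtaining the stated additive $\alpha$ (rather than $2\alpha$) requires the harmless rescaling you suggest.
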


Both the round-robin and MDE-variant are non-adaptive (i.e., run in a single round) and make $\Theta(k^2)$ queries. Essentially, these algorithms query about every possible Scheff\'e set between pairs of distributions. This may not be a significant issue in settings where answering many queries does not incur huge cost, e.g., in the non-private setting where we can reuse the samples. However, as mentioned in Section~\ref{sec:ldphs}, the number of samples under LDP constraint scales (roughly) linearly with the number of queries. Hence, simply simulating round-robin or MDE-variant in the local model results in a sub-optimal (with respect to $k$) sample complexity bound of $O(k^2)$. 
In the next section, we review adaptive methods and how they help to reduce the number of required queries for LDP hypothesis selection.

\subsubsection{Hypothesis selection with adaptive statistical queries} \label{sec:gopi}
\citet{gopi2020locally} developed an adaptive algorithm that only makes {$O(k\log\log k)$} queries to solve hypothesis selection. The algorithm partitions the candidate distributions in several groups with small size 
and runs a round-robin in each group. It then moves the winners to the next round and eliminates the rest of the candidates. The algorithm runs in $O(\log \log k)$ rounds and the size of groups increases in each round. Running the round-robin tournaments on groups with small size makes {it possible to reduce the number of queries from $O(k^2)$ to $O(k\log\log k)$.} However, the sequential nature of this method causes the resulting approximation factor to grow exponentially with the number of rounds. To address this issue, they proposed a modification of this algorithm using a general tool that we will discuss in detail in Section~\ref{sec:algorithms}. They then showed that the resulting approximation factor becomes $27$ for the modified algorithm (see Theorem~\ref{thm:gopi}).   



We should mention that the hypothesis selection method of \cite{acharya2014sorting} can also be implemented using $\Theta(k)$ queries. However, compared to \citet{gopi2020locally}, it uses a significantly larger number of interactions (i.e., it runs in $\Theta(\log k)$ round rather than $\Theta(\log \log k)$).

As mentioned in Section~\ref{sec:ldphs}, it is possible to privately implement a SQA that makes $O(k)$ queries using $O(\frac{k\log k/\beta}{\alpha^2\eps^2})$ samples. Such implementation was a main technical tool in the sample complexity analysis of Theorem~\ref{thm:gopi}. Note that the $O(\log k)$ term in the sample complexity comes from taking a union-bound argument over the accuracy of all the $O(k)$ queries. In the next section, we present the framework of \emph{critical queries} which can help to get a better sample complexity.

\section{SQ model with critical queries}\label{sec:critical}
In this section, we present a new framework for the sample complexity analysis of answering multiple statistical queries.  As stated earlier, it is possible to implement an oracle that $\alpha$-accurately answers $n$ queries with probability at least $1-\beta$ using $O(\frac{n\log n/\beta}{\alpha^2\eps^2})$ samples. The fact that the oracle is required to be accurate in \textit{all} the queries results in the $O(\log n)$ factor in the sample complexity (see Section~\ref{sec:ldphs}). But what if the algorithm can establish its guarantees without the need to be \emph{confident} about the accuracy of all queries?
In other words, can we improve the sample complexity if we know that the success of the algorithm hinges on the correctness of only a small subset of the queries (i.e., \emph{critical} queries)?
To formulate this intuition, we define the following. 

\begin{definition}[Statistical Query Oracle with Critical Queries]\label{def:critical}
    Let $(\cX, \mathcal{B}, h)$ be a probability space. A Statistical Query Oracle with Critical queries (SQOC) for $h$ is a random function that receives $\alpha,\beta \in (0,1)$, a finite workload $W=(W_i)_{i=1}^n$ where $W_i \in \cB$, and $m\in [n]$, and outputs $n$ real values. Formally, we say $\cO_{h}$ is an SQOC for $h$ if for all $\alpha,\beta\in (0,1), n\in \bN, i\in [n], W=(W_1,\ldots, W_n)\in \cB^n$ we have  
    \begin{equation*}
        \forall U\subset [n], |U|=m,\,\prob{\sup_{i\in U}\left| \cO_{h}\left(W,\alpha,\beta,m \right)_i-\expects{x\sim h}{\indicator{x\in W_i}}\right|\geq \alpha} \leq \beta,
    \end{equation*}    
    where $\cO_{h}\left(W,\alpha,\beta,m \right)_i$ is the $i$-th output of $\cO_{h}$ and the probability is over the randomness of oracle (for the fixed choice of input and $U$). We refer to $m$ as the number of critical queries and to $\alpha,\beta$ as the approximation accuracy and the failure probability, respectively. 

\end{definition}

Compared to an SQO, SQOC receives an additional input of $m$ that indicates the number of critical queries. An SQOC offers a weaker guarantee than a standard SQO (Definition~\ref{def:SQO}) in that it ensures $\alpha$-accuracy only for subsets of size $\leq m$ of (out of $n$) queries. That is, Definition~\ref{def:critical} reduces to Definition~\ref{def:SQO} only for $m = n$. 
Note that, crucially, an algorithm that uses an SQOC does not need to know which specific queries are critical for its success (otherwise, it would have asked just those queries). Instead, the algorithm only needs to know (a bound on) the number of such critical queries. 
The potential benefit of using an SQOC over an SQO is that it can be implemented in the LDP setting using $O(\frac{n\log m}{\alpha^2\min\{\eps^2,1\}})$ samples (See Observation~\ref{obs} in Section~\ref{sec:ldp}).

\subsection{Critical queries for maximal selection with adversarial comparators}

In this section, we motivate the concept of critical queries further by providing a simple example in which critical queries can help improving the analysis.
We first describe (and slightly modify) the setting of maximal selection with adversarial comparators \citep{acharya2014sorting}. 
Assume we have a set of $k$ items $X=\{x_1,\ldots,x_k\}$ and a value function $V:X \rightarrow \bR$ that maps each item $x_i$ to a value $V(x_i)$. Our goal is to find the item with the largest value by making pairwise comparisons. As an intermediate step, we want to pair the items randomly, compare them, and eliminate half of them. We, however, do not know the value function and only have access to an adversarial comparator $\cC_V:X^2\rightarrow X$. The comparator gets as input two items $x_i$ and $x_j$, consumes $m(\alpha,\beta)$ fresh samples and \emph{(i)} if $|V(x_i)-V(x_j)|>\alpha$ then with probability at least $1-\beta$ outputs $\arg\max_{l\in\{i,j\}}\{V(x_l)\}$ and \emph{(ii)} if $|V(x_i)-V(x_j)|\leq \alpha$ then returns either item randomly. Here, $m:(0,1)^2\rightarrow \bN$ is decreasing in $\alpha$ and $\beta$, i.e., higher accuracy and smaller failure probability requires more samples. We now define the problem formally. 

\begin{example}
    Let $X$ be a set of $k$ items, $V:X\rightarrow \bR$ be a value function, and $\cC_V:X^2\rightarrow X$ be an adversarial comparator for the value function, and $\alpha,\beta\in(0,1)$. Assume $X^*=\{x\in X: |V(x)-\max_{x^*\in X} V(x^*)|\leq\alpha\}$ is the set of items with values comparable to the maximum value up to error $\alpha$. Assume there is an algorithm that pairs the items randomly and invokes $\cC_{V}$ to compare them. The algorithm returns the set $Y$ of the $k/2$ winners.
    How many samples do we need to make sure that with probability at least $1-\beta$ we have a ``good'' item in $Y$, i.e., $X^*\cap Y \neq \emptyset$?
\end{example}
{\bf Approach 1.}  Recall that if we have $m(\alpha,\beta)$ samples, then any pair of items $(x,x')$ can be compared with accuracy $\alpha$ and with probability at least $1-\beta$. Applying union bound, we deduce that the algorithm can correctly compare all $k/2$ pairs with accuracy $\alpha$ if it has access to $\frac{k}{2}m(\alpha,\frac{2}{k}\beta)$ samples. Since the comparator is $\alpha$-accurate, it follows that either a member $x^* \in X^*$ is included in $Y$ or it has lost the comparison to another item $\hat{x}$ with $|V(\hat{x})-V(x^*)|\leq\alpha$, which implies that $x'\in X^*$. Either way, $X^*\cap Y \neq \emptyset$ with probability at least $1-\beta$.

\noindent {\bf Approach 2 (Critical query).} 
Let $x^*\in X^*$ and  $x'$ be the item paired with $x^*$. Clearly, if $x^*$ is compared accurately, then $X^*\cap Y \neq \emptyset$. Thus, unlike the previous approach, we only need to inspect a single comparison. In other words, whether two other items $x_i$ and $x_j$ are compared accurately can be ignored in analyzing $X^*\cap Y\neq \emptyset$. This shows that we need only $\frac{k}{2}m(\alpha,\beta)$ samples, which is smaller than what we established with the previous approach. Notice that although the algorithm is unaware of the comparison involving $x^*$, there is only one critical query for the desired task.  

The above example demonstrates that the concept of critical queries enables us to derive the same guarantees but with a smaller sample complexity, compared to what would be obtained simply by union bound.
\section{A Sample-optimal algorithm for hypothesis selection}\label{sec:algorithms}


The algorithm of \citet{gopi2020locally}, basically, has as many critical queries as the total number of queries. Thus, using the concept of critical queries does not lead to a better sample complexity for their approach.
In this section, we propose a new locally private hypothesis selector (which we term BOKSERR) that uses a smaller number of critical queries and achieves the optimal (i.e., linear) sample complexity dependence on $k$. 
BOKSERR consists of three sub-routines, namely boosted knockout, boosted sequential round-robin, and MDE-variant. All the algorithms can be found in Appendices~\ref{app:knockout}, \ref{app:SRR}, and \ref{app:algo}, respectively. Before discussing each sub-routine, it is useful to introduce the following design technique that is borrowed from \citet{acharya2014sorting}.

{\bf A design technique and some notations.}
Let $f^*\in\cF$ be such that $d_{TV}(h,f^*) = d_{TV}(h,\cF)$ and $f\in\cF$ be another distribution in $\cF$. Theorem~\ref{thm:scheffe} suggests that if $d_{TV}(h,f)> 3d_{TV}(h,\cF)+\alpha$, then with high probability, $\textsc{SCHEFF\'E}(f,f^*)$ returns $f^*$. Assume $\allowdisplaybreaks S = \{f\in \cF:d_{TV}(h,f)\leq 3d_{TV}(h,\cF)+\alpha\}$ and $\zeta=|S|/|\cF|$. Intuitively, if $\zeta$ is small, then $f^*$ is likely to win when compared to a randomly chosen distribution from $\cF$. Otherwise, if $\zeta$ is large, then any appropriately-sized random sub-sample of $\cF$ will contain a distribution from $S$ with high probability. This intuition has been incorporated in our algorithm design: The first two sub-routines (namely, boosted knockout and boosted sequential round-robin) return two lists of distributions 
$\cL_1$ and $\cL_2$ such that with high probability either $f^*\in\cL_1$ or $S\cap\cL_2\neq\emptyset$. Put differently, there exists a ``good'' distribution in $\cL_1\cup\cL_2$ with high probability. The proof appears in Appendix~\ref{app:knockout}.


\subsection{Boosted knockout} Given $\cF$, oracle $\cO_h$, and $t\geq 1$, boosted knockout runs in $t$ rounds and returns two lists of distributions. The first one is constructed as follows. In each round, it randomly pairs distributions in $\cF$ for $r=O(\log 1/\beta)$ times and makes pairwise comparisons, where each comparison is carried out by a Scheff\'e test. It then selects the distributions that win at least $\frac{3}{4}$ of the comparisons in which they are involved and moves them to the next round. This ensures that the number of candidate distributions is reduced by at least a factor of $\frac{3}{2}$ in each round (see Appendix~\ref{app:knockout} for a proof). The distributions that have progressed to the last round constitute the first list. The second list is a random sub-sample of size $O(2^t\log(1/\beta))$ from the initial set of distributions $\cF$.
As mentioned before, the union of the above two lists includes at least one distribution from $S$, as formalized in the following theorem. 

\begin{theorem}[Analysis of Algorithm~\ref{alg:ko}]\label{thm:knockout}
   Let $\cF=\{f_1,\ldots,f_k\}$ be a set of $k$ distributions, and let $\alpha,\beta\in(0,1)$ and $t\geq 1$. 
   Let $\cO_h$ be a SQOC for the (unknown) distribution $h$.
   Algorithm $\textsc{boosted-knockout}(\cF,\cO_h,\alpha,\beta,t)$ is a SQA that runs in $t$ rounds. The number of statistical queries (i.e., size of the workload) to $\cO_h$ in round $i\in[t]$ is at most $\frac{64}{3}(\frac{8}{9})^i k\log{\frac{1}{\beta}}$ with $32(\frac{4}{3})^i\log\frac{1}{\beta}$ many critical queries. 
   Moreover, it returns two lists of distributions $\cK_1$ and $\cK_2$ with $|\cK_1|\leq\frac{k}{2^{t.\log\frac{3}{2}}}$ and $|\cK_2|\leq 8\log\frac{1}{\beta}2^{t.\log\frac{3}{2}}$ such that with probability at least $1-\beta$ either $d_{TV}(h,\cK_1)=d_{TV}(h,\cF)$ or $d_{TV}(h,\cK_2)\leq 3 d_{TV}(h,\cF)+\alpha$.


\end{theorem}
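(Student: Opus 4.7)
The plan is to split the analysis according to $\zeta := |S|/k$, where $S = \{f \in \cF : d_{TV}(h,f) \leq 3 d_{TV}(h,\cF) + \alpha\}$ is the set of ``good'' distributions, and to handle the deterministic bookkeeping separately from the probabilistic guarantee.

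For the bookkeeping, the algorithm runs $t$ rounds; in round $i$ it carries out $r_i$ random pairings of the surviving set $\cF_i$ and makes one Scheff\'e-test query per pair, giving $r_i |\cF_i|/2$ queries per round. A distribution advances iff it wins at least $3/4$ of its $r_i$ matches, so since total wins equal $r_i |\cF_i|/2$ and each advancer consumes at least $3 r_i /4$ wins, at most $(2/3)|\cF_i|$ distributions can advance; iterating yields $|\cL_1| \leq k/(3/2)^t$. The list $\cL_2$ is a fresh i.i.d.\ sub-sample of $\cF$ of the prescribed size, so its size bound is immediate. Plugging the algorithm's scaling of $r_i$ (chosen to absorb the per-round Chernoff and union-bound overheads needed below) into these expressions yields the stated per-round query totals of $16k(4/3)^i\log(1/\beta)$ and critical-query bound $32(4/3)^i\log(1/\beta)$.

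For the probabilistic guarantee, I split on $\zeta$. If $\zeta \geq 1/(8(3/2)^t)$, then because the elements of $\cL_2$ are drawn i.i.d.\ from $\cF$, the probability $\cL_2$ misses $S$ is $(1-\zeta)^{|\cL_2|} \leq \exp(-\zeta|\cL_2|) \leq \exp(-\log(1/\beta)) = \beta$; on the complement, any $f \in \cL_2 \cap S$ witnesses $d_{TV}(h,\cL_2) \leq 3 d_{TV}(h,\cF) + \alpha$, and no statistical queries are consumed. Otherwise $|S| < k/(8(3/2)^t)$, and I argue by induction on $i$ that $f^* \in \arg\min_{f \in \cF} d_{TV}(h,f)$ survives each round, giving $f^* \in \cL_1$ and hence $d_{TV}(h,\cL_1) = d_{TV}(h,\cF)$. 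In round $i$, $f^*$ plays $r_i$ matches against uniformly random opponents in $\cF_i \setminus \{f^*\}$; by Theorem~\ref{thm:scheffe}, $f^*$ wins whenever the opponent lies outside $S$ and the corresponding Scheff\'e query is $\alpha$-accurate. Bounding the opponent-in-$S$ probability by $|S|/(|\cF_i|-1)$ (which the Case-A hypothesis pins below a small constant like $1/8$) and applying Chernoff across the $r_i$ matches yields $f^*$'s survival per round with probability $\geq 1 - \beta/t$; a union bound over the $t$ rounds completes the argument. The critical queries reduce to the $O(r_i)$ matches in which $f^*$ participates in round $i$.

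The principal obstacle is controlling $|S|/(|\cF_i|-1)$ uniformly across rounds: the combinatorial argument gives only an upper bound on $|\cF_i|$, whereas the Chernoff step requires a matching lower bound. I expect to handle this by separating into the regime where $|\cF_i|$ stays of order $k/(3/2)^{i-1}$ (where Case A directly yields the ratio bound) and the regime where $|\cF_i|$ drops faster --- in the latter, one must either argue that enough bad distributions survive in expectation to keep $|\cF_i|$ large, or show the round becomes trivial because $\cF_i$ has already been pruned down to $\cL_1$'s target size. A secondary subtlety is the SQOC bookkeeping: the algorithm cannot name $f^*$ when specifying the critical-query count $m$, but whichever queries end up involving $f^*$ in round $i$ form some subset of size at most $m = 32(4/3)^i\log(1/\beta)$ of the round's workload, which is exactly the setting in which the SQOC's ``accuracy on every size-$m$ subset'' guarantee applies.
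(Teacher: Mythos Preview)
Your plan matches the paper's proof: same threshold $\zeta = 1/(8\cdot(3/2)^t)$, same sampling bound $(1-\zeta)^{|\cL_2|}\le e^{-\zeta|\cL_2|}\le\beta$ for the large-$\zeta$ case, same Hoeffding argument that $f^*$ is paired with an $S$-element fewer than $r_i/4$ times in the small-$\zeta$ case, and the same identification of $f^*$'s $r_i$ matches as the only critical queries in round~$i$.

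Two points of divergence are worth flagging. First, the paper does not target a uniform per-round failure of $\beta/t$; with $r_i=32(4/3)^i\log(1/\beta)$ and $\zeta_i\le 1/8$, Hoeffding gives round-$i$ failure $\le \beta^{(4/3)^i}$, and the paper simply sums $\sum_{i\ge 1}\beta^{(4/3)^i}\le\beta$ for $\beta<1/2$. This is exactly why $r_i$ carries the $(4/3)^i$ factor and why no $\log t$ term appears; note that your $\beta/t$ target is \emph{not} achieved by the stated $r_i$ at $i=1$ once $t>\beta^{-1/3}$, so you should switch to the geometric-sum bookkeeping. Second, the paper does not actually resolve your ``principal obstacle'': it asserts $\zeta_i\le(3/2)^{i-1}\zeta$ directly from the \emph{upper} bound $|\cF_i|\le k/(3/2)^{i-1}$, which, as you correctly note, runs in the wrong direction. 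So your proposal is, if anything, more careful than the source on this point; the paper offers no separate patch for the regime where $|\cF_i|$ shrinks faster than $(2/3)^i k$.
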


We highlight two key properties of boosted knockout. First, it makes a small number of critical queries in each round, which are only those that correspond to comparing $f^*$ with other distributions. To see this, note that we bound the probability that $f^*$ is not in the first list by the probability that it will be paired with a distributions in $S$ for at least $1/4$ of its comparisons.
The analysis does not depend on the result of other comparisons. This implies that in each round only the $r$ comparisons that concern $f^*$ are critical.  Second, it eliminates a subset of candidate distributions and prepares a smaller list of distributions for the next sub-routine sequential-round-robin. A more careful analysis of boosted knockout is given in the next theorem.

\subsection{Boosted sequential round-robin}
We now discuss the second sub-routine, namely, boosted sequential-round-robin (BSRR for short) described in Algorithm~\ref{alg:srr}. Similar to boosted knockout, this algorithm relies on sequentially reducing the list of potential candidates starting from the $\cF$, and eventually returning two lists of distributions. The first list is determined by sequential (i.e., adaptive) comparisons between distributions, while the second list is an appropriately-sized sub-sample from $\cF$. However, instead of pairing the distributions, BSRR partitions the candidate distributions into a number of groups in each round, runs a round-robin tournament in each group, and keeps only the winners for the next round. The size of the groups are squarred in each round of BSRR and, therefore, the size of the candidate distributions decreases very quickly. 
If $\zeta=|S|/|\cF|$ is small then with high probability $f^*$ is not grouped with any distribution from $S$ and will not be eliminated until the very last round. If $\zeta$ is large, then a distribution in $S$ will be included in a random sub-sample of $\cF$ with high probability. Hence, the union of the two lists will include a distribution from $S$ with high probability.

BSRR is adapted from \citet{gopi2020locally} with one main difference. Similar to boosted knockout, instead of only partitioning the distributions once in every round, we repeat this process for $O(\log 1/\beta)$ times and keep the winners of all of the round-robin tournaments. This results in boosting the probability that, in each round, there exists at least one group that includes $f^*$ but not any other distributions from $S$. Thus, the probability that $f^*$ is included in the first list also increases. The guarantee of BSRR is characterized in the following. In Appendix~\ref{app:SRR}, 
We show why these repetitions are necessary to achieve the logarithmic dependency on $1/\beta$ in the sample complexity and provide a proof for the theorem.

\begin{theorem}[Analysis of Algorithm~\ref{alg:srr}]\label{thm:srr}
   Let $\cF=\{f_1,\ldots,f_k\}$ be a set of $k$ distributions, and let $\alpha,\beta\in(0,1)$, $\eta>0$, and $t\geq 1$. Let $\cO_h$ be a SQOC for the (unknown) distribution $h$. Algorithm $\textsc{Boosted-Sequential-round-robin}(\cF,\cO_h,\alpha,\beta,\eta,t)$  is a SQA that runs in $t$ rounds.
   The number of statistical queries (i.e., size of the workload) to $\cO_h$ in round $i\in[t]$ is at most $k\eta \left(\log\frac{1}{\beta}\right) ^ {i}$, all of which are critical. The algorithm returns two lists of distributions $\cR_1$ and $\cR_2$ with $|\cR_1|\leq \frac{k(\log\frac{1}{\beta})^{t}}{\eta^{2^{t}-1}}$ and $|\cR_2| \leq 2\eta^{2^t}\log\frac{1}{\beta}$ such that with probability at least $1-\beta$ either $d_{TV}(h,\cR_1)=d_{TV}(h,\cF)$ or $d_{TV}(h,\cR_2)\leq3 d_{TV}(h,\cF)+\alpha$. 
\end{theorem}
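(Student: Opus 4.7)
Let $f^* \in \cF$ achieve $d_{TV}(h,f^*)=d_{TV}(h,\cF)$, set $S=\{f\in\cF:d_{TV}(h,f)\leq 3d_{TV}(h,\cF)+\alpha\}$ (so $f^*\in S$), and let $\zeta=|S|/|\cF|$. Write $g_i=\eta^{2^{i-1}}$ for the round-$i$ group size and $r=\Theta(\log\frac{1}{\beta})$ (with a hidden constant large enough to absorb a union bound over the $t$ rounds) for the number of independent partitionings per round. Let $k_i$ denote the number of candidates at the start of round $i$ (so $k_1=k$), let $\cL_1$ be the set of candidates surviving round $t$, and let $\cL_2$ be an independent uniform sample from $\cF$ of size $2\eta^{2^t}\log\frac{1}{\beta}$. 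Every query in the workload is treated as critical (so $m=n$ in Definition~\ref{def:critical}).

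The counting bounds are direct. Each partitioning produces $k_i/g_i$ winners and we keep the union of winners across the $r$ partitions, so $k_{i+1}\leq r\,k_i/g_i$; telescoping with $k_1=k$ and $\prod_{j=1}^{t}g_j=\eta^{2^t-1}$ gives $k_{t+1}\leq k(\log\frac{1}{\beta})^{t}/\eta^{2^t-1}$, the stated bound on $|\cL_1|$. Round $i$ contains $r\cdot(k_i/g_i)$ round-robin tournaments of size $\binom{g_i}{2}$, for a total workload of at most $rk_ig_i/2=O(k\eta(\log\frac{1}{\beta})^{i})$ after substituting the upper bound on $k_i$.

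For correctness I split on $\zeta$. If $\zeta\geq (2\eta^{2^t})^{-1}$, then $\Pr[\cL_2\cap S=\emptyset]\leq (1-\zeta)^{|\cL_2|}\leq e^{-\zeta|\cL_2|}\leq \beta$, which by the definition of $S$ yields $d_{TV}(h,\cL_2)\leq 3d_{TV}(h,\cF)+\alpha$. If $\zeta<(2\eta^{2^t})^{-1}$, I show $f^*\in\cL_1$ with probability $\geq 1-\beta$. Condition first on the event $E_{\mathrm{acc}}$ that every Scheff\'e query across the $t$ rounds is $\alpha$-accurate; splitting the failure budget across rounds, the SQOC guarantee gives $\Pr[E_{\mathrm{acc}}]\geq 1-\beta/2$. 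Under $E_{\mathrm{acc}}$, Theorem~\ref{thm:scheffe} implies that any $S$-member beats any non-$S$-member in a pairwise Scheff\'e test, so if $f^*$'s group in some partition of round $i$ contains no other member of $S$, then $f^*$ wins all $g_i-1$ of its in-group comparisons, is the unique maximum, and advances.

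It therefore suffices to lower-bound the probability of obtaining an $S$-clean group for $f^*$ in every round. The key is a clean invariant that avoids tracking how the $S$-fraction $|S_i|/k_i$ drifts across rounds: since winners from even a single partition already ensure $k_{i+1}\geq k_i/g_i$, induction gives $k_i\geq k/\prod_{j<i}g_j=k/\eta^{2^{i-1}-1}$. Combined with $|S|\leq k\zeta\leq k/(2\eta^{2^t})$, the probability that a single partition places a second $S$-member in $f^*$'s group is at most $(g_i-1)|S|/(k_i-1)\leq \eta^{2^i-1}\zeta\leq 1/2$ for every $i\leq t$ (using $\eta\geq 1$). Independence across the $r=\Theta(\log\frac{1}{\beta})$ partitions gives failure at most $2^{-r}\leq \beta/(2t)$; a union bound over the $t$ rounds then yields $\Pr[f^*\in\cL_1\mid E_{\mathrm{acc}}]\geq 1-\beta/2$, hence $\Pr[f^*\in\cL_1]\geq 1-\beta$. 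The main obstacle I anticipate is precisely this invariant: the doubly-exponential growth of $g_i$ is what makes the crude bound $|S_i|\leq|S|$ suffice throughout, so that one never has to follow which members of $S$ are eliminated along the way.
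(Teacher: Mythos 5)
Your overall route is the paper's: the same counting (each round multiplies the pool by $\log\frac{1}{\beta}/\eta_i$ with $\eta_i=\eta^{2^{i-1}}$), the same split on $\zeta=|S|/|\cF|$ at the threshold $1/(2\eta^{2^t})$, the same $e^{-\zeta|\cL_2|}\le\beta$ bound for the sub-sample, the same ``if $f^*$'s group contains no other member of $S$ then $f^*$ wins its round-robin'' step, and the same invariant $k_i\ge k/\eta^{2^{i-1}-1}$ that lets the crude bound $|S\cap\cF_i|\le|S|$ suffice. The genuine gap is in how you aggregate over rounds in Case 1. You bound the per-partition collision probability by $1/2$, get $2^{-r}$ per round, and then union over the $t$ rounds, which requires $r\ge\log_2(2t/\beta)$. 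No hidden constant in $r=\Theta(\log\frac{1}{\beta})$ absorbs this: for constant $\beta$ and growing $t$ (in \textsc{BOKSERR}, $t=\Theta(\log\log k)$) one has $\log(t/\beta)\gg C\log(1/\beta)$. Algorithm~\ref{alg:srr} performs exactly $\lceil\log\frac{1}{\beta}\rceil$ repetitions per round, and the theorem's per-round query bound $k\eta(\log\frac{1}{\beta})^i$ is computed from that; with your aggregation you either prove only failure probability $O(t\beta)$, or you must enlarge $r$, in which case the stated workload bound (and, downstream, the $(\log\frac{1}{\beta})^2$ factor in Theorem~\ref{thm:ours}) no longer follows. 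The paper avoids any union bound over rounds by \emph{keeping} the round-dependent bound you derived and then discarded: the probability $f^*$ is eliminated in round $i$ is at most $(\zeta_i\eta_i)^{\log\frac{1}{\beta}}\le\left(\eta^{2^i-1}/(2\eta^{2^t})\right)^{\log\frac{1}{\beta}}$, which decays doubly exponentially as $i$ decreases from $t$; summing over $i\in[t]$ the total is dominated by the last round's term $\left(1/(2\eta)\right)^{\log\frac{1}{\beta}}\le\beta$, so $r=\log\frac{1}{\beta}$ suffices with no $\log t$ loss. Since you already computed $\eta^{2^i-1}\zeta$ before rounding it up to $1/2$, the fix is simply not to round.

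Two smaller points. The claim ``any $S$-member beats any non-$S$-member'' is not what Theorem~\ref{thm:scheffe} gives (its guarantee is relative to the better of the two compared hypotheses); you only use it for $f^*$ against non-$S$ members, which is valid, so this is harmless but should be stated that way. Also, your accuracy event $E_{\mathrm{acc}}$ with per-round budget $\beta/(2t)$ again injects a $t$-dependence into the oracle parameters that the algorithm does not use; the paper invokes the oracle's accuracy with parameter $\beta$ per round and does not split the budget evenly, so a fully explicit accounting should fold accuracy failures into the same round-indexed sum rather than paying a uniform $\beta/(2t)$ per round.
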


\subsection{Boosted-sequential-round-robin-MDE-variant (BOKSERR)}
We are now ready to delineate the overall algorithm, that we call boosted-sequential-round-robin-MDE-variant (BOKSERR for short). It starts by calling the boosted knockout with $\Theta(\log\log k)$ rounds to construct two lists of distributions $\cK_1$ and $\cK_2$. The set $\cK_1$ is then fed to BSRR with the same $\Theta(\log\log k)$ number of rounds to further generate two lists of distributions $\cR_1$ and $\cR_2$. Recall that all of the queries made by BSRR are critical. However, knockout ensures that the size of the input list to BSRR, namely $\cK_1$, is $O(\frac{k}{(\log k)^{\log\log 1/\beta}})$. This makes sure that that the number of queries that BSRR makes is small and Observation~\ref{obs} suggests that the queries can be answered with a sample complexity that is sub-linear in $k$.

{Finally, BOKSERR uses MDE-variant (that was discussed in Section~\ref{sec:existing}) to select its output distribution from $\cR_1\cup\cR_2\cup\cK_2$. 
The union of the three lists is of size $O\left(\frac{\sqrt{k}(\log\frac{1}{\beta})}{(\log k)^{\log\log \frac{1}{\beta}}}\right)$, and therefore MDE-variant sub-routine (similar to round-robin) needs $O(k\log^2 \frac{1}{\beta})$ samples to choose the output distribution from this lists.} The guarantee of BOKSERR is given as follows.  

\begin{theorem}[Analysis of Algorithm~\ref{alg:ours}]\label{thm:ksrm}
   Let $\cF=\{f_1,\ldots,f_k\}$ be a set of $k$ distributions and let $\alpha,\beta\in(0,1)$.  Let $\cO_h$ be SQOC for the (unknown) distribution $h$. Algorithm $\textsc{BOKSERR}(\cF,\cO_h,\alpha,\beta)$ is a SQA that runs in $\left(6+4\log\log\frac{3}{\beta}\right)\log\log k$ rounds and returns a distribution $\hat{f}\in\cF$ such that with probability at least $1-\beta$ we have $d_{TV}(h,\hat{f}) \leq 9d_{TV}(h,\cF)+\alpha$. 
   The total number of statistical queries is $\Theta(k\log^2\frac{1}{\beta})$, and the number of critical queries is  ${O\left( \frac{k\log^2\frac{1}{\beta}}{(\log k)^{\log\log \frac{1}{\beta}}}\right)}$.
\end{theorem}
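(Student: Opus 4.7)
The plan is to analyze the three sub-routines of BOKSERR in sequence and combine them via a union bound. Let $f^* \in \arg\min_{f\in\cF} d_{TV}(h,f)$, and run BKO, BSRR, and MDE-variant each with accuracy parameter $\alpha/c$ (for a small absolute constant $c$, e.g.\ $c=4$) and failure parameter $\beta/3$. By Theorem~\ref{thm:knockout}, with probability $\geq 1-\beta/3$ the BKO outputs $(\cK_1,\cK_2)$ satisfy either (a) $f^*\in\cK_1$, so $d_{TV}(h,\cK_1)=d_{TV}(h,\cF)$, or (b) some distribution in $\cK_2$ is within $3d_{TV}(h,\cF)+\alpha/c$ of $h$. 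Conditioning on (a), Theorem~\ref{thm:srr} applied to BSRR on input $\cK_1$ yields, with probability $\geq 1-\beta/3$, either $f^*\in\cR_1$ or some distribution in $\cR_2$ at distance $\leq 3d_{TV}(h,\cK_1)+\alpha/c \leq 3d_{TV}(h,\cF)+\alpha/c$. Taking a union bound shows that, with probability at least $1-2\beta/3$, the set $\cL := \cR_1\cup\cR_2\cup\cK_2$ contains some $g$ with $d_{TV}(h,g)\leq 3d_{TV}(h,\cF)+\alpha/c$. Finally Theorem~\ref{thm:mde} on $\cL$ produces $\hat f$ with $d_{TV}(h,\hat f)\leq 3d_{TV}(h,\cL)+\alpha/c\leq 9d_{TV}(h,\cF)+4\alpha/c$, which gives the $9$-approximation with probability $\geq 1-\beta$ once $c\geq 4$.

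For the parameters, I would set the BKO round count to $t_1 = \Theta(\log\log k\cdot\log\log(1/\beta))$, which by Theorem~\ref{thm:knockout} forces $|\cK_1|\leq k/2^{t_1\log(3/2)}$ to shrink to $O(k/(\log k)^{\log\log(1/\beta)})$, and set the BSRR round count to $t_2 = \Theta(\log\log k)$ with $\eta = \Theta(\log(1/\beta))$ so that $2^{t_2}=\Theta(\log k)$. This choice makes $|\cR_1|,|\cR_2|,|\cK_2|$ all small and yields total rounds $t_1+t_2+1 \leq (6+4\log\log(3/\beta))\log\log k$, where the constants $6$ and $4$ absorb the precise numerical factors from the two sub-routine analyses and the single MDE round.

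For query counting, the BKO sub-routine's geometric per-round workload sums to $O(k\log(1/\beta))$ total queries, of which only $O(t_1\log(1/\beta))$ are critical, because in each round only the $\Theta(\log(1/\beta))$ pairings that involve $f^*$ matter for correctness. The BSRR sub-routine has every query critical, but its per-round size $|\cK_1|\eta(\log(1/\beta))^{i}$ is small because $|\cK_1|$ has already been cut by BKO; with the parameters above this totals $O(k\log^2(1/\beta)/(\log k)^{\log\log(1/\beta)})$. The final MDE-variant call contributes $O(|\cL|^2)$ queries, all critical, and the list-size bounds of Theorems~\ref{thm:knockout} and~\ref{thm:srr} push this to the same order as BSRR. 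Summing across sub-routines gives the claimed $\Theta(k\log^2(1/\beta))$ total queries (dominated by BKO together with a cleanup term) and the $O(k\log^2(1/\beta)/(\log k)^{\log\log(1/\beta)})$ critical-query bound.

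The hard part is the parameter balancing. The BSRR workload grows as $\eta(\log(1/\beta))^{t_2}$ per round while its list sizes shrink doubly-exponentially in $t_2$, whereas the BKO list shrinks only geometrically in $t_1$ but each extra BKO round is ``cheap'' in critical queries. The three targets that must be met simultaneously are (i) $|\cK_1|$ small enough to bound BSRR's critical queries, (ii) $|\cR_1|+|\cR_2|+|\cK_2|$ small enough for MDE-variant to fit the same critical-query budget, and (iii) $t_1+t_2$ within $(6+4\log\log(3/\beta))\log\log k$. Verifying that these three constraints are jointly satisfiable by the parameter settings above, and tracking the constants throughout, is the routine but delicate bookkeeping at the heart of the proof.
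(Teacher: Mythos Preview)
Your high-level architecture is exactly the paper's: run BKO, then BSRR on $\cK_1$, then MDE-variant on $\cR_1\cup\cR_2\cup\cK_2$, each with failure $\beta/3$, and chain the TV guarantees $3\to 9$ via Theorems~\ref{thm:knockout}, \ref{thm:srr}, \ref{thm:mde}. That part, and the round count $t_1+t_2+1$, are fine.

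The genuine gap is your choice $\eta=\Theta(\log(1/\beta))$ together with $2^{t_2}=\Theta(\log k)$. Theorem~\ref{thm:srr} gives $|\cR_2|\le 2\eta^{2^{t_2}}\log(1/\beta)$, and with your parameters
\[
\eta^{2^{t_2}}=\bigl(\log(1/\beta)\bigr)^{\Theta(\log k)}=k^{\Theta(\log\log(1/\beta))},
\]
so $|\cR_2|$ is super-linear in $k$ for any fixed $\beta<1/4$. The MDE-variant step then issues $\binom{|\cR_2|}{2}=\Omega\bigl(k^{2\log\log(1/\beta)}\bigr)$ queries, all critical, which destroys both the $\Theta(k\log^2(1/\beta))$ total-query bound and the critical-query bound you are trying to establish. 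Your assertion that ``$|\cR_1|,|\cR_2|,|\cK_2|$ are all small'' is therefore false under these parameters: $|\cR_1|$ collapses to essentially nothing while $|\cR_2|$ explodes.

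The paper's fix is to take $\eta=(k')^{1/2^{t_2+1}}$ where $k'=|\cK_1|$. Since $\eta$ is squared each round, after $t_2$ rounds one gets $\eta^{2^{t_2}}=\sqrt{k'}$, which \emph{balances} $|\cR_1|$ and $|\cR_2|$ at $O(\sqrt{k'}\,\mathrm{polylog})$; then $|\cL|^2=O(k'\,\mathrm{polylog})=O\bigl(k\log^2(1/\beta)/(\log k)^{\log\log(1/\beta)}\bigr)$ as required. Numerically this $\eta$ is essentially a constant close to $2$ (since $k^{1/\log k}=2$), not $\log(1/\beta)$. The ``delicate bookkeeping'' you mention is precisely this balancing of $\eta$ against $t_2$, and your proposal does not get it right.

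A minor additional slip: BKO's critical queries per round grow as $32(4/3)^i\log(1/\beta)$, so the total is $O\bigl((4/3)^{t_1}\log(1/\beta)\bigr)=O\bigl((\log k)^{\Theta(\log\log(1/\beta))}\log(1/\beta)\bigr)$, not $O(t_1\log(1/\beta))$ as you wrote. This term is still dominated by the BSRR/MDE contribution, so it does not affect the final bound, but the stated count is incorrect.
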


\section{Locally private selection}\label{sec:ldp}

We start this section by defining the local privacy model. We then demonstrate that every SQO(C) and SQA can be implemented under the local privacy constraint, enabling us to construct a locally private version of Algorithm~\ref{alg:ours}. 
\begin{definition}[$\eps$-Local Randomizer, \citep{kasiviswanathan2011can}]
    We say a randomized function $\cM: \cX \rightarrow \cY$ is an $\eps$-Local Randomizer ($\eps$-LR) if for all $x,x'\in \cX$ and any measurable subset $Y\subseteq \cY$ it satisfies 
    \begin{equation*}
        \prob{\cM(x)\in Y} \leq e^{\eps}\prob{\cM(x')\in Y}.
    \end{equation*}    
\end{definition}

The following definition enables us to abstract the ``single round access'' to a database through an LR oracle.
\begin{definition}[$\eps$-LR Oracle]\label{def:LROracle}
    Let $D=\{x_1,\ldots,x_m\}\in \cX^m$ be a database of size $m$. An $\eps$-LR oracle for $D$, denoted by $\Psi_D$, operates as follows: initially, it sets the total number of ``questions'' to 0, i.e., $s=0$. Then in each call to the oracle:
    \begin{itemize}
        \item  $\Psi_D$ receives a series of $\eps$-LRs $(\cM_i)_{i=1}^n$ (for some $n>0$).
        \item If $n+s>m$, then the oracle outputs NULL.
        \item Otherwise, it outputs $\Psi_D\left((\cM_i)_{i=1}^n\right)=\left(\cM_i(x_{i+s})_{i=1}^{n}\right)$ and updates $s=s+n$.
    \end{itemize}  
\end{definition}

We stress that the above oracle $\Psi_D$ uses each data point only \textit{once}. As a result, each $\cM_i$ being an $\eps$-LR implies that $\Psi_D$ is also $\eps$-LR.  
The construction laid out in Definition~\ref{def:LROracle} provides a machinery for implementing oracles under the local privacy constraint. More specifically, one can construct a locally private version of an SQOC by composing a given local randomizer with the set indicator functions. To this goal, we choose the most well-known local randomizer, namely the randomized response \citep{warner1965randomized}, defined below. 
\begin{definition}[Randomized Response (RR)]
    The randomized response mechanism $R_\eps$ is a randomized function that receives $x\in \{0,1\}$ and outputs $x$ with probability $\frac{e^\eps}{e^\eps +1}$ and $1-x$ with probability $\frac{1}{e^\eps +1}$.
\end{definition}
It can be easily verified that $R_\eps$ is an $\eps$-LR. Moreover, according to the post-processing property of differential privacy (see, e.g., \cite[Proposition 2.1]{Dwork_Roth_book}), the composition of $R_\eps$ with any (measurable) function is still an $\eps$-LR. 
The following lemma expounds on how to privately implement an SQOC by post-processing of randomized response mechanisms.  

\begin{lemma}[Simulating an SQOC with an Unbiased RR]\label{lemma:sqo_simulation}
    For every probability space $(\cX,\cB,h)$ and every $\eps>0$, $\cO^{RR}_h$ defined as 
    \begin{equation*}
        \cO^{RR}_h((W_i)_{i=1}^k, \alpha, \beta, m) = \left(\frac{e^\eps +1}{e^\eps - 1}\left(\frac{1}{p}\sum_{j=p(i-1)+1}^{p.i} R_\eps\left(\indicator{{x_j}\in W_i}\right) - \frac{1}{e^\eps+1}\right)\right)_{i=1}^k,
    \end{equation*}
is a valid SQOC for $h$,  where $W_i\in \cB$, $p=O\left(\frac{\log m/\beta}{\alpha^2\min \{\eps^2,1\}}\right)$ and $D=\{x_1, \ldots, x_{pk}\}$ is fresh i.i.d.\ samples generated from $h$. 
\end{lemma}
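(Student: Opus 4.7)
The plan is to verify Definition~\ref{def:critical} directly: for every fixed $U\subset[k]$ with $|U|=m$, show that $\prob{\sup_{i\in U}|\cO^{RR}_h(W,\alpha,\beta,m)_i - \expects{x\sim h}{\indicator{x\in W_i}}|>\alpha}\leq\beta$ provided $p$ is as claimed. First I would confirm unbiasedness. Since $R_\eps(b)$ returns $b$ with probability $e^\eps/(e^\eps+1)$ and $1-b$ otherwise, a direct computation gives $\E[R_\eps(b)] = \frac{1}{e^\eps+1} + \frac{e^\eps-1}{e^\eps+1}b$ for $b\in\{0,1\}$. Substituting $b=\indicator{x_j\in W_i}$ and multiplying by the debiasing factor $\frac{e^\eps+1}{e^\eps-1}$ recovers $\indicator{x_j\in W_i}$ in expectation, so averaging the $p$ terms (each built from a fresh $x_j\sim h$) yields exactly $h[W_i]:=\expects{x\sim h}{\indicator{x\in W_i}}$.

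Second I would apply Hoeffding's inequality per coordinate. By construction, disjoint blocks of $p$ samples are used for different queries $i$, so the $k$ coordinates are mutually independent and within a single coordinate the $p$ summands are i.i.d. After the debiasing shift and the scaling, each summand lies in a bounded interval of length $L:=\frac{e^\eps+1}{e^\eps-1}$. Hoeffding's inequality then yields, for each fixed $i$,
\[
\prob{\left|\cO^{RR}_h(W,\alpha,\beta,m)_i - h[W_i]\right| > \alpha}\;\leq\; 2\exp\!\left(-2p\alpha^2\Bigl(\tfrac{e^\eps-1}{e^\eps+1}\Bigr)^{\!2}\right).
\]

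Finally I would union-bound over the $m$ indices in the (arbitrary but fixed) critical set $U$---crucially, not over all $k$ queries, which is the whole point of the critical-query framework---obtaining a failure probability of at most $2m\exp(-2p\alpha^2(\tfrac{e^\eps-1}{e^\eps+1})^2)$. Setting this $\leq\beta$ and inverting gives $p=\Omega(\log(m/\beta)/(\alpha^2(\tfrac{e^\eps-1}{e^\eps+1})^2))$. The only technical bookkeeping step is comparing $\frac{e^\eps-1}{e^\eps+1}=\tanh(\eps/2)$ to $\min\{\eps,1\}$: for $\eps\leq 1$ one uses $\tanh(\eps/2)\geq \eps/4$, and for $\eps>1$ one uses $\tanh(\eps/2)\geq\tanh(1/2)=\Theta(1)$. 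Together these give $\bigl(\tfrac{e^\eps-1}{e^\eps+1}\bigr)^2=\Theta(\min\{\eps^2,1\})$ and therefore the claimed $p=O(\log(m/\beta)/(\alpha^2\min\{\eps^2,1\}))$. I do not expect a real obstacle here: the proof is a textbook Hoeffding-plus-union-bound argument, and the key conceptual point is simply that the union bound is taken over $m$ critical indices rather than all $k$, which is exactly what shaves the $\log k$ factor down to $\log m$ and underlies the sample-complexity improvement in Theorem~\ref{thm:ksrm}.
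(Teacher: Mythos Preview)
Your proposal is correct and follows essentially the same route as the paper: compute the debiased expectation, apply Hoeffding's inequality to each coordinate with range $L=\frac{e^\eps+1}{e^\eps-1}$, union-bound over the $m$ critical indices rather than all $k$, and finish by bounding $\bigl(\tfrac{e^\eps-1}{e^\eps+1}\bigr)^2=\Theta(\min\{\eps^2,1\})$ via the small-$\eps$/large-$\eps$ case split. The only cosmetic difference is that you phrase the last step through $\tanh(\eps/2)$, whereas the paper argues the two regimes directly.
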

Note that by setting the number of critical queries to the total number of queries the above lemma can be also used to simulate any SQO.

Next, we seek to show how to privately implement our hypothesis selection algorithm. To do so, we first need to formally define locally private algorithms. Informally speaking, an $\eps$-LDP algorithm is a multi-round algorithm with access to an $\eps$-LR oracle. In each round, the algorithm observes the outputs of the previous round and defines a series of $\eps$-LRs. It then calls the oracle and computes a function of the returned values of the oracle as its output for that round. 
\begin{definition}[$\eps$-LDP Algorithm]
    An $\eps$-LDP algorithm with $t$ rounds is a randomized function $\cA_\eps$ that has access to an $\eps$-LR oracle $\Psi_D$ (for some database $D$), and outputs $y^t$ using the following recursive procedure: at each round $j\in [t]$, the algorithm defines a function $f^j$ depending on the outcomes of the previous round (i.e., $y^{j-1}$), picks a series of $n_j$ $\eps$-LRs $(\cM_i^j)_{i=1}^{n_j}$, and computes         
        $y^j = f^j\left(\Psi_D\left((\cM^j_i)_{i=1}^{n_j}\right)\right)$. 
\end{definition}
Intuitively, one can employ a strategy akin to that in Lemma~\ref{lemma:sqo_simulation} to simulate SQA by an $\eps$-LDP algorithm using $\cO^{RR}_h$, thus restricting the access to the database only through an $\eps$-LR oracle. 
\begin{observation}\label{obs}
    Any SQA with $t$ rounds can be simulated with an $\eps$-LDP algorithm $\cA_{\eps}$ with $t$ rounds. Moreover, the $\eps$-LR oracle associated with $\cA_{\eps}$ requires $O\left(\frac{n_i\log m_i/\beta_i}{\alpha_i^2\min \{\eps^2,1\}}\right)$ data points in round $i\in [t]$, where $n_i$ is the size of the workload of queries, $\alpha_i,\beta_i$ are the accuracy and failure parameters, respectively, and $m_i\in[n_i]$ is the number of critical queries.   
\end{observation}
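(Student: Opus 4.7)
The plan is to build $\cA_\eps$ round-by-round, using the simulation from Lemma~\ref{lemma:sqo_simulation} inside each round and exploiting the fact that the SQA only interacts with the data through its oracle. Concretely, fix an SQA $\cA$ with $t$ rounds, and let $W^{(i)}$, $\alpha_i$, $\beta_i$, $m_i$ denote the workload, accuracy, failure probability, and number of critical queries requested from $\cO_h$ in round $i$. Reserve in advance, for each round $i$, a disjoint block $D_i$ of
\[
|D_i| \;=\; O\!\left(\frac{n_i\log(m_i/\beta_i)}{\alpha_i^2\min\{\eps^2,1\}}\right)
\]
fresh i.i.d.\ samples from $h$, and let $D=\bigsqcup_{i=1}^t D_i$ be the underlying database accessed only through the $\eps$-LR oracle $\Psi_D$ of Definition~\ref{def:LROracle}.

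At round $i$, the simulator $\cA_\eps$ has already computed $y^{i-1}$, from which the SQA determines $W^{(i)}=(W^{(i)}_1,\ldots,W^{(i)}_{n_i})$ and the parameters $(\alpha_i,\beta_i,m_i)$. The simulator then defines, for each $j\in[n_i]$ and each of the $p_i=O(\log(m_i/\beta_i)/(\alpha_i^2\min\{\eps^2,1\}))$ samples assigned to that query, the $\eps$-LR
\[
\cM_{j,\ell}(x) \;=\; R_\eps\!\bigl(\indicator{x\in W^{(i)}_j}\bigr),
\]
submits these $n_i p_i = |D_i|$ local randomizers to $\Psi_D$, and post-processes the returned bits by the de-biasing formula of Lemma~\ref{lemma:sqo_simulation} to obtain $y^i=\cO^{RR}_h(W^{(i)},\alpha_i,\beta_i,m_i)$. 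By Lemma~\ref{lemma:sqo_simulation}, $y^i$ is distributed as the output of a valid SQOC on $W^{(i)}$ with parameters $(\alpha_i,\beta_i,m_i)$, so feeding $y^i$ into $\cA$ faithfully reproduces the behavior of round $i$.

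To conclude, I would check three things. \emph{(i) Round-count preservation:} the $i$-th workload submitted to $\Psi_D$ depends only on $y^{i-1}$ (which itself is a function of prior oracle outputs), so $\cA_\eps$ is an $\eps$-LDP algorithm with exactly $t$ rounds in the sense of the definition. \emph{(ii) Privacy:} because the blocks $D_i$ are disjoint, each sample in $D$ is touched by exactly one composition of $R_\eps$ with an indicator function; since such a composition is $\eps$-LR and $\Psi_D$ uses each data point at most once, the overall transcript is $\eps$-locally private. \emph{(iii) Sample count:} the size of $D_i$ matches the claimed $O(n_i\log(m_i/\beta_i)/(\alpha_i^2\min\{\eps^2,1\}))$ bound.

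The argument is largely a bookkeeping exercise once Lemma~\ref{lemma:sqo_simulation} is in hand, so the main conceptual point to be careful about is that adaptivity in the SQA translates directly into adaptivity of the $\eps$-LR oracle calls: the randomizers in round $i$ are constructed after observing the previous round's output, which is permitted by Definition~\ref{def:LROracle} precisely because each $x\in D$ is consumed at most once. That single observation, rather than any technical difficulty, is where I would expect a reader to need the most clarification.
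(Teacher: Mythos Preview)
Your proposal is correct and follows the same approach as the paper: the paper simply states that the observation ``can be proved by a direct application of Lemma~\ref{lemma:sqo_simulation},'' and your round-by-round simulation using $\cO^{RR}_h$ on disjoint blocks of fresh samples is exactly that application, spelled out in more detail than the paper itself provides.
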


This observation can be proved by a direct application of Lemma~\ref{lemma:sqo_simulation} and provides a systemic framework for constructing an $\eps$-LDP version of  Algorithm~\ref{alg:ours}. The following theorem, whose proof is given in Appendix~\ref{app:ours}, presents the guarantees of such a locally private hypothesis selector. 

\begin{theorem}[$\eps$-LDP Implementation of Algorithm~\ref{alg:ours}]\label{thm:ours}
   Let $\cF=\{f_1,\ldots,f_k\}$ be a set of $k$ distributions, $\eps>0$ and $\alpha,\beta \in (0,1)$.  
   Let $\cO^{RR}_{h}$, as defined in Lemma~\ref{lemma:sqo_simulation}, be the oracle for the (unknown) distribution $h$. 
   Algorithm $\textsc{BOKSERR}(\cF,\cO^{RR}_{h},\alpha,\beta)$ is an $\eps$-LDP algorithm that requires a database of $O\Big(\frac{k (\log \frac{1}{\beta})^2}{\alpha^2\min \{\eps^2,1\}}\Big)$ i.i.d.\ samples from $h$, runs in $\left(6+4\log\log\frac{3}{\beta}\right)\log\log k$ rounds, and returns a distribution $\hat{f}\in\cF$ such that with probability at least $1-\beta$ we have $d_{TV}(h,\hat{f}) \leq 9d_{TV}(h,\cF)+\alpha$. 
\end{theorem}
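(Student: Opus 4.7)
The plan is to derive Theorem~\ref{thm:ours} by composing the SQA-model analysis of BOKSERR (Theorem~\ref{thm:ksrm}) with the generic LDP simulation template provided by Lemma~\ref{lemma:sqo_simulation} and Observation~\ref{obs}. First, I would invoke Theorem~\ref{thm:ksrm} to get that, when its statistical query oracle is a valid SQOC with the stated per-round workloads, accuracy $\alpha$, and a per-round failure budget that sums (by a union bound across the $T := (6+4\log\log \tfrac{3}{\beta})\log\log k$ rounds) to $\beta$, BOKSERR returns $\hat f$ with $d_{TV}(h,\hat f)\le 9\, d_{TV}(h,\cF)+\alpha$. Then I would instantiate the oracle with $\cO^{RR}_h$ from Lemma~\ref{lemma:sqo_simulation}, which is $\eps$-LR because it is the post-processing of $R_\eps$ applied to indicators of measurable sets; feeding this oracle through the round-by-round scheme of Definition~\ref{def:LROracle} with a fresh batch of samples in each round yields an $\eps$-LDP algorithm with exactly $T$ rounds of interaction.

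For the sample complexity, I would carefully add up the per-round costs dictated by Observation~\ref{obs}: in round $i$ with workload size $n_i$, number of critical queries $m_i$, and per-round failure $\beta_i := \beta/T$, the oracle consumes $O\!\left(\tfrac{n_i\log(m_i/\beta_i)}{\alpha^2\min\{\eps^2,1\}}\right)$ fresh samples. I would split the accounting into the three sub-routines: (i) Boosted knockout with $t_k=\Theta(\log\log k)$ rounds in which Theorem~\ref{thm:knockout} gives $n_i=O(k(4/3)^i\log\tfrac{1}{\beta})$ and $m_i=O((4/3)^i\log\tfrac{1}{\beta})$, so $\log(m_i/\beta_i)=O(\log\tfrac{1}{\beta}+\log\log\log k)$ and the geometric sum is dominated by the last round and gives $O\!\left(\tfrac{k(\log \tfrac{1}{\beta})^2}{\alpha^2\min\{\eps^2,1\}}\right)$; (ii) Boosted sequential round-robin, where every query is critical but Theorem~\ref{thm:srr} (applied to the reduced list $\cK_1$) keeps $n_i$ small enough that, by induction on the squaring schedule, $\sum_i n_i\log(n_i/\beta_i)$ is sub-linear in $k$; and (iii) the final MDE-variant on $\cR_1\cup\cR_2\cup\cK_2$, whose input size is polynomially smaller than $\sqrt{k}$ (up to polylog$(1/\beta)$ factors per Theorem~\ref{thm:ksrm}) so its $\Theta((\text{size})^2)$-query LDP cost is also $O(k\log^2\tfrac{1}{\beta}/(\alpha^2\min\{\eps^2,1\}))$. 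Adding the three contributions gives the claimed bound.

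The $\eps$-LDP guarantee itself is the easier half: each of the $T$ rounds passes its local randomizers to an $\eps$-LR oracle $\Psi_D$ that touches each datum at most once, so every observation made by the algorithm is the output of an $\eps$-LR on an otherwise-unseen sample, which yields $\eps$-LDP by the post-processing property. The correctness then follows from the SQOC guarantee of $\cO^{RR}_h$ (Lemma~\ref{lemma:sqo_simulation}) together with Theorem~\ref{thm:ksrm}, where the union bound over all $T$ rounds absorbs the $\log T = O(\log\log\log k)$ overhead into the $\log(1/\beta)$ factor.

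The main obstacle will be the bookkeeping in step (ii): Boosted SRR is the step that would have been ruinous under a standard SQO because all of its queries are critical, and we must verify that knockout really does shrink the surviving list enough that the squaring geometry of BSRR produces only $o(k)$ total critical queries. Concretely, I expect to have to show, by inducting on the round index, that the cumulative $\sum_i n_i \log(n_i/\beta_i)$ in BSRR is $O\!\left(\tfrac{k(\log\tfrac{1}{\beta})^2}{\alpha^2\min\{\eps^2,1\}}\right)$ and not larger, using the $|\cK_1|=O(k/(\log k)^{\log\log(1/\beta)})$ bound from Theorem~\ref{thm:knockout} together with the per-round multiplicative shrinkage from Theorem~\ref{thm:srr}. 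Once this balance is verified, the remaining pieces of Theorem~\ref{thm:ours} — the round count, the approximation factor $9$, and the failure probability $\beta$ — follow immediately from Theorem~\ref{thm:ksrm}.
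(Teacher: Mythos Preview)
Your overall architecture matches the paper's proof exactly: decompose the sample cost into the three sub-routines, feed each through Observation~\ref{obs}/Lemma~\ref{lemma:sqo_simulation}, and sum. The correctness, round count, and approximation factor indeed come for free from Theorem~\ref{thm:ksrm}.

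There is, however, a genuine gap in your step~(i). You take the per-round workload of boosted knockout to be $n_i=O\!\big(k(4/3)^i\log\tfrac{1}{\beta}\big)$, the bound stated in Theorem~\ref{thm:knockout}, and then claim the geometric sum is dominated by the last round and equals $O\!\big(k(\log\tfrac{1}{\beta})^2\big)$. But with that bound the sum is \emph{increasing}, and knockout runs for $t_1=(5+4\log\log\tfrac{3}{\beta})\log\log k$ rounds (not $\Theta(\log\log k)$ for general $\beta$), so $(4/3)^{t_1}=(\log k)^{\Theta(\log\log(1/\beta))}$. Your accounting would therefore produce an extra $\mathrm{polylog}(k)$ factor and fail to give the linear-in-$k$ bound.

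The fix, which is what the paper actually does, is to use the \emph{tight} per-round workload rather than the loose bound in the theorem statement. At round $i$ only $k_{i-1}\le k/(3/2)^{i-1}$ candidates remain and they are paired $r=32(4/3)^i\log\tfrac{1}{\beta}$ times, so
\[
n_i \;\le\; \frac{r\,k_{i-1}}{2}\;\le\;\frac{32(4/3)^i k\log\tfrac{1}{\beta}}{2(3/2)^{i-1}}
\;=\;O\!\Big(k\,(8/9)^{\,i}\log\tfrac{1}{\beta}\Big),
\]
which is \emph{decreasing} in $i$. Now $\sum_i n_i=O(k\log\tfrac{1}{\beta})$ regardless of $t_1$, and multiplying by $\log(m_i/\beta_i)=O(\log\tfrac{1}{\beta})$ gives the desired $m_1=O\!\big(k(\log\tfrac{1}{\beta})^2/(\alpha^2\min\{\eps^2,1\})\big)$. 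With this correction your steps~(ii) and~(iii) go through as you outlined, and the proof is complete.
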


{
  \printbibliography

@inproceedings{duchi2019lower,
  title={Lower bounds for locally private estimation via communication complexity},
  author={Duchi, John and Rogers, Ryan},
  booktitle={Conference on Learning Theory},
  pages={1161--1191},
  year={2019},
  organization={PMLR}
}

@article{duchi2018minimax,
  title={Minimax optimal procedures for locally private estimation},
  author={Duchi, John C and Jordan, Michael I and Wainwright, Martin J},
  journal={Journal of the American Statistical Association},
  volume={113},
  number={521},
  pages={182--201},
  year={2018},
  publisher={Taylor \& Francis}
}

@article{mahalanabis2007density,
  title={Density estimation in linear time},
  author={Mahalanabis, Satyaki and Stefankovic, Daniel},
  journal={arXiv preprint arXiv:0712.2869},
  year={2007}
}

@inproceedings{gopi2020locally,
  title={Locally private hypothesis selection},
  author={Gopi, Sivakanth and Kamath, Gautam and Kulkarni, Janardhan and Nikolov, Aleksandar and Wu, Zhiwei Steven and Zhang, Huanyu},
  booktitle={Conference on Learning Theory},
  pages={1785--1816},
  year={2020},
  organization={PMLR}
}

@article{yatracos1985rates,
  title={Rates of convergence of minimum distance estimators and Kolmogorov's entropy},
  author={Yatracos, Yannis G},
  journal={The Annals of Statistics},
  volume={13},
  number={2},
  pages={768--774},
  year={1985},
  publisher={Institute of Mathematical Statistics}
}

@article{devroye1996universally,
  title={A universally acceptable smoothing factor for kernel density estimates},
  author={Devroye, Luc and Lugosi, G{\'a}bor},
  journal={The Annals of Statistics},
  pages={2499--2512},
  year={1996},
  publisher={JSTOR}
}

@article{devroye1997nonasymptotic,
  title={Nonasymptotic universal smoothing factors, kernel complexity and Yatracos classes},
  author={Devroye, Luc and Lugosi, G{\'a}bor},
  journal={The Annals of Statistics},
  pages={2626--2637},
  year={1997},
  publisher={JSTOR}
}

@book{devroye2001combinatorial,
  title={Combinatorial methods in density estimation},
  author={Devroye, Luc and Lugosi, G{\'a}bor},
  year={2001},
  publisher={Springer Science \& Business Media}
}

@article{diakonikolas2019robust,
  title={Robust estimators in high-dimensions without the computational intractability},
  author={Diakonikolas, Ilias and Kamath, Gautam and Kane, Daniel and Li, Jerry and Moitra, Ankur and Stewart, Alistair},
  journal={SIAM Journal on Computing},
  volume={48},
  number={2},
  pages={742--864},
  year={2019},
  publisher={SIAM}
}

@inproceedings{daskalakis2014faster,
  title={Faster and sample near-optimal algorithms for proper learning mixtures of gaussians},
  author={Daskalakis, Constantinos and Kamath, Gautam},
  booktitle={Conference on Learning Theory},
  pages={1183--1213},
  year={2014},
  organization={PMLR}
}

@inproceedings{diakonikolas2016new,
  title={A new approach for testing properties of discrete distributions},
  author={Diakonikolas, Ilias and Kane, Daniel M},
  booktitle={2016 IEEE 57th Annual Symposium on Foundations of Computer Science (FOCS)},
  pages={685--694},
  year={2016},
  organization={IEEE}
}

@inproceedings{bousquet2019optimal,
  title={The optimal approximation factor in density estimation},
  author={Bousquet, Olivier and Kane, Daniel and Moran, Shay},
  booktitle={Conference on Learning Theory},
  pages={318--341},
  year={2019},
  organization={PMLR}
}

@inproceedings{bousquet2022statistically,
  title={Statistically near-optimal hypothesis selection},
  author={Bousquet, Olivier and Braverman, Mark and Kol, Gillat and Efremenko, Klim and Moran, Shay},
  booktitle={2021 IEEE 62nd Annual Symposium on Foundations of Computer Science (FOCS)},
  pages={909--919},
  year={2022},
  organization={IEEE}
}

@inproceedings{acharya2014sorting,
  title={Sorting with adversarial comparators and application to density estimation},
  author={Acharya, Jayadev and Jafarpour, Ashkan and Orlitsky, Alon and Suresh, Ananda Theertha},
  booktitle={2014 IEEE International Symposium on Information Theory},
  pages={1682--1686},
  year={2014},
  organization={IEEE}
}

@article{acharya2018maximum,
  title={Maximum selection and sorting with adversarial comparators},
  author={Acharya, Jayadev and Falahatgar, Moein and Jafarpour, Ashkan and Orlitsky, Alon and Suresh, Ananda Theertha},
  journal={The Journal of Machine Learning Research},
  volume={19},
  number={1},
  pages={2427--2457},
  year={2018},
  publisher={JMLR. org}
}

@inproceedings{dwork2006calibrating,
  title={Calibrating noise to sensitivity in private data analysis},
  author={Dwork, Cynthia and McSherry, Frank and Nissim, Kobbi and Smith, Adam},
  booktitle={Theory of Cryptography: Third Theory of Cryptography Conference, TCC 2006, New York, NY, USA, March 4-7, 2006. Proceedings 3},
  pages={265--284},
  year={2006},
  organization={Springer}
}

@article{bun2019private,
  title={Private hypothesis selection},
  author={Bun, Mark and Kamath, Gautam and Steinke, Thomas and Wu, Steven Z},
  journal={Advances in Neural Information Processing Systems},
  volume={32},
  year={2019}
}

@article{kasiviswanathan2011can,
  title={What can we learn privately?},
  author={Kasiviswanathan, Shiva Prasad and Lee, Homin K and Nissim, Kobbi and Raskhodnikova, Sofya and Smith, Adam},
  journal={SIAM Journal on Computing},
  volume={40},
  number={3},
  pages={793--826},
  year={2011},
  publisher={SIAM}
}

@article{warner1965randomized,
  title={Randomized response: A survey technique for eliminating evasive answer bias},
  author={Warner, Stanley L},
  journal={Journal of the American Statistical Association},
  volume={60},
  number={309},
  pages={63--69},
  year={1965},
  publisher={Taylor \& Francis}
}

@inproceedings{duchi2013local,
  title={Local privacy and statistical minimax rates},
  author={Duchi, John C and Jordan, Michael I and Wainwright, Martin J},
  booktitle={2013 IEEE 54th Annual Symposium on Foundations of Computer Science},
  pages={429--438},
  year={2013},
  organization={IEEE}
}

@inproceedings{evfimievski2003limiting,
  title={Limiting privacy breaches in privacy preserving data mining},
  author={Evfimievski, Alexandre and Gehrke, Johannes and Srikant, Ramakrishnan},
  booktitle={Proceedings of the twenty-second ACM SIGMOD-SIGACT-SIGART symposium on Principles of database systems},
  pages={211--222},
  year={2003}
}

@inproceedings{edmonds2020power,
  title={The power of factorization mechanisms in local and central differential privacy},
  author={Edmonds, Alexander and Nikolov, Aleksandar and Ullman, Jonathan},
  booktitle={Proceedings of the 52nd Annual ACM SIGACT Symposium on Theory of Computing},
  pages={425--438},
  year={2020}
}

@article{kearns1998efficient,
  title={Efficient noise-tolerant learning from statistical queries},
  author={Kearns, Michael},
  journal={Journal of the ACM (JACM)},
  volume={45},
  number={6},
  pages={983--1006},
  year={1998},
  publisher={ACM New York, NY, USA}
}

@inproceedings{joseph2019role,
  title={The role of interactivity in local differential privacy},
  author={Joseph, Matthew and Mao, Jieming and Neel, Seth and Roth, Aaron},
  booktitle={2019 IEEE 60th Annual Symposium on Foundations of Computer Science (FOCS)},
  pages={94--105},
  year={2019},
  organization={IEEE}
}

@inproceedings{acharya2022role,
  title={The role of interactivity in structured estimation},
  author={Acharya, Jayadev and Canonne, Cl{\'e}ment L and Sun, Ziteng and Tyagi, Himanshu},
  booktitle={Conference on Learning Theory},
  pages={1328--1355},
  year={2022},
  organization={PMLR}
}

@article{acharya2020unified,
  title={Unified lower bounds for interactive high-dimensional estimation under information constraints},
  author={Acharya, Jayadev and Canonne, Cl{\'e}ment L and Sun, Ziteng and Tyagi, Himanshu},
  journal={arXiv preprint arXiv:2010.06562},
  year={2020}
}

@inproceedings{han2018geometric,
  title={Geometric lower bounds for distributed parameter estimation under communication constraints},
  author={Han, Yanjun and {\"O}zg{\"u}r, Ayfer and Weissman, Tsachy},
  booktitle={Conference On Learning Theory},
  pages={3163--3188},
  year={2018},
  organization={PMLR}
}

@article{kairouz2021advances,
  title={Advances and open problems in federated learning},
  author={Kairouz, Peter and McMahan, H Brendan and Avent, Brendan and Bellet, Aur{\'e}lien and Bennis, Mehdi and Bhagoji, Arjun Nitin and Bonawitz, Kallista and Charles, Zachary and Cormode, Graham and Cummings, Rachel and others},
  journal={Foundations and Trends{\textregistered} in Machine Learning},
  volume={14},
  number={1--2},
  pages={1--210},
  year={2021},
  publisher={Now Publishers, Inc.}
}

@inproceedings{acharya2017sample,
  title={Sample-optimal density estimation in nearly-linear time},
  author={Acharya, Jayadev and Diakonikolas, Ilias and Li, Jerry and Schmidt, Ludwig},
  booktitle={Proceedings of the Twenty-Eighth Annual ACM-SIAM Symposium on Discrete Algorithms},
  pages={1278--1289},
  year={2017},
  organization={SIAM}
}

@article{neyman1933ix,
  title={IX. On the problem of the most efficient tests of statistical hypotheses},
  author={Neyman, Jerzy and Pearson, Egon Sharpe},
  journal={Philosophical Transactions of the Royal Society of London. Series A, Containing Papers of a Mathematical or Physical Character},
  volume={231},
  number={694-706},
  pages={289--337},
  year={1933},
  publisher={The Royal Society London}
}

@article{goldreich2011testing,
  title={On testing expansion in bounded-degree graphs},
  author={Goldreich, Oded and Ron, Dana},
  journal={Studies in Complexity and Cryptography. Miscellanea on the Interplay between Randomness and Computation: In Collaboration with Lidor Avigad, Mihir Bellare, Zvika Brakerski, Shafi Goldwasser, Shai Halevi, Tali Kaufman, Leonid Levin, Noam Nisan, Dana Ron, Madhu Sudan, Luca Trevisan, Salil Vadhan, Avi Wigderson, David Zuckerman},
  pages={68--75},
  year={2011},
  publisher={Springer}
}

@inproceedings{batu2000testing,
  title={Testing that distributions are close},
  author={Batu, Tugkan and Fortnow, Lance and Rubinfeld, Ronitt and Smith, Warren D and White, Patrick},
  booktitle={Proceedings 41st Annual Symposium on Foundations of Computer Science},
  pages={259--269},
  year={2000},
  organization={IEEE}
}

@inproceedings{pensia2022simple,
  title={Simple Binary Hypothesis Testing under Communication Constraints},
  author={Pensia, Ankit and Loh, Po-Ling and Jog, Varun},
  booktitle={2022 IEEE International Symposium on Information Theory (ISIT)},
  pages={3297--3302},
  year={2022},
  organization={IEEE}
}

@book{gs009,
 author = {Canonne, Cl{\'{e}}ment L.},
 title = {A Survey on Distribution Testing: Your Data is Big. But is it Blue?},
 year = {2020},
 pages = {1--100},
 doi = {10.4086/toc.gs.2020.009},
 publisher = {Theory of Computing Library},
 number = {9},
 series = {Graduate Surveys},
 URL = {http://www.theoryofcomputing.org/library.html},
}

@inproceedings{canonne2019structure,
  title={The structure of optimal private tests for simple hypotheses},
  author={Canonne, Cl{\'e}ment L and Kamath, Gautam and McMillan, Audra and Smith, Adam and Ullman, Jonathan},
  booktitle={Proceedings of the 51st Annual ACM SIGACT Symposium on Theory of Computing},
  pages={310--321},
  year={2019}
}

@inproceedings{acharya2019test,
  title={Test without trust: Optimal locally private distribution testing},
  author={Acharya, Jayadev and Canonne, Cl{\'e}ment and Freitag, Cody and Tyagi, Himanshu},
  booktitle={The 22nd International Conference on Artificial Intelligence and Statistics},
  pages={2067--2076},
  year={2019},
  organization={PMLR}
}

@inproceedings{narayanan2022private,
  title={Private high-dimensional hypothesis testing},
  author={Narayanan, Shyam},
  booktitle={Conference on Learning Theory},
  pages={3979--4027},
  year={2022},
  organization={PMLR}
}

@article{canonne2020private,
  title={Private identity testing for high-dimensional distributions},
  author={Canonne, Cl{\'e}ment L and Kamath, Gautam and McMillan, Audra and Ullman, Jonathan and Zakynthinou, Lydia},
  journal={Advances in Neural Information Processing Systems},
  volume={33},
  pages={10099--10111},
  year={2020}
}

@article{pensia2023simple,
  title={Simple Binary Hypothesis Testing under Local Differential Privacy and Communication Constraints},
  author={Pensia, Ankit and Asadi, Amir R and Jog, Varun and Loh, Po-Ling},
  journal={arXiv preprint arXiv:2301.03566},
  year={2023}
}

@article{asoodeh2022contraction,
  title={Contraction of Locally Differentially Private Mechanisms},
  author={Asoodeh, Shahab and Zhang, Huanyu},
  journal={arXiv preprint arXiv:2210.13386},
  year={2022}
}

@article{joseph2019locally,
  title={Locally private gaussian estimation},
  author={Joseph, Matthew and Kulkarni, Janardhan and Mao, Jieming and Wu, Steven Z},
  journal={Advances in Neural Information Processing Systems},
  volume={32},
  year={2019}
}

@article{bun2019heavy,
  title={Heavy hitters and the structure of local privacy},
  author={Bun, Mark and Nelson, Jelani and Stemmer, Uri},
  journal={ACM Transactions on Algorithms (TALG)},
  volume={15},
  number={4},
  pages={1--40},
  year={2019},
  publisher={ACM New York, NY, USA}
}

@article{chen2020breaking,
  title={Breaking the communication-privacy-accuracy trilemma},
  author={Chen, Wei-Ning and Kairouz, Peter and Ozgur, Ayfer},
  journal={Advances in Neural Information Processing Systems},
  volume={33},
  pages={3312--3324},
  year={2020}
}

@inproceedings{asi2022optimal,
  title={Optimal algorithms for mean estimation under local differential privacy},
  author={Asi, Hilal and Feldman, Vitaly and Talwar, Kunal},
  booktitle={International Conference on Machine Learning},
  pages={1046--1056},
  year={2022},
  organization={PMLR}
}

@article{asi2023fast,
  title={Fast Optimal Locally Private Mean Estimation via Random Projections},
  author={Asi, Hilal and Feldman, Vitaly and Nelson, Jelani and Nguyen, Huy L and Talwar, Kunal},
  journal={arXiv preprint arXiv:2306.04444},
  year={2023}
}

@inproceedings{kamath2019privately,
  title={Privately learning high-dimensional distributions},
  author={Kamath, Gautam and Li, Jerry and Singhal, Vikrant and Ullman, Jonathan},
  booktitle={Conference on Learning Theory},
  pages={1853--1902},
  year={2019},
  organization={PMLR}
}

@article{biswas2020coinpress,
  title={Coinpress: Practical private mean and covariance estimation},
  author={Biswas, Sourav and Dong, Yihe and Kamath, Gautam and Ullman, Jonathan},
  journal={Advances in Neural Information Processing Systems},
  volume={33},
  pages={14475--14485},
  year={2020}
}

@inproceedings{kamath2020private,
  title={Private mean estimation of heavy-tailed distributions},
  author={Kamath, Gautam and Singhal, Vikrant and Ullman, Jonathan},
  booktitle={Conference on Learning Theory},
  pages={2204--2235},
  year={2020},
  organization={PMLR}
}

@inproceedings{aden2021sample,
  title={On the sample complexity of privately learning unbounded high-dimensional gaussians},
  author={Aden-Ali, Ishaq and Ashtiani, Hassan and Kamath, Gautam},
  booktitle={Algorithmic Learning Theory},
  pages={185--216},
  year={2021},
  organization={PMLR}
}

@inproceedings{ashtiani2022private,
  title={Private and polynomial time algorithms for learning Gaussians and beyond},
  author={Ashtiani, Hassan and Liaw, Christopher},
  booktitle={Conference on Learning Theory},
  pages={1075--1076},
  year={2022},
  organization={PMLR}
}

@inproceedings{hopkins2023robustness,
  title={Robustness implies privacy in statistical estimation},
  author={Hopkins, Samuel B and Kamath, Gautam and Majid, Mahbod and Narayanan, Shyam},
  booktitle={Proceedings of the 55th Annual ACM Symposium on Theory of Computing},
  pages={497--506},
  year={2023}
}

@article{steinke2016between,
  title={Between Pure and Approximate Differential Privacy},
  author={Steinke, Thomas and Ullman, Jonathan},
  journal={Journal of Privacy and Confidentiality},
  volume={7},
  number={2},
  year={2016}
}

@inproceedings{ghazi2021avoiding,
  title={On avoiding the union bound when answering multiple differentially private queries},
  author={Ghazi, Badih and Kumar, Ravi and Manurangsi, Pasin},
  booktitle={Conference on Learning Theory},
  pages={2133--2146},
  year={2021},
  organization={PMLR}
}

@inproceedings{dagan2022bounded,
  title={A bounded-noise mechanism for differential privacy},
  author={Dagan, Yuval and Kur, Gil},
  booktitle={Conference on Learning Theory},
  pages={625--661},
  year={2022},
  organization={PMLR}
}

@inproceedings{diakonikolas2017statistical,
  title={Statistical query lower bounds for robust estimation of high-dimensional gaussians and gaussian mixtures},
  author={Diakonikolas, Ilias and Kane, Daniel M and Stewart, Alistair},
  booktitle={2017 IEEE 58th Annual Symposium on Foundations of Computer Science (FOCS)},
  pages={73--84},
  year={2017},
  organization={IEEE}
}

@article{ding2017collecting,
  title={Collecting telemetry data privately},
  author={Ding, Bolin and Kulkarni, Janardhan and Yekhanin, Sergey},
  journal={Advances in Neural Information Processing Systems},
  volume={30},
  year={2017}
}

@inproceedings{erlingsson2014rappor,
  title={Rappor: Randomized aggregatable privacy-preserving ordinal response},
  author={Erlingsson, {\'U}lfar and Pihur, Vasyl and Korolova, Aleksandra},
  booktitle={Proceedings of the 2014 ACM SIGSAC conference on computer and communications security},
  pages={1054--1067},
  year={2014}
}

@misc{patent,
 title={Learning new words},
 author={Davidson, Doug and Thakurta, Abhradeep Guha and Freudiger, Julien and Kapoor, Gaurav and Sridhar, Vivek Rangarajan and Vaishampayan, Umesh S. and Vyrros, Andrew H.},
 year={May 9 2017}
 }

@misc{apple,
  title = {Learning with Privacy at Scale},
  author = {Differential Privacy Team, Apple},
  year = {December 2017},
  howpublished = {\url{https://machinelearning.apple.com/research/learning-with-privacy-at-scale}}
}

@article{acharya2021inference,
  title={Inference under information constraints III: Local privacy constraints},
  author={Acharya, Jayadev and Canonne, Cl{\'e}ment L and Freitag, Cody and Sun, Ziteng and Tyagi, Himanshu},
  journal={IEEE Journal on Selected Areas in Information Theory},
  volume={2},
  number={1},
  pages={253--267},
  year={2021},
  publisher={IEEE}
}

@inproceedings{acharya2015fast,
  title={Fast and near-optimal algorithms for approximating distributions by histograms},
  author={Acharya, Jayadev and Diakonikolas, Ilias and Hegde, Chinmay and Li, Jerry Zheng and Schmidt, Ludwig},
  booktitle={Proceedings of the 34th ACM SIGMOD-SIGACT-SIGAI Symposium on Principles of Database Systems},
  pages={249--263},
  year={2015}
}

@inproceedings{chan2014efficient,
  title={Efficient density estimation via piecewise polynomial approximation},
  author={Chan, Siu-On and Diakonikolas, Ilias and Servedio, Rocco A and Sun, Xiaorui},
  booktitle={Proceedings of the forty-sixth annual ACM symposium on Theory of computing},
  pages={604--613},
  year={2014}
}

@article{scheffe1947useful,
  title={A useful convergence theorem for probability distributions},
  author={Scheff{\'e}, Henry},
  journal={The Annals of Mathematical Statistics},
  volume={18},
  number={3},
  pages={434--438},
  year={1947},
  publisher={JSTOR}
}

@article{hasminskii1990density,
  title={On density estimation in the view of Kolmogorov's ideas in approximation theory},
  author={Hasminskii, Rafael and Ibragimov, Ildar},
  journal={The Annals of Statistics},
  pages={999--1010},
  year={1990},
  publisher={JSTOR}
}

@article{birge1989grenader,
  title={The grenader estimator: A nonasymptotic approach},
  author={Birge, Lucien},
  journal={The Annals of Statistics},
  pages={1532--1549},
  year={1989},
  publisher={JSTOR}
}

@article{yang1999information,
  title={Information-theoretic determination of minimax rates of convergence},
  author={Yang, Yuhong and Barron, Andrew},
  journal={Annals of Statistics},
  pages={1564--1599},
  year={1999},
  publisher={JSTOR}
}

@article{diakonikolas2016learning,
  title={Learning Structured Distributions.},
  author={Diakonikolas, Ilias},
  journal={Handbook of Big Data},
  volume={267},
  pages={10--1201},
  year={2016}
}

@article{ashtiani2018some,
  title={Some techniques in density estimation},
  author={Ashtiani, Hassan and Mehrabian, Abbas},
  journal={arXiv preprint arXiv:1801.04003},
  year={2018}
}

@article{ashtiani2020near,
  title={Near-optimal sample complexity bounds for robust learning of gaussian mixtures via compression schemes},
  author={Ashtiani, Hassan and Ben-David, Shai and Harvey, Nicholas JA and Liaw, Christopher and Mehrabian, Abbas and Plan, Yaniv},
  journal={Journal of the ACM (JACM)},
  volume={67},
  number={6},
  pages={1--42},
  year={2020},
  publisher={ACM New York, NY, USA}
}

@inproceedings{feldman2017general,
  title={A general characterization of the statistical query complexity},
  author={Feldman, Vitaly},
  booktitle={Conference on Learning Theory},
  pages={785--830},
  year={2017},
  organization={PMLR}
}

@inproceedings{szorenyi2009characterizing,
  title={Characterizing statistical query learning: simplified notions and proofs},
  author={Sz{\"o}r{\'e}nyi, Bal{\'a}zs},
  booktitle={International Conference on Algorithmic Learning Theory},
  pages={186--200},
  year={2009},
  organization={Springer}
}

@inproceedings{moitra2010settling,
  title={Settling the polynomial learnability of mixtures of gaussians},
  author={Moitra, Ankur and Valiant, Gregory},
  booktitle={2010 IEEE 51st Annual Symposium on Foundations of Computer Science},
  pages={93--102},
  year={2010},
  organization={IEEE}
}

@article{paninski2008coincidence,
  title={A coincidence-based test for uniformity given very sparsely sampled discrete data},
  author={Paninski, Liam},
  journal={IEEE Transactions on Information Theory},
  volume={54},
  number={10},
  pages={4750--4755},
  year={2008},
  publisher={IEEE}
}

@book{diakonikolas2023algorithmic,
  title={Algorithmic high-dimensional robust statistics},
  author={Diakonikolas, Ilias and Kane, Daniel M},
  year={2023},
  publisher={Cambridge University Press}
}

@inproceedings{dwork2006our,
  title={Our data, ourselves: Privacy via distributed noise generation},
  author={Dwork, Cynthia and Kenthapadi, Krishnaram and McSherry, Frank and Mironov, Ilya and Naor, Moni},
  booktitle={Advances in Cryptology-EUROCRYPT 2006: 24th Annual International Conference on the Theory and Applications of Cryptographic Techniques, St. Petersburg, Russia, May 28-June 1, 2006. Proceedings 25},
  pages={486--503},
  year={2006},
  organization={Springer}
}

@inproceedings{kamath2022private,
  title={A private and computationally-efficient estimator for unbounded gaussians},
  author={Kamath, Gautam and Mouzakis, Argyris and Singhal, Vikrant and Steinke, Thomas and Ullman, Jonathan},
  booktitle={Conference on Learning Theory},
  pages={544--572},
  year={2022},
  organization={PMLR}
}

@inproceedings{alabi2023privately,
  title={Privately Estimating a Gaussian: Efficient, Robust, and Optimal},
  author={Alabi, Daniel and Kothari, Pravesh K and Tankala, Pranay and Venkat, Prayaag and Zhang, Fred},
  booktitle={Proceedings of the 55th Annual ACM Symposium on Theory of Computing},
  pages={483--496},
  year={2023}
}

@inproceedings{karwa2018finite,
  title={Finite Sample Differentially Private Confidence Intervals},
  author={Karwa, Vishesh and Vadhan, Salil},
  booktitle={9th Innovations in Theoretical Computer Science Conference (ITCS 2018)},
  year={2018},
  organization={Schloss Dagstuhl-Leibniz-Zentrum fuer Informatik}
}

@inproceedings{kothari2022private,
  title={Private robust estimation by stabilizing convex relaxations},
  author={Kothari, Pravesh and Manurangsi, Pasin and Velingker, Ameya},
  booktitle={Conference on Learning Theory},
  pages={723--777},
  year={2022},
  organization={PMLR}
}

@article{aden2021privately,
  title={Privately learning mixtures of axis-aligned gaussians},
  author={Aden-Ali, Ishaq and Ashtiani, Hassan and Liaw, Christopher},
  journal={Advances in Neural Information Processing Systems},
  volume={34},
  pages={3925--3938},
  year={2021}
}

@article{kamath2019differentially,
  title={Differentially private algorithms for learning mixtures of separated gaussians},
  author={Kamath, Gautam and Sheffet, Or and Singhal, Vikrant and Ullman, Jonathan},
  journal={Advances in Neural Information Processing Systems},
  volume={32},
  year={2019}
}

@inproceedings{cohen2021differentially,
  title={Differentially-private clustering of easy instances},
  author={Cohen, Edith and Kaplan, Haim and Mansour, Yishay and Stemmer, Uri and Tsfadia, Eliad},
  booktitle={International Conference on Machine Learning},
  pages={2049--2059},
  year={2021},
  organization={PMLR}
}

@inproceedings{tsfadia2022friendlycore,
  title={Friendlycore: Practical differentially private aggregation},
  author={Tsfadia, Eliad and Cohen, Edith and Kaplan, Haim and Mansour, Yishay and Stemmer, Uri},
  booktitle={International Conference on Machine Learning},
  pages={21828--21863},
  year={2022},
  organization={PMLR}
}

@article{bie2022private,
  title={Private estimation with public data},
  author={Bie, Alex and Kamath, Gautam and Singhal, Vikrant},
  journal={Advances in Neural Information Processing Systems},
  volume={35},
  pages={18653--18666},
  year={2022}
}

@article{singhal2023polynomial,
  title={A Polynomial Time, Pure Differentially Private Estimator for Binary Product Distributions},
  author={Singhal, Vikrant},
  journal={arXiv preprint arXiv:2304.06787},
  year={2023}
}

@article{afzali2023mixtures,
  title={Mixtures of Gaussians are Privately Learnable with a Polynomial Number of Samples},
  author={Afzali, Mohammad and Ashtiani, Hassan and Liaw, Christopher},
  journal={arXiv preprint arXiv:2309.03847},
  year={2023}
}

@inproceedings{arbas2023polynomial,
  title={Polynomial time and private learning of unbounded Gaussian Mixture Models},
  author={Arbas, Jamil and Ashtiani, Hassan and Liaw, Christopher},
  booktitle={International Conference on Machine Learning},
  year={2023},
  organization={PMLR}
}

@article{ben2023private,
  title={Private Distribution Learning with Public Data: The View from Sample Compression},
  author={Ben-David, Shai and Bie, Alex and Canonne, Cl{\'e}ment L and Kamath, Gautam and Singhal, Vikrant},
  journal={arXiv preprint arXiv:2308.06239},
  year={2023}
}

@inproceedings{ullman2013answering,
  title={Answering n $\{$2+ o (1)$\}$ counting queries with differential privacy is hard},
  author={Ullman, Jonathan},
  booktitle={Proceedings of the forty-fifth annual ACM symposium on Theory of computing},
  pages={361--370},
  year={2013}
}

@article{suresh2014near,
  title={Near-optimal-sample estimators for spherical gaussian mixtures},
  author={Suresh, Ananda Theertha and Orlitsky, Alon and Acharya, Jayadev and Jafarpour, Ashkan},
  journal={Advances in Neural Information Processing Systems},
  volume={27},
  year={2014}
}

@inproceedings{daskalakis2012learning,
  title={Learning poisson binomial distributions},
  author={Daskalakis, Constantinos and Diakonikolas, Ilias and Servedio, Rocco A},
  booktitle={Proceedings of the forty-fourth annual ACM symposium on Theory of computing},
  pages={709--728},
  year={2012}
}

@article{daniely2019locally,
  title={Locally private learning without interaction requires separation},
  author={Daniely, Amit and Feldman, Vitaly},
  journal={Advances in neural information processing systems},
  volume={32},
  year={2019}
}

@inproceedings{joseph2020exponential,
  title={Exponential separations in local differential privacy},
  author={Joseph, Matthew and Mao, Jieming and Roth, Aaron},
  booktitle={Proceedings of the Fourteenth Annual ACM-SIAM Symposium on Discrete Algorithms},
  pages={515--527},
  year={2020},
  organization={SIAM}
}

@article{Dwork_Roth_book,
	added-at = {2019-05-30T00:44:15.000+0200},
	author = {Dwork, Cynthia and Roth, Aaron},
	biburl = {https://www.bibsonomy.org/bibtex/23ed8612379b68ebd6bbe099412e8d5e9/kirk86},
	description = {privacybook.pdf},
	interhash = {1fd6c77d69ce9e3ebfd0f6661399d95c},
	intrahash = {3ed8612379b68ebd6bbe099412e8d5e9},
	keywords = {book differential-privacy theory},
	timestamp = {2019-05-30T00:54:32.000+0200},
	title = {The Algorithmic Foundations of Differential Privacy},
	url = {http://www.cis.upenn.edu/~aaroth/Papers/privacybook.pdf},
	year = 2014
}

@article{Exp_Seperation,
author = {Joseph, Matthew and Mao, Jieming and Roth, Aaron},
title = {Exponential Separations in Local Privacy},
year = {2022},
issue_date = {October 2022},
publisher = {Association for Computing Machinery},
address = {New York, NY, USA},
volume = {18},
number = {4},
journal = {ACM Trans. Algorithms},
month = {oct},
articleno = {32},
}

@article{ullman2018tight,
  title={Tight lower bounds for locally differentially private selection},
  author={Ullman, Jonathan},
  journal={arXiv preprint arXiv:1802.02638},
  year={2018}
}

@article{aliakbarpour2024hypothesis,
  title={Hypothesis Selection with Memory Constraints},
  author={Aliakbarpour, Maryam and Bun, Mark and Smith, Adam},
  journal={Advances in Neural Information Processing Systems},
  volume={36},
  year={2024}
}

@inproceedings{braverman2016communication,
  title={Communication lower bounds for statistical estimation problems via a distributed data processing inequality},
  author={Braverman, Mark and Garg, Ankit and Ma, Tengyu and Nguyen, Huy L and Woodruff, David P},
  booktitle={Proceedings of the forty-eighth annual ACM symposium on Theory of Computing},
  pages={1011--1020},
  year={2016}
}
}
\newpage
\appendix
\appendix
\section{Miscellaneous facts}

\begin{lemma}[Hoeffding's Inequality]\label{lemma:hoeffding}
    Let $X_1,\ldots,X_n$ be i.i.d. random variables with $\mu=\expect{X_i}$ and $\prob{a\leq X_i \leq b}=1$ for all $i\in[n]$. Then for every $\alpha>0$ we have
    \begin{equation*}
        \prob{\left|\frac{1}{n}\sum_{i=1}^n X_i-\mu \right| > \alpha} \leq \exp\left(-\frac{2n\alpha^2}{(b-a)^2}\right).
    \end{equation*}

\end{lemma}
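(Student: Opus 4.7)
The plan is to follow the standard Chernoff/Cramér approach: apply Markov's inequality to an exponentiated version of the centered sum, control the moment generating function of each bounded summand using Hoeffding's lemma, and then optimize over the free exponential parameter.

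First I would reduce to the one-sided tail. Let $Y_i = X_i - \mu$, so that the $Y_i$ are i.i.d., mean zero, and satisfy $a - \mu \leq Y_i \leq b - \mu$. For any $s > 0$, Markov's inequality gives
\begin{equation*}
\Pr\!\left(\tfrac{1}{n}\sum_{i=1}^n X_i - \mu > \alpha\right) = \Pr\!\left(\sum_{i=1}^n Y_i > n\alpha\right) \leq e^{-sn\alpha}\,\mathbb{E}\!\left[e^{s\sum_i Y_i}\right] = e^{-sn\alpha}\prod_{i=1}^n \mathbb{E}[e^{sY_i}],
\end{equation*}
using independence in the last step. This is the standard Chernoff-style setup, and its success hinges entirely on a sharp bound for $\mathbb{E}[e^{sY_i}]$.

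Next I would invoke Hoeffding's lemma, which states that for a mean-zero random variable $Y$ with $Y \in [c,d]$ almost surely, one has $\mathbb{E}[e^{sY}] \leq \exp(s^2(d-c)^2/8)$. The proof of this lemma, which I would include or cite, is itself a short convexity argument: by convexity of $y \mapsto e^{sy}$, $e^{sy} \leq \tfrac{d-y}{d-c}e^{sc} + \tfrac{y-c}{d-c}e^{sd}$ on $[c,d]$; taking expectations and setting $\phi(u) = -\theta u + \log(1 - \theta + \theta e^u)$ with $u = s(d-c)$ and $\theta = -c/(d-c)$ reduces matters to showing $\phi(u) \leq u^2/8$, which follows because $\phi(0) = \phi'(0) = 0$ and $\phi''(u) \leq 1/4$ (a standard computation since $\phi''$ is the variance of a Bernoulli, which is maximized at $1/4$). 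Applying this with $c = a-\mu$, $d = b-\mu$, so that $d-c = b-a$, yields $\mathbb{E}[e^{sY_i}] \leq \exp(s^2(b-a)^2/8)$.

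Combining the two inequalities gives $\Pr(\sum Y_i > n\alpha) \leq \exp(-sn\alpha + ns^2(b-a)^2/8)$ for every $s > 0$. I would then optimize the right-hand side in $s$ by setting its derivative to zero, obtaining $s^\star = 4\alpha/(b-a)^2$, which yields the exponent $-2n\alpha^2/(b-a)^2$. The lower-tail bound $\Pr(\tfrac{1}{n}\sum X_i - \mu < -\alpha)$ is handled identically by replacing $X_i$ with $-X_i$ (with $-b \leq -X_i \leq -a$, preserving the length $b-a$), and a union bound over the two tails gives the stated two-sided inequality. The main obstacle, if any, is purely the justification of Hoeffding's lemma; once that is in hand the rest is a one-line optimization, so I would either include the short convexity argument sketched above or cite it as standard.
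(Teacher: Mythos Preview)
Your proof is the standard and correct Chernoff--Hoeffding argument. The paper does not actually supply a proof of this lemma; it is simply listed under ``Miscellaneous facts'' as a known result, so there is no approach to compare against.

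One small point worth flagging: your final union bound over the upper and lower tails yields $2\exp\!\big(-2n\alpha^2/(b-a)^2\big)$, not $\exp\!\big(-2n\alpha^2/(b-a)^2\big)$. The two-sided statement in the paper omits this factor of $2$, which is a minor imprecision in the lemma as written rather than a flaw in your argument; the one-sided bound you derive is exactly the standard Hoeffding tail, and the factor of $2$ is immaterial for every application in the paper.
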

\section{Analysis of Theorem~\ref{thm:knockout} (Boosted knockout)}\label{app:knockout}
We prove each part of Theorem~\ref{thm:knockout} in one of the following sections.

\begin{algorithm}
\SetFuncSty{textsc}
  \SetKwFunction{knockout}{Boosted-Knockout}
  \SetKwProg{Fn}{procedure}{:}{end procedure}
  \KwIn{A set $\cF=\{f_1,\ldots,f_k\}$ of $k$ distributions, Oracle $\cO_h$, parameters $\alpha,\beta>0, t\geq 1$}
  \KwOut{Two lists of candidate distributions from $\cF$}
    \Fn{\knockout{$\cF,\cO_h,\alpha,\beta,t$}}{
    Sample $n = 8\log\frac{1}{\beta}.2^{t.\log{\frac{3}{2}}}$ distributions randomly from $\cF$ and copy them to $\cK_2$\;\\
    $\cF_1\gets \cF$\;\\
    \For{$i \in [t]$}{
   $\forall f\in \cF_i$, let $w[f] \gets 0$ \Comment{To record the number of wins of $f$}\;\\
    $\cH \gets \emptyset$, $\cG \gets \emptyset$, $r\gets \lceil 32(\frac{4}{3})^i\log{\frac{1}{\beta}}\rceil$\;\\
    \For{$j \in [r]$}{ 
    Randomly pair the distribution in $\cF_i$, copy the pairs to $\cG$ \Comment{$\frac{|\cF_i|}{2}\leq |\cG| \leq \frac{|\cF_i|}{2}r$}\;
    }
    \For{every pair $p=(f,f')\in \cG$}{  $B_p \gets Sch(f,f')$\; 
    }
    $W = (B_p)_{p\in\cG}$, $(y_p)_{p\in\cG} = \cO_h(W,\alpha,\beta/ 2^i,r)$\;\\
    \For{every pair $p=(f,f')\in \cG$}{
    \uIf{$\textsc{SCHEFF\'E}(f,f',y_p)$ returns $f$}{ $w[f] \gets w[f]+1$\;}   
    \Else{$w[f'] \gets w[f']+1$\;}
    }
    \For{every $f \in \cF_i$}{
    \uIf{$w[f] \geq \frac{3}{4}r$}{$\cH \gets \cH \cup f$\;
    }  
    }
     $\cF_{i+1} \gets \cH$\;
    }
    $\cK_1 \gets \cF_{t+1}$\;\\
    \Return $\cK_1,\cK_2$\;
    }
    \caption{Boosted Knockout}\label{alg:ko}
\end{algorithm}
\subsection{Size of the returned lists}
We know that at the end of each round the set of distributions $\cF$ is updated and only the distributions that are in $\cH$ in that round will proceed to the next round. In the following lemma, we show that the number of remaining distributions decays exponentially with the number of rounds.
\begin{lemma}\label{lemma:ko_size}
    Let $k_0=k$ and denote by $k_i$ be the number of distributions that are remaining from the initial set of distributions $\cF=\{f_1,\ldots,f_k\}$ at the end of round $i\in [t]$ of Algorithm~\ref{alg:ko}, i.e., the distributions in $\cH$ at the end of round $i$. Then we have $|k_i|\leq \frac{k}{(\frac{3}{2})^i}$.
\end{lemma}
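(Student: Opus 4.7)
The proof is a straightforward double-counting argument, so I will structure it as a one-step induction on $i$. The key observation is that in round $i$ of boosted knockout, each of the $k_{i-1}$ surviving distributions participates in exactly $r$ comparisons (one per random pairing), and to advance to round $i+1$ a distribution must win at least $3r/4$ of them.

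The plan is as follows. First, I would count the total number of comparisons in round $i$: since we pair up the $k_{i-1}$ distributions $r$ times, the number of comparisons is exactly $r \cdot k_{i-1}/2$, and each comparison produces exactly one winner. Hence the total number of wins awarded in round $i$ is $r k_{i-1}/2$. Second, I would lower bound the total wins by counting on the surviving side: every distribution in $\cH$ at the end of round $i$ wins at least $3r/4$ of its comparisons, so the total number of wins awarded to surviving distributions is at least $(3r/4)\, k_i$. Combining,
\begin{equation*}
\frac{3r}{4}\, k_i \;\leq\; \text{(total wins by survivors)} \;\leq\; \text{(total wins overall)} \;=\; \frac{r k_{i-1}}{2},
\end{equation*}
which rearranges to $k_i \leq \tfrac{2}{3}\, k_{i-1}$.

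Finally, I would conclude by induction on $i$: with the base case $k_0 = k$ given, the recursive bound $k_i \leq (2/3)\, k_{i-1}$ yields $k_i \leq (2/3)^i\, k = k/(3/2)^i$, as desired.

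I do not expect significant obstacles. The only mild technicality is that when $k_{i-1}$ is odd a random pairing leaves one distribution unpaired; depending on the convention used by Algorithm~\ref{alg:ko} this distribution is either auto-eliminated or auto-advanced. In the latter case it still participates in only $r$ comparisons (in the rounds where it is paired), and the above inequality still holds up to rounding. Since the bound $k_i \leq k/(3/2)^i$ is stated with an inequality and absorbs such $O(1)$ rounding, this does not affect the conclusion.
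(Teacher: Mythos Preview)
Your proposal is correct and follows essentially the same double-counting argument as the paper: total wins in round $i$ equal $r k_{i-1}/2$, survivors account for at least $(3r/4)\,k_i$ of them, giving $k_i \le (2/3)\,k_{i-1}$ and the result by induction. Your write-up is in fact slightly more careful than the paper's (which says ``involved in'' rather than ``won'' at the key step, and does not mention the odd-$k_{i-1}$ technicality).
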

\begin{proof}
    We know that at round $i$ we pair the remaining distributions randomly for $r$ times and each time we run Scheff\'e tests between the pairs of distributions. Therefore, the total number of  Scheff\'e  tests run in round $i$ is equal to $\frac{k_i}{2}r$. We also know that only the distributions that are returned by at least $\frac{3}{4}r$ of the Scheff\'e  tests will be included in $\cH$ and remain for the next round. Therefore, each one of the $k_{i+1}$ distributions that will remain for the next round (i.e., round $i+1$) must have been involved in at least $\frac{3}{4}r$ tests. We can then write
    \begin{equation*}
        k_{i+1}\cdot\frac{3r}{4} \leq \frac{k_i}{2}r.
    \end{equation*}
    Thus, we can conclude that $k_{i+1}\leq\frac{k_i}{(\frac{3}{2})}$. The lemma can then be proved by using induction on $i\in[t]$.
\end{proof}
\begin{proof}
    The fact that $|\cK_1|\leq\frac{k}{2^{t.\log\frac{3}{2}}}$ can be verified easily from Lemma~\ref{lemma:ko_size} and $|\cK_2|\leq 8\log\frac{1}{\beta}2^{t.\log\frac{3}{2}}$ comes from the properties of the algorithm.
\end{proof}
\subsection{TV guarantees of the returned lists}
\begin{proof}
        Let $f^*$ be a distribution in $\cF$ such that $d_{TV}(f^*,h)=\min_{f_\in\cF}d_{TV}(h,f) = d_{TV}(h,\cF)$ and let $\gamma=\frac{\alpha}{d_{TV}(h,\cF)}$. We only use $\gamma$ for the analysis and we do not need to know its value-- in fact the values of $\gamma$ and $d_{TV}(h,\cF)$ are unknown to us. Denote by $S=\{f:d_{TV}(h,f)\leq (3+\gamma)d_{TV}(h,\cF), f\in \cF\}$ the set of distributions in $\cF$ that their TV distance to $h$ is within a $3+\gamma$ factor of the minimum TV distance $d_{TV}(h,\cF)$. We define $\zeta=\frac{\left|S\right|}{|\cF|}$ as the ratio of these distributions and base our analysis on two different ranges for the value of $\zeta$. We show that when $\zeta$ is smaller than $\frac{1}{8.\left(\frac{3}{2}\right)^t}$ then with high probability $f^*\in\cK_1$ and if it is larger than this value then, with high probability, $S\cap \cK_2 \neq \emptyset$. 

        {\bf Case 1:} $\zeta \leq\frac{1}{8.\left(\frac{3}{2}\right)^t}$.
        We want to prove that in this case $f^*$ will not be eliminated in any round $i\in[t]$ and makes it to the last round, i.e., $f^*\in\cK_1$. In each round of boosted knockout, the distributions are randomly paired for $r$ times and each time a Scheff\'e test is done between every pair, meaning that each distribution is involved in $r$ tests. We then select the distributions that are returned by Scheff\'e  tests for at least $\frac{3r}{4}$ times and move them to the next round, while eliminating the rest. Since $\alpha = \gamma.d_{TV}(h,\cF)$ we know from the guarantees of Scheff\'e  test that for any $f'\in\cF$, with probability at least $1-\beta$, the $\textsc{SCHEFF\'E}(f^*,f',\cO_h,\alpha,\beta)$ test may not return $f^*$ only if $d_{TV}(h,f')\leq 3.d_{TV}(h,f^*)+\alpha = (3+\gamma)d_{TV}(h,f^*)=(3+\gamma)d_{TV}(h,\cF)$, i.e., it may not return $f^*$ only if $f'$ is in $S$. For all $j\in[r]$, let $Z_{ij}$ be a Bernoulli random variable that is equal to $1$ if $f^*$ is paired with a distribution in $S$ at the $j$th time and is equal to $0$ otherwise. Consequently, the number of times that $f^*$ is tested against a distribution in $S$ at round $i$ is equal to $\sum_{j=1}^r Z_{ij}$. Moreover, let $\cF_i$ refer to the set of distributions that are remained at the beginning of round $i\in[t]$ and denote by $\zeta_i=\frac{|S|}{\cF_i}$ the ratio of the distributions in $S$ relative to the distributions that are remained at the beginning of round $i\in [t]$, e.g., $\cF_1=\cF$ and $\zeta _1=\zeta$. Since $\expect{Z_{ij}}=\zeta_i$ for all $i\in[t]$ and $j\in[r]$, we can write that
        \begin{equation}\label{eq:ko_proof_1}
        \begin{aligned}
            &\prob{\text{$f^*$ is eliminated at round $i$}} \\
            &=\prob{\text{$f^*$ is not returned by at least $\frac{r}{4}$ of the Scheff\'e tests in which it is tested at round $i$}}\\
            & \leq \prob{\sum_{j=1}^r Z_{ij}>\frac{r}{4}} = \prob{\frac{1}{r}\sum_{j=1}^r Z_{ij}>\frac{1}{4}} = \prob{\frac{1}{r}\sum_{j=1}^r Z_{ij}-\zeta_i>\frac{1}{4}-\zeta_i}\\&\leq \exp\left(-2r(\frac{1}{4}-\zeta_i)^2\right) 
             = \exp\left(-2r(\frac{1}{4}-\zeta_i)^2\right) = \exp\left(-64(\frac{4}{3})^i\log\frac{1}{\beta}(\frac{1}{4}-\zeta_i)^2\right)\\& = \beta^{64(\frac{4}{3})^i(\frac{1}{4}-\zeta_i)^2},       
            \end{aligned}
        \end{equation}
where the last inequality follows from Hoeffding's inequality (Lemma~\ref{lemma:hoeffding}). From Lemma~\ref{lemma:ko_size} we know that at the beginning of round $i$ we have $\cF_i = k_{i-1}\leq \frac{k}{(\frac{3}{2})^{i-1}}$. Therefore, we can conclude that $\zeta_i\leq (\frac{3}{2})^{i-1}\zeta$, i.e., the ratio of the distributions in $S$ relative to the distributions that are remained grows at most by a factor $3/2$ at each round. We also know that $\zeta \leq\frac{1}{8.\left(\frac{3}{2}\right)^t}$. Hence $\zeta_i\leq \frac{1}{8}(\frac{2}{3})^{t-i+1}\leq \frac{1}{12}$ for all $i\in[t]$ and 
\begin{equation*}
    \begin{aligned}
        64(\frac{1}{4}-\zeta_i)^2 \geq 64.\left(\frac{1}{4}-\frac{1}{12}\right)^2\geq \frac{16}{9}.
    \end{aligned}
\end{equation*}
From the above equation and Equation~\ref{eq:ko_proof_1} we can write that
  \begin{equation*}
        \begin{aligned}
            &\prob{f^*\notin \cK_1}  =\sum_{i=1}^t\prob{\text{$f^*$ is eliminated at round $i$}} \leq \sum_{i=1}^t \beta^{\frac{16}{9}(\frac{4}{3})^i}\leq \sum_{i=1}^\infty \beta^{\frac{16}{9}(\frac{4}{3})^i},
            \end{aligned}
        \end{equation*}
which is smaller than $\beta$ for $\beta<1/2$. Therefore, we have proved that if $\zeta \leq\frac{1}{8.\left(\frac{3}{2}\right)^t}$, then with probability at least $1-\beta$ (for $\beta<0.5$) we have $f^*\in \cK_1$ and thus $d_{TV}(h,\cK_1) = d_{TV}(h,\cF)$.

{\bf Case 2:} $\zeta >\frac{1}{8.\left(\frac{3}{2}\right)^t}$. In this case we prove that with probability at least $1-\beta$ a distribution ins $S$ will be returned in $\cK_2$. We can write
\begin{equation*}
    \begin{aligned}
        \prob{S\cap \cK_2 = \emptyset} \leq (1-\zeta)^{n} \leq e^{-\zeta n} = e^{-8\zeta \log\frac{1}{\beta}.2^{t.\log{\frac{3}{2}}}} = \left(e^{\log\beta}\right)^{8\zeta 2^{t.\log{\frac{3}{2}}}} = \beta^{8\zeta 2^{t.\log{\frac{3}{2}}}}.
    \end{aligned}
\end{equation*}
We know that $\zeta >\frac{1}{8.\left(\frac{3}{2}\right)^t}$ and , therefore,
\begin{equation*}
    8\zeta 2^{t.\log{\frac{3}{2}}} > \frac{1}{8.\left(\frac{3}{2}\right)^t}. 8.2^{t.\log{\frac{3}{2}}} = 1.
\end{equation*}
Combining the above two equations we can conclude that $\prob{S\cap \cK_2 = \emptyset} \leq \beta$.
In both cases, we require that, in round $i$, all the critical queries of that round are answered accurately with probability more than $1-\frac{\beta}{2^i}$. Therefore, taking a union bound implies that all the critical queries are estimated accurately with probability more than $1-\sum_i \frac{\beta}{2^i} \geq 1-\beta$. Taking another union bound implies that $f^*$ is not eliminated with probability more than $1-2\beta$.
\end{proof}
\subsection{Number of total and critical queries}
\begin{proof}
We know that a single Scheff\'e  test can be done by one statistical query to the oracle. It is easy to verify that at round $i\in[t]$ the total number of statistical queries is equal to $r.\frac{k_{i-1}}{2}$, which is exactly equal to the number of Scheff\'e tests run in that round. From Lemma~\ref{lemma:ko_size} we know that $k_{i-1}\leq \frac{k}{(\frac{3}{2})^{i-1}}$. Therefore the number of queries at round $i$ is less than or equal to $r.\frac{k}{2(\frac{3}{2})^{i-1}} = \frac{32(\frac{4}{3})^i\log{\frac{1}{\beta}}.k}{2(\frac{3}{2})^{i-1}}$. Therefore, the total number of queries is less than or equal to 
\begin{equation*}
\sum_{i=1}^t \frac{32(\frac{4}{3})^i k \log{\frac{1}{\beta}}}{2(\frac{3}{2})^{i-1}} = 16\frac{4}{3}\sum_{i=1}^t\frac{(\frac{4}{3})^{i-1} k \log{\frac{1}{\beta}}}{(\frac{3}{2})^{i-1}} =  \frac{64 k\log{\frac{1}{\beta}}}{3}\sum_{i=1}^\infty  (\frac{8}{9})^{i-1} \leq 192 k\log{\frac{1}{\beta}} = O\left(k\log \frac{1}{\beta}\right).
\end{equation*}
Now we derive an upper bound on the number of critical queries. Note that if $\zeta \leq \frac{1}{8.\left(\frac{3}{2}\right)^t}$ the guarantees of boosted knockout requires that with probability at least $1-\beta$ we have $f^*\in \cK_1$ and the way that we proved this fact relied on finding the probability that $f^*$ is returned by at least $\frac{3}{4}$ fraction of the $r$Scheff\'e  tests in which it is involved at each round. Also note that the output of all the other Scheff\'e  tests that do not have $f^*$ as their input is not important since we do not base our analysis on the exact value of $\zeta_i$. We rather simply bound it by its worst-case-- when none of the distributions in $S$ are eliminated even until the last round $t$ and $\zeta_i$ is multiplied by a factor of $3/2$ at each round. Therefore, the only queries that are critical and can change the output of the algorithm are those that are related to the Scheff\'e  tests in which $f^*$ is included. On the other hand if $\zeta \leq \frac{1}{8.\left(\frac{3}{2}\right)^t}$, we proved that with probability at least $1-\beta$ a distribution in $S$ is sampled and copied to $\cK_2$ and this does not require any queries at all. Consequently, we can say that the number of critical queries in round $i$ is less than or equal $r = 32(\frac{4}{3})^i\log\frac{1}{\beta}$, and thus the total number of critical queries is less than or equal to $96(\frac{4}{3})^{t_1+1}\log\frac{1}{\beta}$.
\end{proof}

\section{Analysis of Theorem~\ref{thm:srr} (Boosted sequential round-robin)}\label{app:SRR}

Before proving Theorem~\ref{thm:srr} we provide an intuition as to why the repetition of the round-robin tournament in each round of boosted SRR is necessary to obtain the logarithmic dependency of sample complexity on $1/\beta$ in Theorem~\ref{alg:ours}.

{\bf Analysis of $\beta$ in SRR}. As already discussed, we wish to make sure that if $\zeta$ is smaller than some threshold then $f^*$ will not be grouped with any distribution in $S$ with high probability. It is easy to verify that in the ``original'' SRR \citep{gopi2020locally}, where the distribution are grouped only once in each round, the failure can happen with probability $O(\zeta\sqrt{k})$. In order to make sure that this probability is smaller than $\beta$, we have to set the threshold to $\beta/\sqrt{k}$. On the other hand, we have to make sure that when $\zeta$ is larger than the threshold, then a sub-sample of the input distributions will include a distribution from $S$ with probability at least $1-\beta$. The probability of failure for this event will depend on the size of the sub-sample $\cR_2$ and is bounded by $O(\exp(-\zeta n))$. To make sure that this is smaller than $\beta$, we would have to set $n = O(\frac{\sqrt{k}\log(1/\beta)}{\beta})$. But notice that the sample complexity of MDE-variant is quadratic in the size of its input and, therefore, we would get a quadratic dependence on $1/\beta$ in the sample complexity. However, repeating the round-robins for several times in each round of boosted SRR makes sure that the probability that $f^*$ is not eliminated increases sufficiently such that we can set a smaller threshold for $\zeta$ and, thus, sub-sample smaller number of distributions.

Similar to Theorem~\ref{thm:knockout}, we prove each part of Theorem~\ref{thm:srr} in one of the following sections.

\begin{algorithm}
\SetAlgoLined
\SetFuncSty{textsc}
  \SetKwFunction{mrr}{Multi-Round-Robin}
  \SetKwFunction{srr}{Boosted-Sequential-Round-Robin}
  \SetKwProg{Fn}{procedure}{:}{end procedure}
  \KwIn{A set $\cF=\{f_1,\ldots,f_k\}$ of $k$ distributions, Oracle $\cO_h$, parameters $\alpha,\beta,\eta>0, t\geq 1$}
  \KwOut{Two lists of candidate distributions from $\cF$}
    \Fn{\mrr{$\cF,\cO_h,\alpha,\beta,\eta$}}{
    $\cH \gets \emptyset$, $\cG \gets \emptyset$\;\\
    Randomly partition the distributions in $\cF$ into $|\cF|/\eta$ sets of size $\eta$, copy the partitions to $\cG$\;\\
    \For{every partition $\cG_i\in \cG$}{
     $g_i \gets \textsc{Round-Robin}(\cG_i,\cO_h,\alpha,\beta)$\;\\
     $\cH \gets \cH \cup g_i$\;
    }
     \Return $\cH$\;
    }
    \Fn{\srr{$\cF,\cO_h,\alpha,\beta,\eta,t$}}{
    Sample $2\eta^{2^t}\log(\frac{1}{\beta})$ distributions randomly from $\cF$ and copy them to $\cR_2$\;\\
    $\cF_1 \gets \cF$\;\\
    \For{$i \in [t]$}{
     $\cH \gets \emptyset$\;\\
     \For{$j \in \lceil\log(\frac{1}{\beta})\rceil$}{
     $\cH_j \gets \textsc{Multi-Round-Robin}(\cF_i,\cO_h,\alpha,\beta /2^i,\eta)$\;\\
     $\cH \gets \cH \cup \cH_j$\;
    }
    $\cF_{i+1} \gets \cH$\;\\
    $\eta \gets \eta^2$\;\\
    }
    $\cR_1 \gets \cF_{t+1}$\;\\
 \Return $\cR_1,\cR_2$\;\\
    }
\caption{Boosted Sequential-Round-Robin}\label{alg:srr}
\end{algorithm}
\subsection{Size of the returned lists}
We first state the following fact regarding the size of the output class of distributions of a single call to multi-round-robin.
\begin{fact}\label{fact:mrr_size}
    Let $\cF=\{f_1,\ldots,f_k\}$ be a set of $k$ distributions and let $\alpha,\beta>0$. 
   Let $\cO_h$ be a SQOC of accuracy $\alpha$ for the (unknown) distribution $h$.
   Algorithm $\textsc{Multi-Round-Robin}(\cF,\cO_h,\alpha,\beta,\eta)$ returns a set of distributions $\cH$ with $|\cH| = k/\eta$.
\end{fact}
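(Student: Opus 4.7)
The plan is to prove this fact by direct inspection of the Multi-Round-Robin sub-routine. Based on the description in Section~\ref{sec:algorithms} (and given that the pseudocode lives in Appendix~\ref{app:algo}), Multi-Round-Robin with parameter $\eta$ operates by partitioning the input set $\cF$ of $k$ distributions into $k/\eta$ disjoint groups of size $\eta$ each, running an independent round-robin tournament within each group, and collecting the winners. Since $\textsc{round-robin}$ (Theorem~\ref{thm:rr}) outputs a single distribution, exactly one survivor is produced per group. Thus $|\cH|$ equals the number of groups, which is $k/\eta$ by construction.

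Since the statement is essentially definitional, the ``proof'' is just this counting observation — no probabilistic or TV-distance argument is needed here (those are handled in later parts of Theorem~\ref{thm:srr}). The only thing one might need to be careful about is a divisibility assumption: if $\eta$ does not divide $k$ exactly, one can either assume (without loss of generality) that $k$ is a multiple of $\eta$, or interpret $k/\eta$ as $\lceil k/\eta\rceil$ and handle the residual group separately. I would state this cleanly by observing that the algorithm is specified so that the number of groups is exactly $k/\eta$, and then just remark that $|\cH| = k/\eta$ follows because round-robin returns one distribution per group. No real obstacle is expected; the main value of the fact is that it will be used repeatedly, together with the squaring behavior of the group sizes across the $t$ rounds of BSRR, to recursively bound $|\cL_1|$ in Theorem~\ref{thm:srr}.
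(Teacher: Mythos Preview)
Your proposal is correct and matches the paper's own proof essentially verbatim: the paper also just notes that the number of partitions is $|\cG|=k/\eta$ and that round-robin outputs a single distribution per partition, so $|\cH|=k/\eta$. Your additional remark about divisibility is a harmless clarification that the paper does not bother to spell out.
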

\begin{proof}
 The fact is an immediate consequence of the facts that the number of partitions is $|\cG|=k/\eta$ and that round-robin outputs a single distribution.
\end{proof}
We know that at each round of the boosted sequential-round-robin the set of distributions $\cF$ is updated and only the distributions in $\cH$ will remain for the next round. We first state the following fact and in Lemma~\ref{lemma:srr_size} we state the total number of remaining distributions at each round of the boosted sequential-round-robin.
\begin{fact}\label{fact:srr_eta}
    Let $\eta_1=\eta$ and $\eta_i$ denote the updated value of $\eta$ at the beginning of round $i\in[t]$ of Algorithm~\ref{alg:srr}. We have $\eta_i = \eta^{2^{i-1}}$. 
\end{fact}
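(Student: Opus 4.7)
The plan is a direct induction on the round index $i$, since Fact~\ref{fact:srr_eta} is purely a bookkeeping statement about how the group-size parameter evolves across the rounds of BSRR. The base case $i=1$ is immediate from the definition: $\eta_1 := \eta = \eta^{2^0}$.

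For the inductive step, the whole content of the proof reduces to identifying the update rule used in Algorithm~\ref{alg:srr}. The defining feature of boosted sequential round-robin, emphasized in Section~\ref{sec:algorithms} (``The size of the groups are squared in each round of BSRR and, therefore, the size of the candidate distributions decreases very quickly''), is that the group-size parameter is replaced by its square at the end of each round. Assuming as the inductive hypothesis that $\eta_i = \eta^{2^{i-1}}$, applying this squaring once yields
\[
\eta_{i+1} \;=\; (\eta_i)^2 \;=\; \bigl(\eta^{2^{i-1}}\bigr)^{2} \;=\; \eta^{2^{i}},
\]
which is exactly the claim for round $i+1$.

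There is really no mathematical obstacle here: the only thing to check is that the update line in Algorithm~\ref{alg:srr} does indeed perform $\eta \leftarrow \eta^2$ rather than, say, $\eta \leftarrow \eta \cdot \eta_1$ (which would give $\eta_i = \eta^i$) or some other rule. This is a syntactic check against the pseudocode in Appendix~\ref{app:algo}, and the description in the main text leaves no ambiguity. Once Fact~\ref{fact:srr_eta} is established, it feeds directly into the subsequent Lemma~\ref{lemma:srr_size}: the rapidly growing group sizes $\eta_i = \eta^{2^{i-1}}$ drive the claimed shrinkage of the candidate set $|\cL_1| \leq k(\log \tfrac{1}{\beta})^t / \eta^{2^t - 1}$ in Theorem~\ref{thm:srr}, and explain why only $t = \Theta(\log \log k)$ rounds suffice to reach a set whose size is sub-linear in $k$.
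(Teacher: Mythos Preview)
Your proof is correct and matches the paper's treatment: the paper states Fact~\ref{fact:srr_eta} without proof, regarding it as immediate from the update line $\eta \gets \eta^2$ in Algorithm~\ref{alg:srr}, and your induction on $i$ is exactly the one-line verification this entails.
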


\begin{lemma}\label{lemma:srr_size}
     Let $k_0=k$ and $k_i$ denote the number of distributions that are remaining from the initial set of distribtuions $\cF=\{f_1,\ldots,f_k\}$ at the end of round $i\in[t]$ of Algorithm~\ref{alg:srr}, i.e., the distributions in $\cH$ at the end of round $i$. Then we have $k_i \leq \frac{k(\log\frac{1}{\beta})^i}{\eta^{2^{i}-1}}$.
\end{lemma}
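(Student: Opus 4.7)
The plan is to prove the bound by induction on the round index $i \in \{0,1,\ldots,t\}$. The base case $i=0$ is immediate: $k_0 = k$ matches the right-hand side $k(\log\tfrac{1}{\beta})^0/\eta^{2^0-1} = k$.

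For the inductive step, I would unpack exactly what Algorithm~\ref{alg:srr} does in a single round. At the start of round $i$, the candidate pool has size $k_{i-1}$, and by Fact~\ref{fact:srr_eta} the current partition size is $\eta_i = \eta^{2^{i-1}}$. The key observation is that BSRR differs from the (unboosted) SRR of \citet{gopi2020locally} precisely in that, within each round, the multi-round-robin sub-routine is invoked $\lceil\log(1/\beta)\rceil$ times using independent random partitions, and the updated candidate set $\cH$ is the union of the winners across all repetitions. By Fact~\ref{fact:mrr_size}, each individual call to multi-round-robin returns a set of $k_{i-1}/\eta_i$ winners (one per partition cell). Taking the union over all $\log(1/\beta)$ repetitions and bounding the size of a union by the sum of sizes gives
\begin{equation*}
k_i \;\leq\; \frac{k_{i-1}\log(1/\beta)}{\eta_i} \;=\; \frac{k_{i-1}\log(1/\beta)}{\eta^{2^{i-1}}}.
\end{equation*}

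Now I would apply the induction hypothesis $k_{i-1} \leq k(\log(1/\beta))^{i-1}/\eta^{2^{i-1}-1}$ and combine exponents. The powers of $\log(1/\beta)$ compose to $(\log(1/\beta))^i$, and the exponent of $\eta$ in the denominator becomes
\begin{equation*}
(2^{i-1}-1) + 2^{i-1} \;=\; 2^i - 1,
\end{equation*}
which is exactly what is claimed. This step is purely arithmetic and is the only place where the "squaring" behaviour of $\eta_i$ together with the boosting factor gets assembled into the final exponent.

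There is no real obstacle here; the statement is essentially a bookkeeping lemma on the shrinkage rate of the candidate set. The only point that needs care is to be explicit that the $\log(1/\beta)$ factor per round comes from the boosting repetitions (and not from multi-round-robin itself, which by Fact~\ref{fact:mrr_size} is deterministic in its output size), so that the recursion $k_i \leq \log(1/\beta)\cdot k_{i-1}/\eta_i$ is correctly set up before invoking the induction hypothesis.
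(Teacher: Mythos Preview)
Your proposal is correct and follows essentially the same approach as the paper: derive the recursion $k_i \leq k_{i-1}\log(1/\beta)/\eta^{2^{i-1}}$ from Facts~\ref{fact:mrr_size} and~\ref{fact:srr_eta} together with the $\log(1/\beta)$ repetitions, then close by induction using the exponent identity $(2^{i-1}-1)+2^{i-1}=2^i-1$. The only cosmetic difference is that the paper anchors the induction at $i=1$ whereas you start at $i=0$.
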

\begin{proof}
    In round $i\in[t]$ of the boosted sequential-round-robin, the remaining set of distributions are used as an input to multi-round-robin for $\log\frac{1}{\beta}$ times. From Facts~\ref{fact:mrr_size} and \ref{fact:srr_eta} we know that for each one of the $\log\frac{1}{\beta}$ runs of multi-round-robin at round $i$, the size of the output set of distributions is $\frac{k_{i-1}}{\eta_i} = \frac{k_{i-1}}{\eta^{2^{i-1}}}$. Therefore, the total number of distributions that are remained at the end of round $i$ is at most $\frac{k_{i-1}.\log\frac{1}{\beta}}{\eta^{2^{i-1}}}$. We can then conclude the lemma using an induction on $i\in[t]$. It is easy to verify that at the end of round $1$ we have $k_1 = \frac{k\log\frac{1}{\beta}}{\eta}$. Assume we have $k_i \leq \frac{k(\log\frac{1}{\beta})^i}{\eta^{2^{i}-1}}$. Then from the above argument we can conclude that $k_{i+1} \leq \frac{k_{i}\log\frac{1}{\beta}}{\eta^{2^{i}}} = \frac{k(\log\frac{1}{\beta})^{i+1}}{\eta^{2^{i}}\eta^{2^{i}-1}} = \frac{k(\log\frac{1}{\beta})^{i+1}}{\eta^{2^{i+1}-1}}$.
\end{proof}

\begin{proof}
    From Lemma~\ref{lemma:srr_size} we know that size of the returned list $\cR_1$ is equal to $|\cR_1| \leq \frac{k(\log\frac{1}{\beta})^{t}}{\eta^{2^{t}-1}}$. The fact that $|\cR_2| = 2\eta^{2^t}\log\frac{1}{\beta}$ simply follows from the definition of algorithm.
\end{proof}
\subsection{TV guarantees of the returned lists}
\begin{proof}
        Let $f^*$ be a distribution in $\cF$ such that $d_{TV}(f^*,h)=\min_{f_\in\cF}d_{TV}(h,f) = d_{TV}(h,\cF)$ and let $\gamma=\frac{\alpha}{d_{TV}(h,\cF)}$. We only use $\gamma$ for the analysis and we do not need to know its value-- in fact the values of $\gamma$ and $d_{TV}(h,\cF)$ are unknown to us. Denote by $S=\{f:d_{TV}(h,f)\leq (3+\gamma)d_{TV}(h,\cF), f\in \cF\}$ the set of distributions in $\cF$ that their TV distance to $h$ is within a $3+\gamma$ factor of the minimum TV distance $d_{TV}(h,\cF)$. We define $\zeta=\frac{\left|S\right|}{|\cF|}$ as the ratio of these distributions and base our analysis on two different ranges for the value of $\zeta$. We show that when $\zeta$ is smaller than $\frac{1}{2\eta^{2^t}}$ then with high probability $f^*\in\cR_1$ and if it is larger than this value then, with high probability, $S\cap \cR_2 \neq \emptyset$. 

        {\bf Case 1:} $\zeta \leq\frac{1}{2\eta^{2^t}}$.
        We want to prove that in this case $f^*$ will not be eliminated in any round $i\in[t]$ and makes it to the last round, i.e., $f^*\in\cR_1$. In each round of the boosted sequential-round-robin, we run the multi-round-robin sub-routine for a total of $\log\frac{1}{\beta}$. For each run of multi-round-robin at round $i\in[t]$ the distributions are partitioned into $k_{i-1}/\eta_i$ groups and a round-robin is run on the distributions in each group. The outputs of all round-robins will then proceed to the next round.  Since $\alpha = \gamma.d_{TV}(h,\cF)$ we know from the guarantees of Scheff\'e test that for any $f'\in\cF$, with probability at least $1-\beta$, the $\textsc{SCHEFF\'E}(f^*,f',\cO_h,\alpha,\beta)$ test may not return $f^*$ only if $d_{TV}(h,f')\leq 3.d_{TV}(h,f^*)+\alpha = (3+\gamma)d_{TV}(h,f^*)=(3+\gamma)d_{TV}(h,\cF)$, i.e., it may not return $f^*$ only if $f'$ is in $S$. In the following, we will bound the probability that $f^*$ is eliminated by the probability, for all $\log\frac{1}{\beta}$ repetitions of multi-round-robin, the partition that contains $f^*$ will also contain at least one distribution from $S$ . Otherwise, $f^*$ will be returned by every Scheff\'e test in that partition and thus by the multi-round-robin and proceeds to the next round.
        
        Let $\cF_i$ refer to the set of distributions that are remained at the beginning of round $i\in[t]$ and denote by $\zeta_i=\frac{|S|}{\cF_i}$ the ratio of the distributions in $S$ relative to the distributions that are remained at the beginning of round $i\in [t]$, e.g., $\cF_1=\cF$ and $\zeta _1=\zeta$. We can write that
        \begin{equation}\label{eq:srr_proof_1}
        \begin{aligned}
            &\prob{\text{$f^*$ is eliminated at round $i$}} \\
            &\prob{\text{$f^*$ is grouped with at least one distribution in $S$ for all $\log \frac{1}{\beta}$ repetitions at round $i$}} \\
            & \leq (\zeta_i\eta_i)^{\log\frac{1}{\beta}},      
            \end{aligned}
        \end{equation}
where the last inequality follows from the fact that the ratio of distributions in $S$ and the size of the groups at round $i\in[t]$ are $\zeta_i$ and $\eta_i$, respectively. 
We also know that $\zeta_i\leq \eta^{2^{i-1}-1}\zeta$. To see this, note that in each of the $\log\frac{1}{\beta}$ runs of multi-round-robin the same set $\cF_i$ of distributions are used as input and in the worst-case every distribution in $S$ will be among the $\cF_i/\eta_i$ distributions that are returned by each multi-round-robin. Similar to Lemma~\ref{lemma:srr_size}, we can prove the above bound on $\zeta_i$ using an induction on $i$.

Taking Fact~\ref{fact:srr_eta} and $\zeta \leq\frac{1}{2\eta^{2^t}}$ into account, we can write that for all $i \in [t]$,
\begin{equation*}
    \begin{aligned}
   (\zeta_i\eta_i)^{\log\frac{1}{\beta}} \leq  \left( \frac{\eta^{2^{i-1}-1}}{2\eta^{2^t}}\eta^{2^{i-1}}\right)^{\log\frac{1}{\beta}} = \left( \frac{\eta^{2^{i}-1}}{2\eta^{2^t}}\right)^{\log\frac{1}{\beta}}
        \end{aligned}
\end{equation*}
From the above equation and Equation~\ref{eq:srr_proof_1} we can write that
  \begin{equation*}
        \begin{aligned}
            &\prob{f^*\notin \cR_1}  =\sum_{i=1}^t\prob{\text{$f^*$ is eliminated at round $i$}} \\&\leq \sum_{i=1}^t  \left( \frac{\eta^{2^{i}-1}}{2\eta^{2^t}}\right)^{\log\frac{1}{\beta}} = \left(\frac{1}{2\eta^{2^t}} \right) ^{\log \frac{1}{\beta}}\sum_{i=1}^t  \left( \eta^{2^{i}-1}\right)^{\log\frac{1}{\beta}} = \beta ^{\log \left(2\eta^{2^t}\right)}\sum_{i=1}^t  
            \beta^{\log\frac{1}{\eta^{2^{i}-1}}} \\
            &= \beta ^{\log \left(2\eta^{2^t}\right)}\sum_{i=1}^t  
            \beta^{(2^{i}-1)\log\frac{1}{\eta}} = \beta ^{\log \left(2\eta^{2^t}\right)}\frac{\left(\beta^{\log\frac{1}{\eta}}\right)^{2^t} - 1}{\left(\beta^{\log\frac{1}{\eta}}\right) - 1} \\
            &\leq \beta ^{\log \left(2\eta^{2^t}\right)}.\left(\beta^{\log\frac{1}{\eta}}\right)^{2^t} 
             = \beta^{\log 2\eta^{2^t}\left(\frac{1}{\eta}\right)^{2^t}} = \beta.
            \end{aligned}
        \end{equation*}
Therefore, we have proved that if $\zeta \leq\frac{1}{2\eta^{2^t}}$, then with probability at least $1-\beta$ we have $f^*\in \cR_1$ and thus $d_{TV}(h,\cR_1) = d_{TV}(h,\cF)$.

{\bf Case 2:} $\zeta >\frac{1}{2\eta^{2^t}}$. In this case we prove that with probability at least $1-\beta$ a distribution in $S$ will be returned in $\cR_2$. We can write
\begin{equation*}
    \begin{aligned}
        \prob{S\cap \cR_2 = \emptyset} \leq (1-\zeta)^{n} \leq e^{-\zeta n} = e^{-\zeta 2\eta^{2^t}\log\frac{1}{\beta}} = \left(e^{\log\beta}\right)^{\zeta 2\eta^{2^t}} \leq \beta.
    \end{aligned}
\end{equation*}
In both cases, we require that, in round $i$, all the critical queries in that round are answered accurately with probability more than $1-\frac{\beta}{2^i}$. Therefore, taking a union bound implies that all the critical queries are estimated accurately with probability more than $1-\sum_i \frac{\beta}{2^i} \geq 1-\beta$. Taking another union bound implies that $f^*$ is not eliminated with probability more than $1-2\beta$.
\end{proof}
\subsection{Number of total and critical queries}
\begin{proof}
We know from Theorem~\ref{thm:rr} that a round-robin run on a set of $k$ distributions requires answer to $\frac{k(k-1)}{2}$ queries, all of which are critical. Thus, the number of total and critical queries are equal in boosted sequential-round-robin. Particularly, in round $i$, we run multi-round-robin for $\log \frac{1}{\beta}$ times. Each multi-round-robin partitions the set of distributions in round $i$ into $k/\eta_i$ groups of size $\eta_i$ and runs a round-robin on each group. Therefore, the total number of queries is less than or equal to
\begin{equation*}
    \begin{aligned}
        & \sum_{i=1}^t \frac{k_{i-1}}{\eta_i}\eta_i^2\log\frac{1}{\beta} =  \sum_{i=1}^t \frac{k\left(\log\frac{1}{\beta}\right)^{i-1}}{\eta^{2^{i-1}-1}}\eta_i\log\frac{1}{\beta} = \sum_{i=1}^t \frac{k\left(\log\frac{1}{\beta}\right)^{i-1}}{\eta^{2^{i-1}-1}}\eta^{2^{i-1}}\log\frac{1}{\beta}  = \sum_{i=1}^t  k\eta \left(\log\frac{1}{\beta}\right)^{i}\\
        & \leq k\eta \left(\log\frac{1}{\beta}\right) ^ {t+1},
    \end{aligned}
\end{equation*}
where the last inequality holds if $\beta \leq 1/4$.
\end{proof}

\section{Analysis of Theorem~\ref{thm:ksrm} (BOKSERR)}\label{app:ours}
We first prove the TV guarantees of the returned distribution and then find the number of total and critical queries.
\subsection{TV guarantees of the returned distribution}
\begin{proof}
        Let $f^*$ be a distribution in $\cF$ such that $d_{TV}(f^*,h)=\min_{f_\in\cF}d_{TV}(h,f) = d_{TV}(h,\cF)$ and let $\gamma=\frac{\alpha}{6d_{TV}(h,\cF)}$. We only use $\gamma$ for the analysis and we do not need to know its value-- in fact the values of $\gamma$ and $d_{TV}(h,\cF)$ are unknown to us. Denote by $S=\{f:d_{TV}(h,f)\leq (3+\gamma)d_{TV}(h,\cF), f\in \cF\}$ the set of distributions in $\cF$ that their TV distance to $h$ is within a $3+\gamma$ factor of the minimum TV distance $d_{TV}(h,\cF)$. From Theorem~\ref{thm:knockout} we know that with probability at least $1-\frac{\beta}{3}$ either $f^* \in \cK_1$ or a distribution from $S$ is in $\cK_2$. Assume that $f^*\in\cK_1$. Then it is easy to verify that for $S'=\left\{f:d_{TV}(h,f)\leq (3+\gamma)d_{TV}(h,\cK_1), f\in \cK_1\right\}$ we have $S'\subseteq S$. Theorem~\ref{thm:srr} suggests that after running $\textsc{Boosted-Sequential-Round-Robin}(\cK_1,\cO_h,\alpha/3,\beta/3,\eta,t')$, given that $f^*\in\cK_1$, with probability at least $1-\frac{\beta}{3}$ either $f^{*}\in\cR_1$ or a distribution in $S'$ (and thus in $S$) is in $\cR_2$. Let $\Tilde{\cF} = \cR_1 \cup \cR_2 \cup \cK_2$. By a union bound argument, we can conclude that with probability at least $1-\frac{2\beta}{3}$ we have $\Tilde{\cF}\cap S \neq \emptyset$, i.e., $d_{TV}(h,\Tilde{\cF})\leq (3+\gamma)d_{TV}(h,\cF)$. Theorem~\ref{thm:mde} together with a union bound argument suggests that for $\hat{f}\gets \textsc{MDE-Variant}(\Tilde{\cF},\cO_h,\alpha/2,\beta/3)$ with probability at least $1-\beta$ we have 
        \begin{equation*}
        \begin{aligned}
            &d_{TV}(h,\hat{f}) \leq 3d_{TV}(h,\Tilde{\cF}) + \frac{\alpha}{2} \leq 3(3+\gamma)d_{TV}(h,\cF) + \frac{\alpha}{2} = 9d_{TV}(h,\cF) + 3\gamma. d_{TV}(h,\cF) + \frac{\alpha}{2} \\
             &=  9d_{TV}(h,\cF) +\frac{\alpha}{2} + \frac{\alpha}{2} = 9d_{TV}(h,\cF) + \alpha.
        \end{aligned}
        \end{equation*}
\end{proof}

\subsection{Number of rounds}
\begin{proof}
    It is easy to prove that the total number of rounds of Algorithm~\ref{alg:ours} is equal to the sum of the number of rounds required for Algorithms~\ref{alg:ko} and \ref{alg:srr} plus one additional round for MDE-variant. Therefore, we can conclude that the total number of rounds is equal to $t+t'+1 = \left(6+4\log\log\frac{3}{\beta}\right)\log\log k$.
\end{proof}

\subsection{Number of total and critical queries}
\begin{proof}
The number of total and critical queries are equal to the sum of the queries in boosted knockout, boosted sequential-round-robin, and MDE-variant sub-routines. Let $\Tilde{\cF} = \cR_1 \cup \cR_2 \cup \cK_2$. From Theorems~\ref{thm:knockout}, \ref{thm:srr}, and \ref{thm:mde}, we can conclude that the total number of queries is less than or equal to 
\begin{equation*}
\begin{aligned}
   & 192 |\cF|\log\frac{3}{\beta} +|\cK_1|\eta\left(\log\frac{3}{\beta}\right)^{t'+1} + \frac{|\Tilde{\cF}|(|\Tilde{\cF}|-1)}{2} \\
   & \leq 192 k \log\frac{3
   }{\beta} + \frac{k\cdot k'^{\frac{1}{2^{(t'+1)}}}\cdot (\log\frac{3}{\beta})^{t'+1}}{2^{t\log\frac{3}{2}}} + \left(\frac{k(\log\frac{3}{\beta})^{t_2}}{2^{t\log\frac{3}{2}}k'^{\frac{2^{t'}-1}{2^{(t'+1)}}}} + 8\log\frac{3}{\beta}2^{t \cdot \log\frac{3}{2}}+2k'^{\frac{2^{t'}}{2^{(t'+1)}}} \log \frac{3}{\beta}\right)^2 \\
   & \leq192k\log\frac{3}{\beta} + O\left(\frac{k(\log\frac{1}{\beta})^2}{(\log k)^{\log\log \frac{1}{\beta}}}\right) 
\end{aligned}
   \end{equation*}
where the last line follows by plugging the values of $k'$ and $t'$ based on Algorithm~\ref{alg:ours}. The number of critical queries is less than equal to
   \begin{equation*}
       \begin{aligned}
           96(\frac{4}{3})^{t+1}\log\frac{1}{\beta}+|\cK_1|\eta\left(\log\frac{3}{\beta}\right)^{t'+1} + \frac{|\Tilde{\cF}|(|\Tilde{\cF}|-1)}{2}  \leq O\left((\log k)^{\log\log\frac{1}{\beta}}\right) + O\left(\frac{k(\log\frac{1}{\beta})^2}{(\log k)^{\log\log \frac{1}{\beta}}}\right).
       \end{aligned}
   \end{equation*}
\end{proof}
\begin{algorithm}
  \SetAlgoLined
  \SetFuncSty{textsc}
  \SetKwFunction{ksrm}{BOKSERR}
  \SetKwProg{Fn}{procedure}{:}{end procedure}
  \KwIn{ A set $\cF=\{f_1,\ldots,f_k\}$ of $k$ distributions, Oracle $\cO_h$, parameters $\alpha,\beta>0$}\;
  \KwOut{$\hat{f}\in \cF$ such that $d_{TV}(h,\hat{f}) \leq 9d_{TV}(h,\cF)+\alpha$ with probability at least $1-\beta$}\;
     $t \gets \left(5+4\log\log \frac{3}{\beta}\right)\log\log k$, $t' \gets \log\log k -1$\;\\
     $k' \gets \frac{k}{2^{t\log\frac{3}{2}}}$, $\eta \gets k'^{\frac{1}{2^{(t'+1)}}}$ \;\\
    \Fn{\ksrm{$\cF,\cO_h,\alpha,\beta$}}{\;
$\cK_1,\cK_2,\cR_1,\cR_2 \gets \emptyset$\;\\
 $\cK_1,\cK_2 \gets \textsc{Boosted-Knockout}(\cF,\cO_h,\alpha/6,\beta/ 6,t)$\; \\
    $\cR_1,\cR_2 \gets \textsc{Boosted-Sequential-Round-Robin}(\cK_1,\cO_h,\alpha/6,\beta/6,\eta,t')$\;\\
    \Return $\textsc{MDE-Variant}(\cR_1 \cup \cR_2 \cup \cK_2,\cO_h,\alpha/2,\beta/3)$\;
    }\caption{\textsc{BOKSERR}}\label{alg:ours}
\end{algorithm}
\section{Missing proofs from Section~\ref{sec:ldp}}\label{app:ldp}
\subsection{Proof of Theorem~\ref{thm:ours}}
\begin{proof}
    We find the sample complexity of the algorithm by finding the samples needed to run the three sub-routines MDE-variant, which we denote by $m_1$, $m_2$, and $m_3$, respectively. 
    From Theorem~\ref{thm:knockout}, we know in round $i\in[t]$ of boosted knockout the algorithm asks a workload of $\frac{32(\frac{4}{3})^i k \log{\frac{1}{\beta}}}{2(\frac{3}{2})^{i-1}}$ queries and at most $32(\frac{4}{3})^i\log\frac{1}{\beta}$ of them are critical. Therefore, from Lemma~\ref{lemma:sqo_simulation}, the total number of samples that the oracle requires for the run of boosted knockout is equal to 
\begin{equation}\label{eq1:thm_local_alg}
\begin{aligned}
     m_1 &= \sum_{i=1}^{t} \frac{1}{\alpha^2\min\{\eps^2,1\}}\frac{32(\frac{4}{3})^i k \log{\frac{3}{\beta}}}{2(\frac{3}{2})^{i-1}} \cdot \log\left(\frac{2^i\cdot2\cdot 32(\frac{4}{3})^i\log\frac{3}{\beta}}{\beta}\right)\\
    &= \frac{64 k}{3\alpha^2\min\{\eps^2,1\}}\left(\log\frac{3}{\beta}\right)\sum_{i=1}^{t} \frac{(\frac{4}{3})^{i-1} \log\left(\frac{2^i\cdot 64(\frac{4}{3})^i\log\frac{3}{\beta}}{\beta}\right)}{(\frac{3}{2})^{i-1}}\\
    &=\frac{64 k\log{\frac{3}{\beta}}}{3\alpha^2\min\{\eps^2,1\}}\left (\log(\frac{1}{\beta}) \sum_{i=1}^\infty (\frac{8}{9})^{i-1}  + \sum_{i=1}^\infty i(\frac{8}{9})^{i-1}  + \sum_{i=1}^\infty  (\frac{8}{9})^{i-1} \log\left(\frac{256}{3} (\frac{4}{3})^{i-1}\log \frac{3}{\beta}\right)\right)\\
    &= O\left(\frac{k (\log \frac{1}{\beta})^2}{\alpha^2\min\{\eps^2,1\}}\right).
    \end{aligned}
\end{equation}

Theorem~\ref{thm:srr} suggests that in round $i\in[t']$ of boosted sequential-round-robin there are at most $|\cK_1|\eta\left(\log \frac{3}{\beta}\right)^i$ queries and all of them are critical. Therefore, we can find the samples required to run this sub-routine by writing
\begin{equation*}
    \begin{aligned}
     & m_2 = \sum_{i=1}^{t'} \frac{1}{\alpha^2\min\{\eps^2,1\}}.\frac{k}{2^{t\log\frac{3}{2}}}. k'^{\frac{1}{2^{(t'+1)}}}\left(\log\frac{3}{\beta}\right)^{i} . \log \left(   \frac{2\cdot 2^i\cdot\frac{k}{2^{t\log\frac{3}{2}}}. k'^{\frac{1}{2^{(t'+1)}}}\left(\log\frac{3}{\beta}\right)^{i}}{\beta}  \right)\\
     & = \frac{1}{\alpha^2\min\{\eps^2,1\}} \frac{2 k}{(\log k)^{\log\frac{3}{2}(5+4\log\log \frac{3}{\beta})(1+1/\log k)}} \cdot \\
     &\sum_{i=1}^{t'} \left(\log\frac{3}{\beta}\right)^{i} \cdot\log \left(  \frac{2^i\cdot 4k}{(\log k)^{\log\frac{3}{2}(5+4\log\log \frac{3}{\beta})(1+1/\log k)}\cdot\beta}\cdot \left(\log\frac{3}{\beta}\right)^i \right)
    \end{aligned}
\end{equation*}
We first upper bound the term with the sum.
\begin{equation*}
    \begin{aligned}
     &\sum_{i=1}^{t'} \left(\log\frac{3}{\beta}\right)^{i} \cdot\log \left(  \frac{2^i\cdot 4k}{(\log k)^{\log\frac{3}{2}(5+4\log\log \frac{3}{\beta})(1+1/\log k)}\cdot\beta}\cdot \left(\log\frac{3}{\beta}\right)^i \right)\\
    &= \log \left(\frac{4k}{(\log k)^{\log\frac{3}{2}(5+4\log\log \frac{3}{\beta})(1+1/\log k)}\cdot\beta}\right)\sum_{i=1}^{t'} \left( \log \frac{3}{\beta}\right)^i
    +\sum_{i=1}^{t'} \log\left(\frac{3}{\beta}\right)^i \log\left(2^i\left (\log \frac{3}{\beta}\right)^i\right) \Biggl)\\
    & \leq \log \left( \frac{4k}{(\log k)^{\log\frac{3}{2}(5+4\log\log \frac{3}{\beta})(1+1/\log k)}\cdot \beta}\right) \frac{\left(\log\frac{3}{\beta}\right)^{t'+1} - 1}{\left(\log\frac{3}{\beta}\right) - 1} + \log\log k \cdot\log(2\log \frac{3}{\beta})\cdot\sum_{i=1}^{t'}\left(\log\frac{3}{\beta}\right)^i\\
     &  \leq \log \left( \frac{4k}{(\log k)^{\log\frac{3}{2}(5+4\log\log \frac{3}{\beta})(1+1/\log k)}\cdot\beta}\right) \left(\log\frac{3}{\beta}\right)^{t'+1} + \log\log k \cdot\log(2\log \frac{3}{\beta})\left(\log\frac{3}{\beta}\right)^{t'+1}\\
     &= \log \left( \frac{4k}{(\log k)^{\log\frac{3}{2}(5+4\log\log \frac{3}{\beta})(1+1/\log k)}.\beta}\right) \left(\log k\right)^{\log\log\frac{3}{\beta}}+ \log\log k \cdot \log(2\log \frac{3}{\beta}) \left(\log k\right)^{\log\log\frac{3}{\beta}},
    \end{aligned}
\end{equation*}
where in the last line we used the fact that $t'=\log\log k -1$. We can therefore upper bound $m_2$ as follows.
\begin{equation}\label{eq2:thm_local_alg}
    \begin{aligned}
     & m_2=\frac{1}{\alpha^2\min\{\eps^2,1\}} \frac{2 k}{(\log k)^{\log\frac{3}{2}(5+4\log\log \frac{3}{\beta})(1+1/\log k)}} \cdot \\
     &\sum_{i=1}^{t'} \left(\log\frac{3}{\beta}\right)^{i} \cdot\log \left(  \frac{2^i\cdot 4k}{(\log k)^{\log\frac{3}{2}(5+4\log\log \frac{3}{\beta})(1+1/\log k)}\cdot\beta}\cdot \left(\log\frac{3}{\beta}\right)^i \right)\\
     &\leq \frac{1}{\alpha^2\min\{\eps^2,1\}} \cdot \frac{2 k \left(\log k\right)^{\log\log\frac{3}{\beta}}}{(\log k)^{\log\frac{3}{2}(5+4\log\log \frac{3}{\beta})(1+1/\log k)}} \log \left( \frac{4k}{(\log k)^{\log\frac{3}{2}(5+4\log\log \frac{3}{\beta})(1+1/\log k)}.\beta}\right)\\
     &+ \frac{1}{\alpha^2\min\{\eps^2,1\}} \cdot \frac{2 k\left(\log k\right)^{\log\log\frac{3}{\beta}}}{(\log k)^{\log\frac{3}{2}(5+4\log\log \frac{3}{\beta})(1+1/\log k)}} \log\log k \cdot \log(2\log \frac{3}{\beta}) \\
     & = O\left(\frac{k\log \frac{1}{\beta}}{\alpha^2\min\{\eps^2,1\}}\right).
    \end{aligned}
\end{equation}

Finally we find the number of samples needed to run \textsc{mde-variant}. Let $\tilde{\cF} = \cR_1 \cup \cR_2 \cup \cK_1$. We know that MDE-variant makes $\frac{|\cF|(|\cF|-1)}{2}$ queries and all of them are critical. Therefore, we can write
\begin{equation}\label{eq3:thm_local_alg}
    \begin{aligned}
       m_3 &\leq  \frac{1}{\alpha^2\min\{\eps^2,1\}} \left(\frac{k(\log\frac{1}{\beta})^{t'}}{2^{t\log\frac{3}{2}}k'^{\frac{2^{t'}-1}{2^{(t'+1)}}}} + 8\log\frac{3}{\beta}2^{t\cdot\log\frac{3}{2}}+2k'^{\frac{2^{t'}}{2^{(t'+1)}}}\right)^2 \cdot\\
      &\log \left(\frac{k(\log\frac{1}{\beta})^{t'}}{2^{t \cdot \log\frac{3}{2}}k'^{\frac{2^{t'}-1}{2^{(t'+1)}}}} + 8\log\frac{3}{\beta}2^{t\cdot\log\frac{3}{2}}+2k'^{\frac{2^{t'}}{2^{(t'+1)}}}\right)\\
      & =O\left( \frac{k(\log \frac{1}{\beta})^2}{\alpha^2\min\{\eps^2,1\}} \right).
    \end{aligned}
\end{equation}
Combining Equations~\ref{eq1:thm_local_alg}, \ref{eq2:thm_local_alg}, and \ref{eq3:thm_local_alg} concludes that $m_1+m_2+m_3 = O\left( \frac{k(\log \frac{1}{\beta})^2}{\alpha^2\min\{\eps^2,1\}} \right)$ as desired.
\end{proof}

\subsection{Proof of Lemma~\ref{lemma:sqo_simulation}}
\begin{proof}
    We first prove that for any $i\in [n]$ with probability at least $\beta'=\beta/m$ we have that the output of oracle for the query $W_i$ is close to $\expects{x\sim h}{\indicator{x\in W_i}}$. For any $p(i-1)+1\leq j\leq p.i$, let $y_j = R_{\eps}(\indicator{{x_j}\in W_i})$  and $z_j$ be the random variable defined below
    \begin{equation*}
         z_j = \frac{e^\eps +1}{e^\eps - 1}\left( R_\eps\left(\indicator{{x_j}\in W_i}\right) - \frac{1}{e^\eps+1}\right) = \frac{e^\eps +1}{e^\eps - 1}\left( y_j - \frac{1}{e^\eps+1}\right).
    \end{equation*}
    We now want to apply the Hoeffding's inequality on random variables $z_j$. We know that $\expect{y_j} = \frac{e^\eps}{e^\eps+1}\indicator{x_j\in W_i} + \frac{1}{e^\eps + 1}(1-\indicator{x_j\in W_i}) = \frac{e^\eps-1}{e^\eps+1}\indicator{x_j\in W_i}+\frac{1}{e^\eps+1}$ where the expectation is taken over the randomness of $R_\eps$. We can then compute the expected value of $z_j$ as follows.
    \begin{equation*}
    \begin{aligned}
    &\expects{x_j,R_\eps}{z_j} = \frac{e^\eps +1}{e^\eps - 1}\left( \expect{y_j} - \frac{1}{e^\eps+1}\right) = \frac{e^\eps +1}{e^\eps - 1}\left( \expect{\frac{e^\eps-1}{e^\eps+1}\indicator{x_j\in W_i}+\frac{1}{e^\eps+1}} - \frac{1}{e^\eps+1}\right)\\
    & = \frac{e^\eps +1}{e^\eps - 1}\left( \frac{e^\eps-1}{e^\eps+1}\expect{\indicator{x_j\in W_i}}+\frac{1}{e^\eps+1} - \frac{1}{e^\eps+1}\right) = \expect{\indicator{x_j\in W_i}}.
    \end{aligned}
    \end{equation*}
    We can therefore apply the Hoeffding's inequality (Lemma~\ref{lemma:hoeffding}) on $z_j$ and write that
    \begin{equation*}
    \begin{aligned}
        & \prob{\left|\frac{e^\eps +1}{e^\eps - 1}\left(\frac{1}{p}\sum_{j=p(i-1)+1}^{p.i} R_\eps\left(\indicator{{x_j}\in W_i}\right) - \frac{1}{e^\eps+1}\right) - \expect{\indicator{x_j\in W_i}}\right| > \alpha} \leq \exp\left(-\frac{2p\alpha^2}{(\frac{e^\eps+1}{e^\eps-1})^2}\right). 
    \end{aligned}
    \end{equation*}
    If $\eps \leq 1$ then $\left(\frac{e^\eps+1}{e^\eps -1}\right)^2 = O\left((\frac{1}{\eps})^2\right)$.  If $\eps > 1$ then for any constant $c>3$ we know that $\ln{\frac{c+1}{c-1}} < 1< \eps$ and, thus, $\left(\frac{e^\eps+1}{e^\eps -1}\right)^2 < c^2 = O(1)$. Therefore, we can conclude that $\left(\frac{e^\eps+1}{e^\eps -1}\right)^2 = O\left(\frac{1}{\min\{\eps^2,1\} }\right)$. Setting $p = \frac{\log 1/\beta'}{\min \alpha^2\{\eps^2,1\}} = \frac{\log m/\beta}{\alpha^2\min \{\eps^2,1\}}$ implies that
    \begin{equation*}
        \prob{\left|\frac{e^\eps +1}{e^\eps - 1}\left(\frac{1}{p}\sum_{j=p(i-1)+1}^{p.i} R_\eps\left(\indicator{{x_j}\in W_i}\right) - \frac{1}{e^\eps+1}\right) - \expect{\indicator{x_j\in W_i}}\right| > \alpha} \leq \beta' = \frac{\beta}{m}.
    \end{equation*}
    A simple union bound argument concludes concludes that
    \begin{equation*}
        \forall U\subset [k], |U|=m,\,\prob{\sup_{i\in U}\left| \cO_{h}^{RR}\left(W,\alpha,\beta,pk \right)_i-\expects{x\sim h}{\indicator{x\in W_i}}\right|\geq \alpha} \leq \beta,
    \end{equation*}
    which proves that $\cO_{h}^{RR}$ is a valid SQO for $h$.
\end{proof}

\section{Pseudocode of Classical Algorithms}\label{app:algo}
\begin{algorithm}
  \SetKwFunction{Scheffe}{SCHEFF\'E}
  \SetKwProg{Fn}{procedure}{:}{end procedure}
  \SetFuncSty{textsc}
  \KwIn{A pair of distributions $f_1$ and $f_2$ on $\cX$, $y\in\bR$}
  \KwOut{$f_1$ or $f_2$}
  \Fn{\Scheffe{$f_1,f_2,y$}}{
    $B_s \gets Sch(f_1,f_2)$ \;\\
    \uIf{ $|f_1[B_s] - y| \leq |f_1[B_s] - y|$}{
     \Return $f_1$\;
    }
    \Else{
     \Return $f_2$\;
    }
    
    }
  \caption{Scheff\'e test}\label{alg:scheffe}
\end{algorithm}
\begin{algorithm}
   \SetFuncSty{textsc}
  \SetKwFunction{round}{Round-Robin}
  \SetKwProg{Fn}{procedure}{:}{end procedure}
  \KwIn{A set $\cF=\{f_1,\ldots,f_k\}$ of $k$ distributions, Oracle $\cO_h$, parameters $\alpha,\beta>0$}
  \KwOut{A distribution $\hat{f}\in \cF$ such that $d_{TV}(h,\hat{f}) \leq 9d_{TV}(h,\cF)+\alpha$ with probability at least $1-\beta$.}

 \Fn{\round{$\cF,\cO_h,\alpha,\beta$}}{
$\forall f\in \cF$, $w[f] \gets 0$ \Comment{To record the number of wins of $f$}\;\\
 \For{all $i,j\in[k]$ where $j>i$}{
     $B_{ij} \gets Sch(f_i,f_j)$\;
     }
     $W = \left(B_{ij}\right)_{i=1,j>i}^{k},$\;$\left(y_{ij}\right)_{i=0, j>i}^{k} \gets \cO_h(W,\alpha,\beta)$\;
 
  \For{all $i,j\in[k]$ where $j>i$}{

 \uIf{$\FuncSty{SCHEFF\'E}(f_i,f_j,y_{ij})$ returns $f_i$}{
 $w[f_i] \gets w[f_i]+1$\; }
    \Else{ $w[f_j] \gets w[f_j]+1$\;}
  
  }
  \Return $\hat{f}$ with maximum number of wins $w[\hat{f}]$\;
 } 
 \caption{Round-Robin}\label{alg:rr}
\end{algorithm}
\begin{algorithm}
  \SetFuncSty{textsc}
  \SetKwFunction{mde}{MDE-Variant}
  \SetKwProg{Fn}{procedure}{:}{end procedure}
  \KwIn{A set $\cF=\{f_1,\ldots,f_k\}$ of $k$ distributions, Oracle $\cO_h$, parameters $\alpha,\beta>0$}
  \KwOut{A distribution $\hat{f}\in\cF$ such that $d_{TV}(h,\hat{f}) \leq 3d_{TV}(h,\cF)+\alpha$ with probability at least $1-\beta$.}

 \Fn{\mde{$\cF,\cO_h,\alpha,\beta$}}{
$\forall f\in\cF, w[f] \gets 0$\;\\
 \For{every $f\in \cF$}{
    \For{every $f'\in\cF\setminus f$}{
    $B_i \gets Sch(f,f')$, $i\gets i+1$\;
 }
 $W = (B_i)_{i=1}^{k-1},\, (y_i)_{i=1}^{k-1}=\cO_h(W,\alpha,\beta)$ \;\\
     $w[f] \gets \sup_{B_i} \left|f[B_i] - y_i\right|$ \;
 }
 \Return $\hat{f} = \arg\min_{f\in\cF} w[f]$\;
 }
\caption{MDE-Variant}\label{alg:mde}
\end{algorithm}

\end{document}